\documentclass[10pt]{article}
\usepackage[utf8]{inputenc}
\usepackage{geometry}
\usepackage{bbm}
\geometry{left=3cm,right=3cm,top=2cm,bottom=2cm}
\usepackage{amsmath,amssymb,amsthm}
\newtheorem{theorem}{Theorem}[section]
\newtheorem{proposition}[theorem]{Proposition}

\newtheorem{corollary}[theorem]{Corollary}
\newtheorem{lemma}[theorem]{Lemma}
\newtheorem{definition}{Definition}
\newtheorem{remark}{Remark}

\newtheorem{example}{Example}
\newtheorem{assumption}{Assumption}[section]
\usepackage{graphicx}
\usepackage{subfig}
\usepackage{float}
\usepackage[toc,page]{appendix}
\usepackage[table,xcdraw]{xcolor}
\newcommand{\R}{\mathbb{R}}
\newcommand{\E}{\mathbb{E}}
\newcommand{\F}{\mathcal{F}}

\newcommand{\PS}{\mathcal{P}}
\newcommand{\prob}{\mathbb{P}}
\newcommand{\N}{\mathcal{N}}
\newcommand{\HS}{\mathcal{H}}

\renewcommand{\a}{\mathbf{a}}
\newcommand{\x}{\mathbf{x}}
\newcommand{\y}{\mathbf{y}}
\newcommand{\z}{\mathbf{z}}
\newcommand{\w}{\mathbf{w}}

\newcommand{\s}{\mathbf{s}}
\newcommand{\vb}{\mathbf{v}}
\newcommand{\ep}{\epsilon}
\newcommand{\lb}{\langle}
\newcommand{\rb}{\rangle}
\newcommand{\KL}{\text{KL}}
\newcommand{\argmin}{\text{argmin}}
\newcommand{\diag}{\text{diag}}
\newcommand{\Lip}{\text{Lip}}
\newcommand{\TV}{\text{TV}}
\newcommand{\sprt}{\text{sprt}}
\newcommand{\law}{\text{law}}
\newcommand{\tr}{\text{Tr}}
\newcommand{\vs}{\vspace{0.5em}}
\usepackage{hyperref}
\allowdisplaybreaks
\usepackage{authblk}

\title{A Mathematical Framework for Learning Probability Distributions}
\author{Hongkang Yang
\thanks{
    {\tt hongkang@princeton.edu}.
    We thank Weinan E for helpful discussions.
}}
\affil{Program in Applied and Computational Mathematics, Princeton University}

\date{}

\begin{document}

\maketitle

\begin{abstract}
The modeling of probability distributions, specifically generative modeling and density estimation, has become an immensely popular subject in recent years by virtue of its outstanding performance on sophisticated data such as images and texts.
Nevertheless, a theoretical understanding of its success is still incomplete.
One mystery is the paradox between memorization and generalization: In theory, the model is trained to be exactly the same as the empirical distribution of the finite samples, whereas in practice, the trained model can generate new samples or estimate the likelihood of unseen samples.
Likewise, the overwhelming diversity of distribution learning models calls for a unified perspective on this subject.
This paper provides a mathematical framework such that all the well-known models can be derived based on simple principles.
To demonstrate its efficacy, we present a survey of our results on the approximation error, training error and generalization error of these models, which can all be established based on this framework.
In particular, the aforementioned paradox is resolved by proving that these models enjoy implicit regularization during training, so that the generalization error at early-stopping avoids the curse of dimensionality.
Furthermore, we provide some new results on landscape analysis and the mode collapse phenomenon.
\end{abstract}

\tableofcontents

\section{Introduction}
\label{sec. introduction}

The popularity of machine learning models in recent years is largely attributable to their remarkable versatility in solving highly diverse tasks with good generalization power.
Underlying this diversity is the ability of the models to learn various mathematical objects such as functions, probability distributions, dynamical systems, actions and policies, and often a sophisticated architecture or training scheme is a composition of these modules.
Besides fitting functions, learning probability distributions is arguably the most widely-adopted task and constitutes a great portion of the field of unsupervised learning.
Its applications range from the classical density estimation \cite{wang2019nonparametric,valsson2014potential} which is important for scientific computing \cite{bonati2019enhanced,wang2022efficient,li2018renormalization}, to generative modeling with superb performance in image synthesis and text composition \cite{brock2018BigGAN,ramesh2021zero,brown2020language,rombach2022high}, and also to pretraining tasks such as masked reconstruction that are crucial for large-scale models \cite{devlin2018bert,brown2020language,chowdhery2022palm}.

Despite the impressive performance of machine learning models in learning probability measures, this subject is less understood than the learning of functions or supervised learning.
Specifically, there are several mysteries:

\textbf{1. Unified framework.}
There are numerous types of models for representing and estimating distributions, making it difficult to gain a unified perspective for model design and comparison.
One traditional categorization includes five model classes: the generative adversarial networks (GAN) \cite{goodfellow2014generative,arjovsky2017wasserstein}, variational autoencoders (VAE) \cite{kingma2013auto}, normalizing flows (NF) \cite{tabak2010density,rezende2015inference}, autoregressive models \cite{radford2018improving,oord2018parallel}, and diffusion models \cite{sohl2015deep,song2020score}.
Within each class, there are further variations that complicate the picture, such as the choice of integral probability metrics for GANs and the choice of architectures for normalizing flows that enable likelihood computations.
Ideally, instead of a phenomenological categorization, one would prefer a simple theoretical framework that can derive all these models in a straightforward manner based on a few principles.

\textbf{2. Memorization and curse of dimensionality.}
Perhaps the greatest difference between learning functions and learning probability distributions is that, conceptually, the solution of the latter problem must be trivial.
On one hand, since the target distribution $P_*$ can be arbitrarily complicated, any useful model must satisfy the property of universal convergence, namely the modeled distribution can be trained to converge to any given distribution (e.g. Section \ref{sec. memorization} will show that this property holds for several models).
On the other hand, the target $P_*$ is unknown in practice and only a finite sample set $\{\x_i\}_{i=1}^n$ is given (with the empirical distribution denoted by $P_*^{(n)}$).
As a result, the modeled distribution $P_t$ can only be trained with $P_*^{(n)}$ and inevitably exhibits memorization, i.e.
\begin{equation*}
\lim_{t\to\infty} P_t = P_*^{(n)}
\end{equation*}
Hence, training results in a trivial solution and does not provide us with anything beyond the samples we already have.
This is different from regression where the global minimizer (interpolating solution) can still generalize well \cite{e2019min}.

One related problem is the curse of dimensionality, which becomes more severe when estimating distributions instead of functions.
In general, the distance between the hidden target and the empirical distribution scales badly with dimension $d$: For any absolutely continuous $P_*$ and any $\delta>0$ \cite{weed2019sharp}
\begin{equation*}
W_2(P_*,P_*^{(n)}) ~\gtrsim~ n^{-\frac{1}{d-\delta}}
\end{equation*}
where $W_2$ is the Wasserstein metric.
This slow convergence sets a limit on the performance of all possible models:
for instance, the following worst-case lower bound \cite{singh2018minimax}
\begin{equation*}
\inf_{A} \sup_{P_*} ~\E_{\{X_i\}}\big[W_2^2\big(P_*,A(\{X_i\}_{i=1}^n)\big)\big]^{1/2} ~\gtrsim ~n^{-\frac{1}{d}}
\end{equation*}
where $P_*$ is any distribution supported on $[0,1]^d$ and $A$ is any estimator, i.e. a mapping from every $n$ sample set $\{X_i\}_{i=1}^n \sim P_*$ to an estimated distribution $A(\{X_i\})$.
Hence, to achieve a generalization error of $\ep$, an astronomical sample size $\Omega(\ep^d)$ could be necessary in high dimensions.

These theoretical difficulties form a seeming paradox with the empirical success of distribution learning models, for instance, models that can generate novel and highly-realistic images \cite{brock2018BigGAN,karras2019style,ramesh2021zero} and texts \cite{radford2018improving,brown2020language}.

\textbf{3. Training and mode collapse.}
The training of distribution learning models is known to be more delicate than training supervised learning models, and exhibits several novel forms of failures.
For instance, for the GAN model, one common issue is mode collapse \cite{salimans2016improved,kodali2017convergence,mao2019mode,pei2021alleviating}, when a positive amount of mass in $P_t$ becomes concentrated at a single point, e.g. an image generator could consistently output the same image.
Another issue is mode dropping \cite{yazici2020empirical}, when $P_t$ fails to cover some of the modes of $P_*$.
In addition, training may suffer from oscillation and divergence \cite{radford2015unsupervised,chavdarova2018sgan}.
These problems are the main obstacle to global convergence, but the underlying mechanism remains largely obscure.

\vs
The goal of this paper is to provide some insights into these mysteries from a mathematical point of view.
Specifically,
\begin{enumerate}
\item We establish a unified theoretical framework from which all the major distribution learning models can be derived.
The diversity of these models is largely determined by two simple factors, the distribution representation and loss type.
This formulation greatly facilitates our analysis of the approximation error, training error and generalization error of these models.

\item We survey our previous results on generalization error, and resolve the paradox between memorization and generalization.
As illustrated in Figure \ref{fig: path of generalization}, despite that the model eventually converges to the global minimizer, which is the memorization solution, the training trajectory comes very close to the hidden target distribution.
With early-stopping or regularized loss, the generalization error scales as
\begin{equation*}
W_2(P_*,P_t) \text{ or } \KL(P_*\|P_t) \lesssim n^{-\alpha}
\end{equation*}
for some constant $\alpha>0$ instead of dimension-dependent terms such as $\alpha/d$.
Thereby, the model escapes from the curse of dimensionality.

\item We discuss our previous results on the rates of global convergence for some of the models.
For the other models, we establish new results on landscape and critical points, and identify two mechanisms that can lead to mode collapse.
\end{enumerate}

\begin{figure}[H]
    \centering
    \includegraphics[scale=0.35]{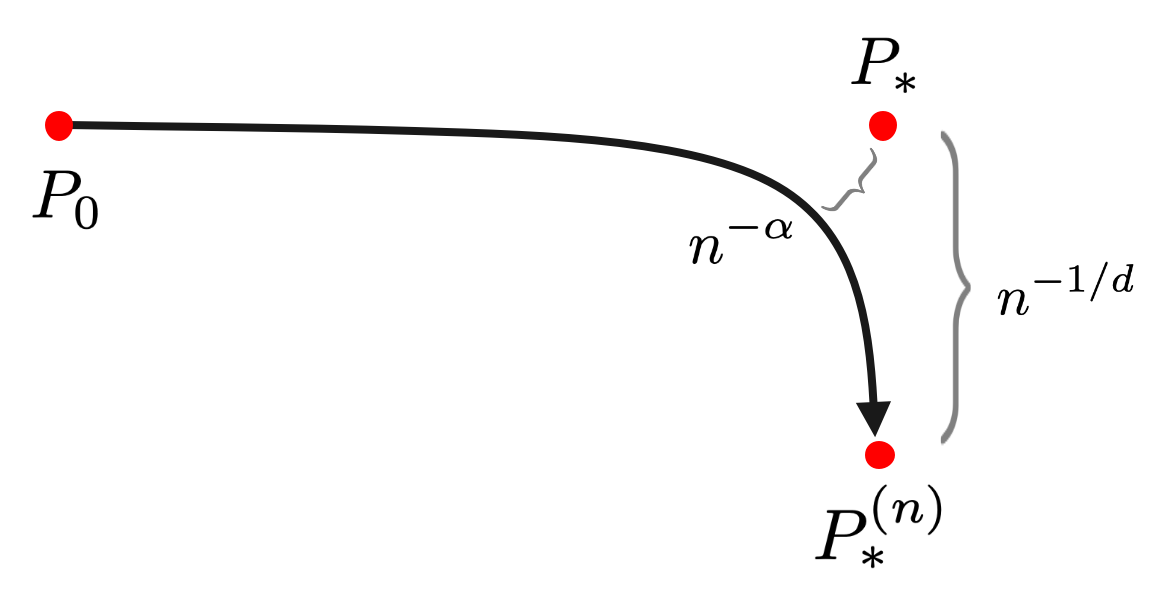}
    \caption{Generalization error during training.}
    \label{fig: path of generalization}
\end{figure}

This paper is structured as follows.
Section \ref{sec. model overview} presents a sketch of the popular distribution learning models.
Section \ref{sec. framework} introduces our theoretical framework and the derivations of the models.
Section \ref{sec. approximation} establishes the universal approximation theorems.
Section \ref{sec. memorization} analyzes the memorization phenomenon.
Section \ref{sec. generalization} discusses the generalization error of several representative distribution learning models.
Section \ref{sec. training} analyzes the training process, loss landscape and mode collapse.
All proofs are contained in Section \ref{sec. proof}.
Section \ref{sec. discussion} concludes this paper with discussion on the remaining mysteries.

\vs
Here is a list of related works.

\textbf{Mathematical framework}: A framework for supervised learning has been proposed by \cite{e2019machine,e2020NNML} with focus on the function representations, namely function spaces that can be discretized into neural networks.
This framework is helpful for the analysis of supervised learning models, in particular, the estimation of generalization errors \cite{e2018priori,e2019residual,e2019min} that avoid the curse of dimensionality, and the determination of global convergence \cite{chizat2018global,rotskoff2019trainability}.
Similarly, our framework for distribution learning emphasizes the function representation, as well as the new factor of distribution representation, and bound the generalization error through analogous arguments.
Meanwhile, there are frameworks that characterizes distribution learning from other perspectives, for instance, statistical inference \cite{hu2017unifying,su2018variational}, graphical models and rewards \cite{zhang2022unifying,bengio2021gflownet}, energy functions \cite{zhao2016energy} and biological neurons \cite{ororbia2022neural}.

\textbf{Generalization ability}:
The section on generalization reviews our previous results on the generalization error estimates for potential-based model \cite{yang2022potential}, GAN \cite{yang2022GAN} and normalizing flow with stochastic interpolants \cite{yang2022flow}.
The mechanism is that function representations defined by integral transforms or expectations \cite{e2019barron,e2019machine} enjoy small Rademacher complexity and thus escape from the curse of dimensionality.
Earlier works \cite{e2018priori,e2019residual,e2019min,e2020kolmogorov} used this mechanism to bound the generalization error of supervised learning models.
Our analysis combines this mechanism with the training process to show that early-stopping solutions generalize well, and is related to concepts from supervised learning literature such as the frequency principle \cite{xu2019frequency,xu2022overview} and slow deterioration \cite{ma2020slow}.

\textbf{Training and convergence}:
The additional factor of distribution representation further complicates the loss landscape, and makes training more difficult to analyze, especially for the class of ``free generators" that will be discussed later.
The model that attracted the most attention was GAN, and convergence has only been established in simplified settings \cite{mescheder2018training,wu2019onelayer,feizi2020LQG,balaji2021understanding,yang2022GAN} or for local saddle points \cite{nagarajan2017gradient,heusel2017gans,lin2020gradient}.
In practice, GAN training is known to suffer from failures such as mode collapse and divergence \cite{arora2018GAN,che2016mode,mescheder2018training}.
Despite that these issues can be fixed using regularizations and tricks \cite{gulrajani2017improved,kodali2017convergence,mao2019mode,pei2021alleviating,hoang2018mgan}, the mechanism underlying this training instability is not well understood.

\section{Model Overview}
\label{sec. model overview}

This section offers a quick sketch of the prominent models for learning probability distributions, while their derivations will be presented in Section \ref{sec. framework}.
These models are commonly grouped into five categories: the generative adversarial networks (GAN), autoregressive models, variational autoencoders (VAE), normalizing flows (NF), and diffusion models.
We assume access to samples drawn from the target distribution $P_*$, and the task is to train a model to be able to generate more samples from the distribution (generative modeling) or compute its density function (density estimation).

\vspace{0.5em}
\textbf{1. GAN.}
The generative adversarial networks \cite{goodfellow2014generative} model a distribution by transport
\begin{equation}
\label{eq. overview generator}
P = \law(X), \quad X=G(Z), \quad Z\sim\prob
\end{equation}
The map $G:\R^d\to\R^d$ is known as the generator and $\prob$ is a base distribution that is easy to sample (e.g. unit Gaussian).
To solve for a generator such that $P=P_*$, the earliest GAN model considers the following optimization problem \cite{goodfellow2014generative}
\begin{equation*}
\min_G \max_D \int \log \Big( \frac{e^{D(\x)}}{1+e^{D(\x)}} \Big) dP_*(\x) + \int \log \Big( \frac{1}{1+e^{D(G(\x))}} \Big) d\prob(\x)
\end{equation*}
where $D:\R^d\to\R$ is known as the discriminator and this type of min-max losses are known as the adversarial loss.
A well-known variant is the WGAN \cite{arjovsky2017wasserstein} defined by
\begin{equation*}
\min_G \max_{\|\theta\|_{\infty} \leq 1} \int D_{\theta}(\x) dP_*(\x)- \int D_{\theta}\big(G(\x)\big) d\prob(\x)
\end{equation*}
where the discriminator $D_{\theta}$ is a neural network with parameter $\theta$, which is bounded in $l^{\infty}$ norm.
For the other variants, a survey on the GAN models is given by \cite{gui2021review}.

\vspace{0.5em}
\textbf{2. VAE.}
The variational autoencoder proposed by \cite{kingma2013auto} uses a randomized generator and its approximate inverse, known as the decoder and encoder, and we denote them by the conditional distributions $P(\cdot|\z)$ and $Q(\cdot|\x)$.
Similar to (\ref{eq. overview generator}), the distribution is modeled by $P=\int P(\cdot|\z)d\prob(\z)$ and can be sampled by $X\sim P(\cdot|Z), Z\sim \prob$.
VAE considers the following optimization problem
\begin{align*}
\min_{P(\cdot|\z)} \min_{Q(\cdot|\x)} \iint -\log P(\x|\z) dQ(\z|\x) + \KL\big(Q(\cdot|\x) \big\| \prob \big) dP_*(\x)
\end{align*}
where $\KL$ is the Kullback–Leibler divergence.
To simplify computation, $\prob$ is usually set to be the unit Gaussian $\N$, and the decoder $P(\cdot|\z)$ and encoder $Q(\cdot|\x)$ are parametrized as diagonal Gaussians \cite{kingma2013auto}.
For instance, consider
\begin{equation*}
P(\cdot|\z) = \N\big(G(\z), s^2 I\big), \quad Q(\cdot|\x) = \N\big(F(\x), v^2 I\big)
\end{equation*}
where $G,F$ are parametrized functions and $s,v>0$ are scalars.
Then, we have
\begin{align*}
\int -\log P(\x|\z) dQ(\z|\x) &= \int -\log P\big(\x\big|F(\x)+v\omega\big) d\N(\omega)\\
&= \int \frac{\big\|\x-G\big(F(\x)+v\omega\big)\big\|^2}{2s^2} d\N(\omega) + \frac{d}{2} \log(2\pi s^2)\\
\KL\big(Q(\cdot|\x)\big\|\prob\big) &= \frac{\|F(\x)\|^2}{2} + \frac{d}{2}\big(v^2 -\log v^2 -1\big)
\end{align*}
Up to constant, the VAE loss becomes
\begin{align*}
\min_{G,s} \min_{F,v} &\iint \frac{\big\|\x-G\big(F(\x)+v\omega\big)\big\|^2}{2s^2} d\N(\omega) + \frac{\|F(\x)\|^2}{2} dP_*(\x) + d(\log s -\log v) + \frac{d}{2}v^2
\end{align*}

\vs
\textbf{3. Autoregressive.}
Consider sequential data $\mathbf{X}=[\x_1, \dots \x_l]$ such as text and audio.
The autoregressive models represent a distribution $P$ through factorization $P(\mathbf{X})=\prod_{i=1}^l P(\x_i|\x_{<i})$, and can be sampled by sampling iteratively from $X_i \sim P(X_i|X_{<i})$.
These models minimize the loss
\begin{equation*}
-\int \sum_{i=1}^l \log P(\x_i|\x_{<i}) dP_*(\mathbf{X})
\end{equation*}
There are several approaches to the parametrization of the conditional distributions $P(\x_i|\x_{<i})$, depending on how to process the variable-length input $\x_{<i}$.
The common options are the Transformer networks \cite{vaswani2017attention,radford2018improving}, recurrent networks \cite{oord2016pixel}, autoregressive networks \cite{larochelle2011neural, oord2016pixel} and causal convolution \cite{oord2016wavenet}.
For instance, consider Gaussian distributions parametrized by a recurrent network
\begin{equation*}
P(\cdot|\x_{<i}) = \N\big(m(\mathbf{h}_i), s^2(\mathbf{h}_i)I\big), \quad \mathbf{h}_i = f(\mathbf{h}_{i-1}, \x_{i-1})
\end{equation*}
where $m,s,f$ are parametrized functions, and $\mathbf{h}$ is the hidden feature.
Since
\begin{equation*}
-\log P(\x_i|\x_{<i}) = \frac{\|\x_i-m(\mathbf{h}_i)\|^2}{2s(\mathbf{h}_i)^2} + \frac{d}{2} \log \big( 2\pi s(\mathbf{h}_i)^2 \big)
\end{equation*}
the loss is equal, up to constant, to
\begin{equation*}
\min_{m,s,f,\mathbf{h}_1} \int \sum_{i=1}^l \frac{\|\x_i-m(\mathbf{h}_i)\|^2}{2s(\mathbf{h}_i)^2} + d\log s(\mathbf{h}_i) ~dP_*(\mathbf{X})
\end{equation*}

\vs
\textbf{4. NF.}
The normalizing flows proposed by \cite{tabak2010density, tabak2013family} use a generator $G$ (\ref{eq. overview generator}) similar to GAN and VAE.
The optimization problem is given by
\begin{equation*}
\min_G - \int \log\det \nabla G^{-1}(\x) + \log \prob(G^{-1}(\x)) dP_*(\x)
\end{equation*}
where $\prob=\N$ is set to be the unit Gaussian and $\det \nabla G^{-1}$ is the Jacobian determinant of $G^{-1}$.
To enable the calculation of these terms, the earliest approach \cite{tabak2010density,tabak2013family,rezende2015inference} considers only the inverse $F=G^{-1}$ and models it by a concatenation of simple maps $F = F_1 \circ \dots \circ F_T$ such that each $\det \nabla F_{\tau}$ is easy to compute.
The modeled distribution (\ref{eq. overview generator}) cannot be sampled, but can serve as a density estimator, with the density given by
\begin{equation*}
P(\x) = \N(\x_0) \prod_{\tau=1}^T \det\nabla F_{\tau}(\x_{\tau-1}), \quad \x_{\tau-1} := F_{\tau} \circ \dots \circ F_T(\x)
\end{equation*}
Up to constant, the loss becomes
\begin{equation*}
\min_{F_1, \dots F_T} \int \frac{\|\x_0\|^2}{2} + \sum_{\tau=1}^T -\log\det\nabla F_{\tau}(\x_{\tau-1}) ~dP_*(\x)
\end{equation*}
A later approach \cite{dinh2016density,kingma2016improved, papamakarios2017masked, huang2018neural} models the generator by $G = G_T \circ \dots \circ G_1$
such that each $G_{\tau}$ is designed to be easily invertible with $\nabla G_{\tau}$ being a triangular matrix.
Then, the loss becomes
\begin{equation*}
\min_{G_1, \dots G_T} \int \frac{\|\x_0\|^2}{2} + \sum_{\tau=1}^T \tr\big[\log\nabla G_{\tau}(\x_{\tau-1})\big] ~dP_*(\x), \quad \x_{\tau-1} := G_{\tau}^{-1} \circ \dots \circ G_T^{-1}(\x)
\end{equation*}
Another approach \cite{e2017flow,chen2018neural,grathwohl2018ffjord} defines $G$ as a continuous-time flow, i.e. solution to an ordinary differential equation (ODE)
\begin{equation}
\label{eq. ODE generator}
G=G_0^1, \quad G_s^{\tau}(\x_s) = \x_{\tau}, \quad \frac{d}{d\tau} \x_{\tau} = V(\x_{\tau}, \tau)
\end{equation}
for some time-dependent velocity field $V$.
Then, the loss becomes
\begin{equation*}
\min_V \int \frac{\|\x_0\|^2}{2} + \int_0^1 \tr \big[\nabla_{\x} V(\x_{\tau}, \tau)\big] d\tau dP_*(\x), \quad \x_{\tau} := (G_{\tau}^1)^{-1}(\x)
\end{equation*}
A survey on normalizing flows is given by \cite{kobyzev2020normalizing}.

\vs
\textbf{5. Monge-Amp\`{e}re flow.}
A model that is closely related to the normalizing flows is the Monge-Amp\`{e}re flow \cite{zhang2018monge}.
It is parametrized by a time-dependent potential function $\phi_{\tau}, \tau\in[0,1]$, and defines a generator by the ODE
\begin{equation*}
G=G_0^1, \quad G_s^{\tau}(\x_s) = \x_{\tau}, \quad \frac{d}{d\tau} \x_{\tau} = \nabla \phi_{\tau}(\x_{\tau})
\end{equation*}
such that the flow is driven by a gradient field.
The model minimizes the following loss
\begin{equation*}
\min_{\phi} \int \frac{\|\x_0\|^2}{2} + \int_0^1 \Delta \phi_{\tau}(\x_{\tau}) d\tau dP_*(\x), \quad \x_{\tau} := (G_{\tau}^1)^{-1}(\x)
\end{equation*}
where $\Delta \phi = \sum_{i=1}^d \partial_i^2 \phi$ is the Laplacian.

\vs
\textbf{6. Diffusion.}
The diffusion models \cite{sohl2015deep,song2019generative,ho2020denoising} define the generator by a reverse-time SDE (stochastic differential equation)
\begin{equation*}
G(X_T)=X_0, \quad X_T\sim \prob, \quad d X_{\tau} = -\frac{\beta_{\tau}}{2} \big( X_{\tau} + 2\mathbf{s}(X_{\tau}, \tau)\big) d\tau + \sqrt{\beta_{\tau}} d\overline{W}_{\tau}
\end{equation*}
where $\mathbf{s}:\R^d\times[0,T]\to\R^d$ is a time-dependent velocity field known as the score function \cite{hyvarinen2005estimation}, $\beta_{\tau}>0$ is some noise scale, and $\overline{W}_{\tau}$ is a reverse-time Wiener process.
The modeled distribution is sampled by solving this SDE backwards in time from $T$ to $0$.
The score function $\mathbf{s}$ is learned from the optimization problem
\begin{equation*}
\min_{\mathbf{s}} \int_0^T \frac{\lambda_{\tau}}{2} \iint \Big\|\mathbf{s}\big(e^{-\frac{1}{2}\int_0^{\tau}\beta_s ds}\x_0 + \sqrt{1-e^{-\int_0^{\tau}\beta_s ds}} \omega,\tau\big) + \frac{\omega}{\sqrt{1-e^{-\int_0^{\tau}\beta_s ds}}} \Big\|^2 d\N(\omega) dP_*(\x_0) d\tau
\end{equation*}
where $\N$ is the unit Gaussian and $\lambda_{\tau}>0$ is any weight.
Besides the reverse-time SDE, another way to sample from the model is to solve the following reverse-time ODE, which yields the same distribution \cite{song2020score}
\begin{equation*}
G(X_T)=X_0, \quad X_T\sim \prob, \quad \frac{d}{d\tau} X_{\tau} = -\frac{\beta_{\tau}}{2} \big( X_{\tau} + \mathbf{s}(X_{\tau}, \tau)\big) d\tau
\end{equation*}
A survey on diffusion models is given by \cite{yang2022diffusion}.

\vs
\textbf{7. NF interpolant.}
Finally, we introduce a model called normalizing flow with stochastic interpolants \cite{albergo2022building,liu2022flow}.
It is analogous to the diffusion models, and yet is conceptually simpler.
This model learns a velocity field $V:\R^d\times[0,1]\to\R^d$ from the optimization problem
\begin{align*}
\min_V &\int_0^1\iint \big\|V\big((1-\tau)\x_0+\tau\x_1, \tau\big) - (\x_1-\x_0)\big\|^2 d\prob(\x_0)dP_*(\x_1)d\tau
\end{align*}
Then, the generator is defined through the ODE (\ref{eq. ODE generator}) and the modeled distribution is sampled by (\ref{eq. overview generator}).

\section{Framework}
\label{sec. framework}

Previously, a mathematical framework for supervised learning was proposed by \cite{e2019machine}, which was effective for estimating the approximation and generalization errors of supervised learning models \cite{e2019barron,e2018priori,e2019residual}.
In particular, it helped to understand how neural network-based models manage to avoid the curse of dimensionality.
The framework characterizes the models by four factors: the function representation (abstract function spaces built from integral transformations), the loss, the training scheme, and the discretization (e.g. how the continuous representations are discretized into neural networks with finite neurons).

This section presents a similar framework that unifies models for learning probability distributions.
We focus on two factors: the distribution representation, which is a new factor and determines how distributions are parametrized by abstract functions, and the loss type, which specifies which metric or topology is imposed upon the distributions.
A sketch of the categorization is given in Table \ref{table. categorization}.
We show that the diverse families of distribution learning models can be simultaneously derived from this framework.
The theoretical results in the latter sections, in particular the generalization error estimates, are also built upon this mathematical foundation.

\begin{table}[H]
\centering
\begin{tabular}{ |c||c|c|c| } 
\hline
 & Density & Expectation & Regression \\
\hline\hline
Potential & \cellcolor{gray!20}bias potential model & feasible & unknown \\ 
Free generator & \cellcolor{gray!20}NF, VAE, autoregressive & \cellcolor{gray!20}GAN & unknown \\ 
Fixed generator & upper bound & feasible & \cellcolor{gray!20}diffusion, NF interpolant, OT \\ 
\hline
\end{tabular}
\caption{Categorization of distribution learning models based on distribution representation (row) and loss type (column) with the representative models. See Section \ref{sec. combination} for a detailed description. Our theoretical results will focus on the marked categories.}
\label{table. categorization}
\end{table}

\subsection{Background}
\label{sec. background}

The basic task is to estimate a probability distribution given i.i.d. samples $\{\x_i\}_{i=1}^n$.
We denote this unknown target distribution by $P_*$ and the empirical distribution by
\begin{equation*}
P_*^{(n)} = \frac{1}{n}\sum_{i=1}^n \delta_{\x_i}
\end{equation*}
The underlying space is assumed to be Euclidean $\R^d$.
To estimate a distribution may have several meanings depending on its usage: e.g. to obtain a random variable $X\sim P_*$, estimate the density function $P_*(\x)$, or compute expectations $\int f dP_*$. The first task is known as generative modeling and the second as density estimation; these two problems are the focus of this paper, while the third task can be solved by them.

\begin{figure}[h]
\centering
\subfloat{\includegraphics[scale=0.32]{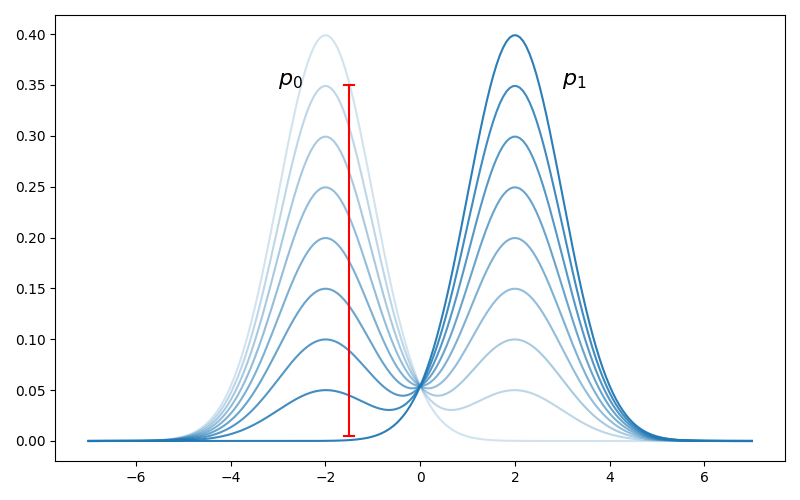}}~
\subfloat{\includegraphics[scale=0.32]{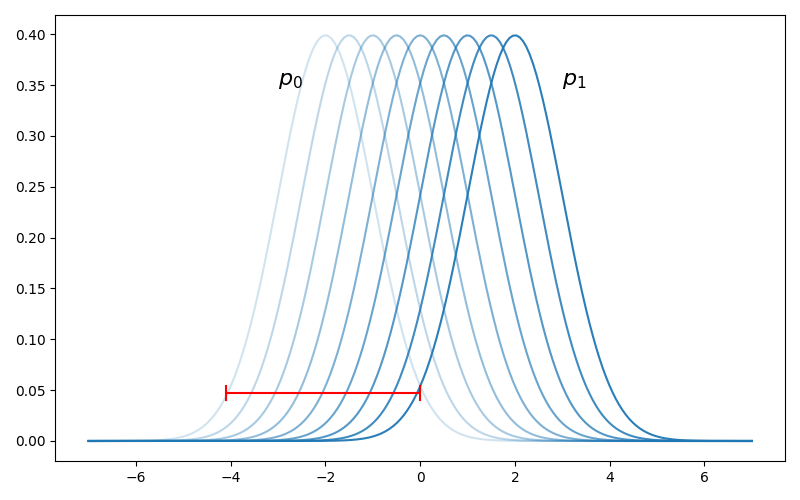}}
\caption{The vertical and horizontal perspectives on probability distributions. Left: the distribution $P_0=\N(-2,1)$ is transformed to $P_1=\N(2,1)$ by reweighing and the distance $d(P_0,P_1)$ is measured by the difference between densities. Right: $P_0$ is transformed to $P_1$ by transport, and the distance is measure by the displacement of mass.}
\label{fig. V and H}
\end{figure}

There are two general approaches to modeling a distribution, which can be figuratively termed as ``vertical" and ``horizontal", and an illustration is given by Figure \ref{fig. V and H}.
Given a base distribution $\prob$, the vertical approach reweighs the density of $\prob$ to approximate the density of the target $P_*$, while the horizontal approach transports the mass of $\prob$ towards the location of $P_*$.
When a modeled distribution $P$ is obtained and a distance $d(P,P_*)$ is needed to compute either the training loss or test error, the vertical approach measures the difference between the densities of $P$ and $P_*$ over each location, and is exemplified by the KL-divergence
\begin{equation}
\label{eq. KL}
\KL(P_*\|P) := \int \log\frac{P_*(\x)}{P(\x)} dP_*(\x)
\end{equation}
while the horizontal approach measures the distance between the ``particles" of $P$ and $P_*$, and is exemplified by the 2-Wasserstein metric \cite{kantorovich1960mathematical,villani2003topics}
\begin{equation}
\label{eq. W2 coupling}
W_2(P,P_*) := \inf_{\pi} \Big(\int \|\x_0-\x_1\|^2 d\pi(\x_0,\x_1)\Big)^{1/2}
\end{equation}
where $\pi$ is any coupling between $P,P_*$ (i.e. a joint distribution in $\R^d\times\R^d$ whose marginal distributions are $P,P_*$).
We will see that the vertical and horizontal approaches largely determine the distribution representation and loss type.

Finally, consider the operator $law$
\begin{equation*}
P = \law(X)
\end{equation*}
which maps a random variable $X$ to its distribution $P$.
Similarly, given a random path $\{X_{\tau}, \tau\in[0,T]\}$, we obtain a path $P_{\tau}=\law(X_{\tau})$ in the distribution space.
In general, there can be infinitely many random variables that are mapped to the same distribution, e.g. let $P$ and $\prob$ be uniform over $[0,1]$, for any $k\in\mathbb{N}$, we can define the random variable $X_k=kZ\text{mod}1$ with $Z\sim\prob$, which all satisfy $P=\law(X_k)$.
One drawback of this non-uniqueness is that, for many generative models, there can be plenty of global minima, which make the loss landscape non-convex and may lead to training failures, as we will show that this is inherent in mode collapse.
One benefit is that, if the task is to learn some time-dependent distribution $P_{\tau}$, one can select from the infinitely many possible random paths $X_{\tau}$ the one that is the easiest to compute, and therefore define a convenient loss.

\vspace{0.5em}
For the notations, for any measureable subset $\Omega$ of $\R^d$, denote by $\PS(\Omega)$ the space of probability measures over $\Omega$, $\PS_2(\Omega)$ the subspace of measures with finite second moments, and $\PS_{ac}(\Omega)$ the subspace of absolutely continuous measures (i.e. have density functions).
Denote by $\sprt P$ the support of a distribution.
Given any two measures $m_0, m_1$, we denote by $m_0\times m_1$ the product measure over the product space.
We denote by $t$ the training time and by $\tau$ the time that parametrizes flows.

\subsection{Distribution representation}
\label{sec. distribution representation}

Since machine learning models at the basic level are good at learning functions, the common approach to learning distributions is to parametrize distributions by functions.
There are three common approaches:

\textbf{1. Potential function.}
Given any base distribution $\prob$, define the modeled distribution $P$ by
\begin{equation}
\label{eq. potential representation}
P = \frac{1}{Z} e^{-V} \prob, \quad Z=\int e^{-V} d\prob
\end{equation}
where $V$ is a potential function and $Z$ is for normalization.
This parametrization is sometimes known as the Boltzmann distribution or exponential family.

\vs
\textbf{2. Free generator.}
Given any measureable function $G:\R^d\to\R^d$, define the modeled distribution $P$ by
\begin{equation*}
P=\law(X),\quad X=G(Z), \quad Z\sim\prob
\end{equation*}
$P$ is known as the transported measure or pushforward measure and denoted by $P=G\#\prob$, while $G$ is called the generator or transport map.
Equivalently, $P$ is defined as the measure that satisfies
\begin{equation}
\label{eq. pushforward set}
P(A) = \prob(G^{-1}(A))
\end{equation}
for all measurable sets $A$.

The name ``free generator" is used to emphasize that the task only specifies the target distribution $P_*$ to estimate, and we are free to choose any generator from the possibly infinite set of solutions $\{G~|~P_*=G\#\prob\}$.

There are several common extensions to the generator.
First, $G$ can be modeled as a random function, such that $G(\z)\sim P(\cdot|\z)$ for some conditional distribution $P(\cdot|\z)$.
Second, $G$ can be induced by a flow.
Let $V:\R^d\times[0,T]\to\R^d$ be a Lipschitz velocity field, and define $G$ as the unique solution to the ODE
\begin{equation}
\label{eq. flow generator}
G=G_T, \quad G_{\tau}(\x)=\x_{\tau}, \quad \x_0=\x,\quad \frac{d}{d\tau} \x_{\tau} = V(\x_{\tau},\tau)
\end{equation}
where $G_{\tau}$ is the flow map.
Furthermore, if we define the interpolant distributions $P_{\tau}=G_{\tau}\#\prob$, then they form a (weak) solution to the continuity equation
\begin{equation}
\label{eq. continuity equation}
\partial_{\tau} P_{\tau} + \nabla \cdot (V_{\tau} P_{\tau}) = 0
\end{equation}
Specifically, for any smooth test function $\phi$
\begin{align*}
\int \phi(\x) d(\partial_{\tau} P_{\tau})(\x) &= \frac{d}{d\tau} \int \phi(\x) dP_{\tau}(\x) = \frac{d}{d\tau} \int \phi\big(G_{\tau}(\x)\big) d\prob(\x)\\
&= \int V\big(G_{\tau}(\x),\tau\big) \cdot \nabla \phi\big(G_{\tau}(\x)\big) d\prob(\x)\\
&= \int V(\x,\tau) \cdot \nabla \phi(\x) dP_{\tau}(\x) = -\int \phi(\x) d\big(\nabla\cdot (V_{\tau} P_{\tau})\big)(\x)
\end{align*}

Third, one can restrict to a subset of the possibly infinite set of solutions $\{G~|~P_*=G\#\prob\}$, specifically generators that are gradients of some potential functions $\{G=\nabla\psi\}$.
By Brennier's theorem \cite{brenier1991polar,villani2003topics}, such potential function exists in very general conditions.
Similarly, one can restrict the velocity fields in (\ref{eq. flow generator}) to time-dependent gradient fields
\begin{equation}
\label{eq. gradient field velocity}
V_{\tau} = \nabla\phi_{\tau}
\end{equation}
By Theorem 5.51 of \cite{villani2003topics}, in general there exists a potential function $\phi_{\tau}$ such that the flow $G$ induced by $\nabla\phi_{\tau}$ satisfies $P_*=G\#\prob$.
Specifically, the interpolant distribution $P_{\tau}=G_{\tau}\#\prob$ is the Wasserstein geodesic that goes from $\prob$ to $P_*$.
Finally, it is interesting to note that there is also a heuristic argument from \cite{albergo2022building} that justifies the restriction to gradient fields:
Given any velocity field $V_{\tau}$ with the interpolant distribution $P_{\tau}$ generated by (\ref{eq. flow generator}), consider the equation
\begin{equation*}
\nabla \cdot (P_{\tau} \nabla\phi_{\tau}) = \nabla\cdot (P_{\tau}V_{\tau})
\end{equation*}
By the theory of elliptic PDEs, the solution $\phi_{\tau}$ exists.
It follows from (\ref{eq. continuity equation}) that we can always replace a velocity field by a gradient field that induces the same interpolant distribution $P_{\tau}$.

\vs
\textbf{3. Fixed generator.}
Contrary to the free generator, another approach is to choose a specific coupling between the base and target distributions $\pi\in\Pi(\prob,P_*)$, where
\begin{equation*}
\Pi(\prob,P_*) = \Big\{\pi\in\PS(\R^d\times\R^d) ~\Big|~ \int \pi d\x_0 = \prob, ~\int \pi d\x_1=P_*\Big\}
\end{equation*}
and the generator $G$ is represented as the conditional distribution $\pi(\cdot|\x_0)$.

One can further extend $\pi$ into a random path $\{X_{\tau},\tau\in[0,T]\}$ such that $\pi=\law(X_0,X_T)$.
Then, analogous to the construction (\ref{eq. flow generator}), $G$ can be represented as the ODE or SDE that drives the trajectories $X_{\tau}$.
Thanks to the non-uniqueness of $law$, one can further consider the interpolant distributions $P_{\tau}=\law(X_{\tau})$ and solve for the velocity field $V$ in the continuity equation (\ref{eq. continuity equation}).
Then, $G$ can be represented as the solution to the ODE (\ref{eq. flow generator}) with velocity $V$.

Currently, models of this category belong to either of the two extremes:

\textbf{Fully deterministic}: For some measureable function $G$,
\begin{equation}
\label{eq. deterministic coupling}
\pi(\x_0,\x_1)=\delta_{G(\x_0)}(\x_1) \prob(\x_0)
\end{equation}
The generator $G$ is usually set to be the optimal transport map from $\prob$ to $P_*$.
The idea is simple in one dimension, such that we sort the ``particles" of $\prob$ and $P_*$ and match according to this ordering.
This monotonicity in $\R$ can be generalized to the cyclic monotonicity in higher dimensions \cite{villani2003topics}.
Couplings $\pi\in\Pi(\prob,P_*)$ that are cyclic monotonic are exactly the optimal transport plans with respect to the squared Euclidean metric \cite{villani2003topics}, namely minimizers of (\ref{eq. W2 coupling}).
Then, Brennier's theorem \cite{brenier1991polar,villani2003topics} implies that, under general conditions, the problem (\ref{eq. W2 coupling}) has unique solution, which has the form (\ref{eq. deterministic coupling}), and furthermore the generator is a gradient field $G=\nabla\psi$ of a convex function $\psi$.

\textbf{Fully random}: The coupling is simply the product measure
\begin{equation}
\label{eq. product coupling}
\pi=\prob\times P_*
\end{equation}
At first sight, this choice is trivial and intractable, but the trick is to choose an appropriate random path $X_{\tau}$ such that either the dynamics of $X_{\tau}$ or the continuity equation (\ref{eq. continuity equation}) is easy to solve.

One of the simplest constructions, proposed by \cite{liu2022flow,albergo2022building}, is to use the linear interpolation
\begin{equation}
\label{eq. linear interpolation}
X_{\tau}=(1-\tau)X_0+\tau X_1, \quad (X_0,X_1)\sim\pi, ~\tau\in[0,1]
\end{equation}
Then, to solve for the target velocity field in (\ref{eq. continuity equation}), define a joint distribution $M_*$ over $\R^d\times[0,1]$
\begin{align}
\label{eq. joint interpolant distribution}
\begin{split}
\int \phi(\x,\tau) dM_*(\x,\tau) &= \int_0^1 \int \phi(\x,\tau) dP_{\tau}d\tau\\
&= \int_0^1 \iint \phi\big((1-\tau)\x_0+\tau\x_1, \tau\big) d\prob(\x_0)dP_*(\x_1)d\tau
\end{split}
\end{align}
for any test function $\phi$.
Similarly, define the current density $J_*$, a vector-valued measure, by
\begin{equation}
\label{eq. current density}
\int \mathbf{f}(\x,\tau) \cdot dJ_*(\x,\tau) = \int_0^1 \iint (\x_1-\x_0) \cdot \mathbf{f}\big((1-\tau)\x_0+\tau\x_1, \tau\big) d\prob(\x_0)dP_*(\x_1)d\tau
\end{equation}
for any test function $\mathbf{f}$.
Then, we can define a velocity field $V_*$ by the Radon-Nikodym derivative
\begin{equation}
\label{eq. Radon-Nikodym velocity}
V_* = \frac{d J_*}{d M_*}
\end{equation}
Each $V_*(\x,\tau)$ is the weighted average of the velocities of the random lines (\ref{eq. linear interpolation}) that pass through the point $(\x,\tau)$ in spacetime.
As shown in \cite{albergo2022building,yang2022flow}, under general assumptions, $V_*$ is the solution to the continuity equation (\ref{eq. continuity equation}) and satisfies
\begin{equation*}
G_*\#\prob = P_*
\end{equation*}
where $G_*$ is the generator defined by the flow (\ref{eq. flow generator}) of $V_*$.

A more popular construction by \cite{sohl2015deep,song2019generative,ho2020denoising} uses the diffusion process
\begin{equation}
\label{eq. diffusion SDE}
X_0\sim P_*,\quad d X_{\tau} = -\frac{\beta_{\tau}}{2} X_{\tau} d\tau + \sqrt{\beta_{\tau}} dW_{\tau}
\end{equation}
where $\beta_{\tau}>0$ is a non-decreasing function that represents the noise scale.
Consider the coupling $\pi_{\tau}=\law(X_{\tau},X_0)$.
The conditional distribution $\pi_{\tau}(\cdot|\x_0)$ is an isotropic Gaussian \cite{song2020score}
\begin{equation}
\label{eq. diffusion conditional initial}
\pi_{\tau}(\cdot|\x_0) = \N\Big(e^{-\frac{1}{2}\int_0^{\tau}\beta_s ds}\x_0, ~(1-e^{-\int_0^{\tau}\beta_s ds})I\Big)
\end{equation}
and the interpolant distributions $P_{\tau}=\law(X_{\tau})$ are given by
\begin{equation}
\label{eq. diffusion interpolant distribution}
P_{\tau} = \int \pi_{\tau}(\cdot|\x_0) dP_*(\x_0)
\end{equation}
Then,
\begin{align*}
\KL(\pi\|\pi_{\tau}) &= \int \ln \frac{d\pi}{d\pi_{\tau}} d\pi = \iint \ln \frac{\N(\x)}{\pi_{\tau}(\x|\x_0)} d\N(\x) dP_*(\x_0)\\
&= \int \KL\big(\N~\big\|~\N(e^{-\frac{1}{2}\int_0^{\tau}\beta_s ds}\x_0, (1-e^{-\int_0^{\tau}\beta_s ds})I)\big) dP_*(\x_0)\\
&= \frac{e^{-\int_0^{\tau}\beta_s ds}}{1-e^{-\int_0^{\tau}\beta_s ds}} \Big(\int \|\x_0\|^2 dP_*(\x_0) + d\Big) + d \ln(1-e^{-\int_0^{\tau}\beta_s ds})
\end{align*}
where the last line follows from the formula for the KL divergence between multivariable Gaussians.
Let $\prob$ be the unit Gaussian $\N$.
It follows that if $P_*$ has finite second moments, then the coupling $\pi_{\tau}$ converges to the product measure $\pi=\prob\times P_*$ exponentially fast.

By choosing $T$ sufficiently large, we have $P_T\approx\prob$.
Then, a generative model can be defined by sampling from $X_T\sim\prob$ and then going through a reverse-time process $X_0=G(X_T)$ to approximate the target $P_*$.
One approach is to implement the following reverse-time SDE \cite{anderson1982reverse}
\begin{equation*}
X_T\sim \prob, \quad d X_{\tau} = -\frac{\beta_{\tau}}{2} ( X_{\tau} + 2\nabla_{\x}\log P_{\tau}(X_{\tau})) d\tau + \sqrt{\beta_{\tau}} d\overline{W}_{\tau}
\end{equation*}
which is solved from time $T$ to $0$, and $\overline{W}$ is the reverse-time Wiener process.
This backward SDE is equivalent to the forward SDE (\ref{eq. diffusion SDE}) in the sense that, if $P_T=\prob$, then they induce the same distribution of paths $\{X_{\tau}, \tau\in[0,T]\}$ \cite{anderson1982reverse}, and in particular $P_*=\law(X_0)$.
(An analysis that accounts for the approximation error between $P_T$ and $\prob$ is given by \cite{song2021maximum}.)
The gradient field $\nabla_{\x}\log P_{\tau}$ is known as the score function \cite{hyvarinen2005estimation}, which is modeled by a velocity field $\mathbf{s}:\R^d\times[0,T]\to\R^d$, and then the generator $G$ can be defined as the following random function
\begin{equation}
\label{eq. diffusion generator SDE}
G(\x_T)=X_0, \quad X_T=\x_T, \quad d X_{\tau} = -\frac{\beta_{\tau}}{2} ( X_{\tau} + 2 \mathbf{s}(X_{\tau}, \tau)) d\tau + \sqrt{\beta_{\tau}} d\overline{W}_{\tau}
\end{equation}

Another approach is to implement the following reverse-time ODE \cite{song2020score}
\begin{equation*}
X_T \sim \prob, \quad \frac{d}{d\tau}X_{\tau} = V(X_{\tau},\tau), \quad V(\x,\tau) = -\frac{\beta_{\tau}}{2} ( \x_{\tau} + \nabla_{\x}\log P_{\tau}(\x_{\tau}))
\end{equation*}
This $V$ is the solution to the continuity equation (\ref{eq. continuity equation}), and we similarly have $P_*=\law(X_0)$ if $P_T=\prob$.
Then, the generator $G$ can be defined as a deterministic function
\begin{equation}
\label{eq. diffusion generator ODE}
G(\x_T)=\x_0, \quad \frac{d}{d\tau} \x_{\tau} = -\frac{\beta_{\tau}}{2} ( \x_{\tau} + \mathbf{s}(\x_{\tau},\tau))
\end{equation}

\vs
\textbf{4. Mixture.}
Finally, we remark that it is possible to use a combination of these representations.
For instance, \cite{noe2019boltzmann} uses a normalizing flow model reweighed by a Boltzmann distribution, which is helpful for sampling from distributions with multiple modes while maintaining accurate density estimation.
Another possibility is that one can first train a model with fixed generator representation as a stable initialization, and then finetune the trained generator as a free generator (e.g. using GAN loss) to improve sample quality.

\subsection{Loss type}
\label{sec. loss}
There are numerous ways to define a metric or divergence on the space of probability measures, which greatly contribute to the diversity of distribution learning models.
One requirement, however, is that since the target distribution $P_*$ is replaced by its samples $P_*^{(n)}$ during training, the term $P_*$ almost always appears in the loss as an expectation.

The commonly used losses belong to three categories:

\vspace{0.5em}
\textbf{1. Density-based loss.}
The modeled distribution $P$ participates in the loss as a density function.
The default choice is the KL divergence (\ref{eq. KL}), which is equivalent up to constant to the negative log-likelihood (NLL)
\begin{equation}
\label{eq. NLL}
L(P) = -\int \log P(\x) dP_*(\x)
\end{equation}
In fact, we can show that NLL is in a sense the only possible density-based loss.

\begin{proposition}
\label{prop. density loss NLL}
Let $L$ be any loss function on $\PS_{ac}(\R^d)$ that has the form
\begin{equation*}
L(P) = \int f\big(P(\x)\big) dP_*(\x)
\end{equation*}
where $f$ is some $C^1$ function on $(0,+\infty)$.
If for any $P_*\in\PS_{ac}(\R^d)$, the loss $L$ is minimized by $P_*$, then
\begin{equation*}
f(p) = c\log p + c'
\end{equation*}
for $c \leq 0$ and $c' \in \R$. The converse is obvious.
\end{proposition}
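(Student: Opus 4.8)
\section*{Proof proposal}

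The plan is to treat this as a constrained calculus-of-variations problem: freeze a conveniently chosen target $P_*$, perturb it inside $\PS_{ac}(\R^d)$ while keeping unit mass, extract the first-order stationarity condition, and read off what it forces on $f'$. Exploiting the hypothesis \emph{for every} $P_*$ then lets us sweep the argument of $f'$ over all of $(0,\infty)$, and a final appeal to \emph{global} (not merely local) minimality pins down the sign of the leading constant. The converse follows from the same identity that produces the sign.

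Concretely, I would proceed as follows. \textbf{Step 1 (stationarity).} Fix $p_1,p_2>0$ and choose $P_*$ to be a step density supported on the unit cube $[0,1]^d$, identically equal to $p_1$ on a set $A_1$ of positive measure, to $p_2$ on a disjoint set $A_2$ of positive measure, and to some positive constant on the remainder, with the measures of these pieces small enough that $\int P_*=1$; such a $P_*$ is bounded and bounded away from $0$. Take the perturbation $\eta=|A_1|^{-1}\mathbf{1}_{A_1}-|A_2|^{-1}\mathbf{1}_{A_2}$, so $\int\eta\,d\x=0$ and $P_*+\epsilon\eta\in\PS_{ac}(\R^d)$ for all small $|\epsilon|$. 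Since $P_*$ minimizes $L$ over $\PS_{ac}(\R^d)$, the map $\epsilon\mapsto L(P_*+\epsilon\eta)$ has a local minimum at $\epsilon=0$, and differentiating under the integral (legitimate since $f\in C^1$ and all quantities live on a compact set with densities uniformly bounded above and below) gives
\[
0=\frac{d}{d\epsilon}\Big|_{\epsilon=0}L(P_*+\epsilon\eta)=\int f'\big(P_*(\x)\big)\,\eta(\x)\,P_*(\x)\,d\x=f'(p_1)p_1-f'(p_2)p_2 .
\]
\textbf{Step 2 (functional form).} As $p_1,p_2>0$ were arbitrary, $p\,f'(p)$ is a constant $c$ on $(0,\infty)$, so $f'(p)=c/p$ and hence $f(p)=c\log p+c'$ for some $c'\in\R$. \textbf{Step 3 (sign).} With this form, for any $P\in\PS_{ac}(\R^d)$ with $\KL(P_*\|P)<\infty$,
\[
L(P)-L(P_*)=c\Big(\int\log P\,dP_*-\int\log P_*\,dP_*\Big)=-c\,\KL(P_*\|P).
\]
Taking $P$ to be another step density on $[0,1]^d$ different from $P_*$ (so $0<\KL(P_*\|P)<\infty$), minimality of $P_*$ forces $-c\,\KL(P_*\|P)\ge 0$, hence $c\le 0$; and this same identity gives the converse, since $c\le 0$ makes $L(P)-L(P_*)=|c|\,\KL(P_*\|P)\ge 0$ for all $P$.

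The computation is routine, so the only points needing genuine care are small: (i) constructing the auxiliary densities $P_*$ (and the perturbation $\eta$) so that $P_*+\epsilon\eta$ remains a bona fide probability density — nonnegative with unit mass — which is exactly why I build $P_*$ from step functions on a cube, keeping it bounded away from $0$; and (ii) justifying differentiation under the integral sign in Step 1, which is standard given $f\in C^1$ together with the uniform upper and lower bounds on the densities involved. I do not anticipate a substantive obstacle beyond these bookkeeping points.
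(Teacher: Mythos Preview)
Your proof is correct and follows essentially the same approach as the paper: both derive that $p\,f'(p)$ is constant by perturbing a step-function target density and reading off the first-order optimality condition, then fix the sign of $c$ via the KL identity. The only minor difference is that you use a two-sided additive perturbation $P_*+\epsilon\eta$ to get the equality $p_1f'(p_1)=p_2f'(p_2)$ directly, whereas the paper uses the one-sided convex combination $(1-t)P_*+tP$ to obtain an inequality and then closes with a short contradiction; your route is marginally more direct but not substantively different.
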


Besides the KL divergence, there are several other well-known divergences in statistics such as the Jensen–Shannon divergence and $\chi^2$ divergence.
Despite that they are infeasible by Proposition \ref{prop. density loss NLL}, certain weakened versions of these divergences can still be used as will be discussed later.

\vspace{0.5em}
\textbf{2. Expectation-based loss.}
The modeled distribution participate in the loss through expectations.
Since both $P_*$ and $P$ are seen as linear operators over test functions, it is natural to define the loss as a dual norm
\begin{equation}
\label{eq. dual norm}
L(P) = \sup_{\|D\|\leq 1} \int D(\x) d(P-P_*)(\x)
\end{equation}
where $\|\cdot\|$ is some user-specified functional norm.
The test function $D$ is often called the discriminator, and such loss is called an adversarial loss.
If $\|\cdot\|$ is a Hilbert space norm, then the loss can also be defined by
\begin{equation}
\label{eq. dual norm squared}
L(P) = \sup_D \int D(\x) d(P-P_*)(\x) - \|D\|^2
\end{equation}
There are several classical examples of adversarial losses:
If $\|\cdot\|$ is the $C_0$ norm, then (\ref{eq. dual norm}) becomes the total variation norm $\|P-P_*\|_{\TV}$.
If $\|\cdot\|$ is the Lipschitz semi-norm, then (\ref{eq. dual norm}) becomes the 1-Wasserstein metric by Kantorovich-Rubinstein theorem \cite{kantorovich1958W1,villani2003topics}.
If $\|\cdot\|$ is the RKHS norm with some kernel $k$, then (\ref{eq. dual norm}) becomes the maximum mean discrepancy (MMD) \cite{gretton2012kernel}, and (\ref{eq. dual norm squared}) is the squared MMD:
\begin{equation*}
L(P) = \frac{1}{2} \iint k(\x,\x') ~d(P_*-P)(\x) d(P_*-P)(\x')
\end{equation*}
which gives rise to the moment matching network \cite{li2015GMMM}.

In practice, the discriminator $D$ is usually parametrized by a neural network, denoted by $D_{\theta}$ with parameter $\theta$.
One common choice of the norm $\|\cdot\|$ is simply the $l^{\infty}$ norm on $\theta$
\begin{equation}
\label{eq. WGAN loss P}
L(P) = \sup_{\|\theta\|_{\infty} \leq 1} \int D_{\theta} ~d(P-P_*)
\end{equation}
This formulation gives rise to the WGAN model \cite{arjovsky2017wasserstein}, and the $l^{\infty}$ bound can be conveniently implemented by weight clipping.
The loss (\ref{eq. WGAN loss P}) and its variants are generally known as the neural network distances \cite{arora2017generalization,zhang2017discrimination,gulrajani2020towards,e2020kolmogorov}.

The strength of the metric (e.g. fineness of its topology) is proportional to the size of the normed space of $\|\cdot\|$, or inversely proportional to the strength of $\|\cdot\|$.
Once some global regularity such as the Lipschitz norm applies to the space, then the dual norm or $L$ becomes continuous with respect to the underlying geometry (e.g. the $W_1$ metric), and is no longer permutation invariant like NLL (\ref{eq. NLL}) or total variation.
If we further restrict $D$ to certain sparse subspaces of the Lipschitz functions, in particular neural networks, then $L$ becomes insensitive to the ``high frequency" parts of $\PS(\R^d)$.
As we will demonstrate in Section \ref{sec. generalization}, this property is the source of good generalization.

Note that there are some variants of the GAN loss that resemble (\ref{eq. dual norm squared}) but whose norms $\|D\|$ are so weak that the dual norms are no longer well-defined. For instance, the loss with $L^2$ penalty from \cite{xu2020understanding}
\begin{equation*}
L(P) = \sup_{\theta} \int D_{\theta} ~d(P-P_*) - \|D_{\theta}\|_{L^2(P+P_*)}^2
\end{equation*}
or the loss with Lipschitz penalty $\|1-\|\nabla D_{\theta}\| \|_{L^2(P)}^2$ from \cite{gulrajani2017improved}.
By \cite[Proposition 5]{yang2022GAN}, in general we have $L(P)=\infty$.
Nevertheless, if we consider one-time-scale training such that $P$ and $D$ are trained with similar learning rates, then this blow-up can be avoided \cite{yang2022GAN}.

Beyond the dual norms (\ref{eq. dual norm}), one can also consider divergences.
Despite that Proposition \ref{prop. density loss NLL} has ruled out the use of divergences other than the KL divergence,
one can consider the weakened versions of the dual of these divergences.
For instance, given any parametrized discriminator $D_{\theta}$, the Jensen–Shannon divergence can be bounded below by \cite{goodfellow2014generative}
\begin{align}
\text{JS}(P,P_*) &= \frac{1}{2}\KL\Big(P\Big\|\frac{P+P_*}{2}\Big) + \frac{1}{2}\KL\Big(P_*\Big\|\frac{P+P_*}{2}\Big) \nonumber \\
&= \sup_{q:\R^d\to[0,1]} \int \log q(\x) dP(\x) + \int \log (1-q(\x)) dP_*(\x) + 2\ln 2 \nonumber \\
&\geq \sup_{\theta} \int \log \Big( \frac{e^{D_{\theta}(\x)}}{1+e^{D_{\theta}(\x)}} \Big) dP(\x) + \int \log \Big( \frac{1}{1+e^{D_{\theta}(\x)}} \Big) dP_*(\x) + 2\ln 2 \label{eq. classical GAN loss P}
\end{align}
This lower bound gives rise to the earliest version of GAN \cite{goodfellow2014generative}.
GANs based on other divergences have been studied in \cite{nowozin2016f,mao2018effectiveness}.

\vspace{0.5em}
\textbf{3. Regression loss.}
The regression loss is used exclusively by the fixed generator representation discussed in Section \ref{sec. distribution representation}.
If a target generator $G_*$ has been specified, then we simply use the $L^2$ loss over the base distribution $\prob$
\begin{equation}
\label{eq. generator regression}
L(G) = \frac{1}{2} \|G-G_*\|_{L^2(\prob)}^2
\end{equation}
If a target velocity field $V_*$ has been specified, then the loss is integrated over the interpolant distributions $P_{\tau}$
\begin{equation}
\label{eq. velocity regression}
L(V) = \frac{1}{2} \int_0^T \big\| V(\cdot,\tau)-V_*(\cdot,\tau) \big\|_{L^2(P_{\tau})}^2 d\tau
\end{equation}
or equivalently, we use the $L^2(M_*)$ loss with the joint distribution $M_*$ defined by (\ref{eq. joint interpolant distribution}).

\subsection{Combination}
\label{sec. combination}

Having discussed the distribution representations and loss types, we can now combine them to derive the distribution learning models in Table \ref{table. categorization}.
Our focus will be on the highlighted four classes in the table.

\vs
\textbf{Density + Potential.}
Since Proposition \ref{prop. density loss NLL} indicates that the negative log-likelihood (NLL) (\ref{eq. NLL}) is the only feasible density-based loss,
we simply insert the potential-based representation (\ref{eq. potential representation}) into NLL, and obtain a loss in the potential function $V$,
\begin{equation}
\label{eq. bias-potential model}
L(V) = \int V dP_* + \ln \int e^{-V} d\prob
\end{equation}
This formulation gives rise to the bias-potential model \cite{valsson2014potential,bonati2019enhanced}, also known as variationally enhanced sampling.

\vs
\textbf{Density + Free generator.}
In order to insert the transport representation $P=G\#\prob$ into NLL (\ref{eq. NLL}), we need to be able to compute the density $P(\x)$.
For simple cases such as when $\prob$ is Gaussian and $G$ is affine, the density $P(\x)$ has closed form expression.
Yet, in realistic scenarios when $P$ needs to satisfy the universal approximation property and thus has complicated forms, one has to rely on indirect calculations.
There are three common approaches:
\begin{enumerate}
\item Change of variables (for normalizing flows): If $G$ is a $C^1$ diffeomorphism, the density of $P$ is given by the change of variables formula
\begin{equation*}
P(\x) = \det\nabla G^{-1}(\x) ~\prob(G^{-1}(\x))
\end{equation*}
Usually $\prob$ is set to the unit Gaussian $\N$.
Then, the NLL loss (\ref{eq. NLL}) becomes
\begin{align*}
L(G) &= -\int \log \det\nabla G^{-1}(\x) + \log \prob(G^{-1}(\x)) dP_*(\x)\\
&= \int \log \det\nabla G\big(G^{-1}(\x)\big) + \frac{1}{2} \|G^{-1}(\x)\|^2 dP_*(\x) + \text{constant}
\end{align*}
If $G$ is modeled by a flow $\{G_{\tau}, \tau\in[0,1]\}$ (\ref{eq. flow generator}) with velocity field $V$, then its Jacobian satisfies
\begin{align*}
\frac{d}{d\tau}\det\nabla G_{\tau}(\x_0) &= \frac{d}{d\tau} \det \nabla \Big(\x_0+\int_0^{\tau} V\big(G_s(\x_0), s\big) ds \Big)\\
&= \frac{d}{d\tau} \det \Big(I+\int_0^{\tau} \nabla V\big(G_s(\x_0), s\big) \nabla G_s(\x_0) ds \Big)\\
&= \text{Tr}\Big[ \big(\nabla V(G_s(\x_0), s) \nabla G_s(\x_0)\big) \big(\nabla G_s(\x_0)\big)^{-1}\Big] \det\nabla G_{\tau}(\x_0)\\
&= \text{Tr}\big[\nabla V(G_s(\x_0), s)\big] \det\nabla G_{\tau}(\x_0)
\end{align*}
It follows that
\begin{equation*}
\log \det\nabla G(\x_0) = \int_0^1 \text{Tr}\big[\nabla V(G_{\tau}(\x_0), \tau)\big] d\tau
\end{equation*}
and this is known as Abel's formula \cite{teschl2012ODE}.
Hence, we obtain the loss of the normalizing flow model \cite{tabak2010density,chen2018neural}
\begin{align}
\label{eq. NF NLL}
\begin{split}
L(V) &= \iint_0^1 \text{Tr}\big[\nabla V(\x_{\tau}, \tau)\big] d\tau + \frac{1}{2}\|\x_0\|^2 dP_*(\x_1)\\
\x_{\tau} &:= G_{\tau}(G^{-1}(\x_1))
\end{split}
\end{align}
Moreover, if the velocity field is defined by a gradient field $V(\cdot,{\tau})=\nabla\phi_{\tau}$ as discussed in (\ref{eq. gradient field velocity}), then the loss has the simpler form
\begin{equation*}
L(\phi) = \iint_0^1 \Delta\phi_{\tau}(\x_{\tau}) d\tau + \frac{1}{2}\|\x_0\|^2 dP_*(\x_1)
\end{equation*}
which leads to the Monge-Amp\`{e}re flow model \cite{zhang2018monge}.

One potential shortcoming of NF is that diffeomorphisms might not be suitable for the generator when the target distribution $P_*$ is singular, e.g. concentrated on low-dimensional manifolds, which is expected for real data such as images.
To approximate $P_*$, the generator $G$ needs to shrink the mass of $\prob$ onto negligible sets, and thus $G^{-1}$ blows up.
As $G^{-1}$ is involved in the loss, it can cause the training process to be unstable.

\item Variational lower bound (for VAE):
Unlike NF, the variational autoencoders do not require the generator to be invertible, and instead use its posterior distribution.
The generator can be generalized to allow for random output, and we define the conditional distribution
\begin{equation*}
P(\cdot|\z) = \law(X), \quad X=G(\z)
\end{equation*}
The generalized inverse can be defined as the conditional distribution $Q_*(\cdot|\x)$ that satisfies
\begin{equation*}
P(\x|\z)\prob(\z) = P(\x)Q_*(\z|\x), \quad P(\x)=\int P(\x|\z) d\prob(\z)
\end{equation*}
in the distribution sense.
If the generator is deterministic, i.e. $P(\cdot|\z)=\delta_{G(\z)}$, and invertible, then $Q_*(\cdot|\x)$ is simply $\delta_{G^{-1}(\x)}$.
It follows that the KL divergence (\ref{eq. KL}) can be written as
\begin{align*}
\KL(P_*\|P) &= \int \log \frac{P_*(\x)}{P(\x)} + \KL\big(Q_*(\cdot|\x)\big\|Q_*(\cdot|\x)\big) dP_*(\x)\\
&= \min_{Q(\cdot|\x)} \iint \log \frac{P_*(\x)}{P(\x)} + \KL\big(Q(\cdot|\x)\big\|Q_*(\cdot|\x)\big) dP_*(\x)\\
&= \min_{Q(\cdot|\x)} \iint \log \frac{P_*(\x)Q(\z|\x)}{P(\x)Q_*(\z|\x)} dQ(\z|\x)dP_*(\x)\\
&= \min_{Q(\cdot|\x)} \iint \log \frac{P_*(\x)Q(\z|\x)}{P(\x|\z)\prob(\z)} dQ(\z|\x)dP_*(\x)\\
&= \min_{Q(\cdot|\x)} \KL\big( P_*Q(\cdot|\x) ~\big\|~ P(\cdot|\z)\prob \big)
\end{align*}
This is an example of the variational lower bound \cite{kingma2013auto}, and the NLL loss (\ref{eq. NLL}) now becomes
\begin{equation*}
\min_{P(\cdot|\z)} \min_{Q(\cdot|\x)} \iint -\log P(\x|\z) dQ(\z|\x) + \KL\big(Q(\z|\x)\big\|\prob(\z)\big) dP_*(\x)
\end{equation*}
which is the loss of VAE.
To make the problem more solvable, the decoder $P(\cdot|\z)$ and encoder $Q(\cdot|\x)$ are usually parametrized by diagonal Gaussian distributions \cite{kingma2013auto}:
\begin{equation*}
P(\cdot|\z) = \N\big(G(\z), \diag(e^{\mathbf{s}(\z)}) \big), \quad Q(\cdot|\x) = \N\big(F(\x), \diag(e^{\mathbf{v}(\x)}) \big)
\end{equation*}
where $G,F,\mathbf{s},\mathbf{v}$ are parametrized functions $\R^d\to\R^d$ such as neural networks, and $\exp$ is taken entry-wise.
Using the formula for KL divergence between Gaussians
\begin{equation*}
\KL\big(\N(m_0,\Sigma_0)\big\|\N(m_1,\Sigma_1)\big) = \frac{1}{2} \Big[ \log\frac{\det \Sigma_1}{\det \Sigma_0} - d + \tr[\Sigma_1^{-1}\Sigma_0] + (m_1-m_0)^T\Sigma_1^{-1}(m_1-m_0) \Big]
\end{equation*}
we can show that, up to constant, the VAE loss equals
\begin{align*}
\min_{G,s} \min_{F,v} & ~\frac{1}{2}\iint \frac{\big\|\x-G\big(F(\x)+e^{\mathbf{v}(\x)}\odot\omega\big)\big\|^2}{e^{\mathbf{s}(F(\x)+e^{\mathbf{v}(\x)}\odot\omega)}} + \sum_{i=1}^d \mathbf{s}_i\big(F(\x)+e^{\mathbf{v}(\x)}\odot\omega\big) d\N(\omega)\\
&\quad + \|F(\x)\|^2 + \sum_{i=1}^d e^{\mathbf{v}_i(\x)} - \mathbf{v}_i(\x) ~dP_*(\x)
\end{align*}
where $\odot$ is entry-wise product.
This loss resembles the classical autoencoder \cite{ackley1985learning,schmidhuber2015deep}
\begin{equation*}
\min_{F,G} \int \frac{\|\x-G(F(\x))\|^2}{2} dP_*(\x)
\end{equation*}
and thus $P(\cdot|\z), Q(\cdot|\x)$ are addressed by the decoder and encoder.

\item Factorization (for autoregressive model): To model a distribution $P$ over sequential data $\mathbf{X} = [\x_1, \dots \x_l]$, one can choose a generator $G$ that is capable of processing variable-length inputs $[\x_1, \dots \x_i]$, such as the Transformer network \cite{vaswani2017attention} or recurrent networks \cite{salehinejad2017recent}, and define the distribution by
\begin{align*}
P(\mathbf{X}) &= \prod_{i=1}^l P(\x_i|\x_1, \dots \x_{i-1})\\
P(\cdot|\x_1, \dots \x_{i-1}) &= \law(X_i), \quad X_i \sim G(Z|\x_1, \dots \x_{i-1}), \quad Z \sim \prob
\end{align*}
Then, NLL (\ref{eq. NLL}) is reduced to
\begin{equation*}
-\int \log P(\mathbf{X}) dP_*(\mathbf{X}) = -\int \sum_{i=1}^l \log P(\x_i|\x_1,\dots\x_{i-1}) dP_*(\x)
\end{equation*}
Usually, each $P(\cdot|\x_1, \dots \x_{i-1})$ has a simple parametrization such as Gaussian or Softmax so that $\log P(\x_i|\x_1,\dots\x_{i-1})$ is tractable \cite{oord2016wavenet,radford2018improving}.
\end{enumerate}

\vs
\textbf{Expectation + Free generator.}
By the definition (\ref{eq. pushforward set}) of the transport representation $P=G\#\prob$,
\begin{equation*}
\int f(\x) dP(\x) = \int f(G(\z)) d\prob(\z)
\end{equation*}
for all measureable functions $f$.
Then, the classical GAN loss (\ref{eq. classical GAN loss P}) becomes
\begin{equation}
\label{eq. classical GAN loss}
\min_G \max_D \int \log \Big( \frac{e^{D(G(\z))}}{1+e^{D(G(\z))}} \Big) d\prob(\z) + \int \log \Big( \frac{1}{1+e^{D(\x)}} \Big) dP_*(\x)
\end{equation}
Similarly, the WGAN loss (\ref{eq. WGAN loss P}) becomes
\begin{equation}
\label{eq. WGAN loss}
\min_G \max_{\|\theta\|_{\infty} \leq 1} \int D_{\theta}(G(\z)) d\prob(\z) - \int D_{\theta}(\x) dP_*(\x)
\end{equation}

\vs
\textbf{Regression + Fixed generator.}
For the case with fully deterministic coupling (\ref{eq. deterministic coupling}), a target generator $G_*$ is provided by numerical optimal transport, and then fitted by a parametrized function $G$ with the regression loss (\ref{eq. generator regression}).
This formulation leads to the generative model \cite{zhang2019optimal}.
(Moreover, a few models with some technical variations \cite{an2019ae,an2020GAN,rout2021generative} are related to this category, but for simplicity we do not describe them here.)

For the case with fully random coupling (\ref{eq. product coupling}), we fit either the score function $\nabla\log P_{\tau}$ from (\ref{eq. diffusion interpolant distribution}) or the velocity field $V_*$ from (\ref{eq. Radon-Nikodym velocity}) using the regression loss (\ref{eq. velocity regression}).
Note that the targets (\ref{eq. diffusion interpolant distribution}, \ref{eq. Radon-Nikodym velocity}) are both defined by expectations and thus the loss (\ref{eq. velocity regression}) cannot be computed directly.
Thanks to the linearity of expectation, we can expand the loss to make the computation tractable.

Model the score function $\nabla\log P_{\tau}$ by a velocity field $\mathbf{s}:\R^d\times[0,T]\to\R^d$ and let $\lambda_{\tau}>0$ be a user-specified weight.
The regression loss can be written as
\begin{align*}
L(\mathbf{s}) &:= \frac{1}{2} \int_0^T \lambda_{\tau} \|\mathbf{s}(\cdot,\tau)-\nabla\log P_{\tau}\|_{L^2(P_{\tau})}^2 d\tau \\
&= \int_0^T \lambda_{\tau} \int \frac{1}{2} \|\mathbf{s}(\x,\tau)\|^2 - \mathbf{s}(\x,\tau) \cdot \nabla\log P_{\tau}(\x) ~d P_{\tau}(\x) d\tau + C \\
&= \int_0^T \lambda_{\tau} \int \frac{1}{2} \|\mathbf{s}(\x,\tau)\|^2 dP_{\tau}(\x) - \lambda_{\tau} \int \mathbf{s}(\x,\tau) \cdot \nabla P_{\tau}(\x) d\x ~d\tau + C \\
&= \int_0^T \lambda_{\tau} \iint \frac{1}{2} \|\mathbf{s}(\x,\tau)\|^2 d\pi_{\tau}(\x|\x_0)dP_*(\x_0) \\
&\quad - \lambda_{\tau} \int \mathbf{s}(\x,\tau) \cdot \nabla \int \pi_{\tau}(\x|\x_0)dP_*(\x_0) d\x ~d\tau + C \\
&= \int_0^T \lambda_{\tau} \iint \frac{1}{2} \|\mathbf{s}(\x,\tau)\|^2 - \mathbf{s}(\x,\tau) \cdot \nabla \log \pi_{\tau}(\x|\x_0) ~d\pi_{\tau}(\x|\x_0) dP_*(\x_0) d\tau + C \\
&= \int_0^T \frac{\lambda_{\tau}}{2} \iint \|\mathbf{s}(\x,\tau) - \nabla \log \pi_{\tau}(\x|\x_0)\|^2 d\pi_{\tau}(\x|\x_0) dP_*(\x_0) d\tau + C
\end{align*}
Since the conditional distribution $\pi_{\tau}(\x|\x_0)$ is the isotropic Gaussian (\ref{eq. diffusion conditional initial}), this loss is straightforward to evaluate.
Thus, we obtain the loss of the score-based diffusion models \cite{sohl2015deep,song2019generative,ho2020denoising,song2020score}
\begin{equation}
\label{eq. diffusion Gaussian loss}
L(\mathbf{s}) = \int_0^T \frac{\lambda_{\tau}}{2} \iint \Big\|\mathbf{s}\big(e^{-\frac{1}{2}\int_0^{\tau}\beta_s ds}\x_0 + \sqrt{1-e^{-\int_0^{\tau}\beta_s ds}} \omega,\tau\big) + \frac{\omega}{\sqrt{1-e^{-\int_0^{\tau}\beta_s ds}}} \Big\|^2 d\N(\omega) dP_*(\x_0) d\tau
\end{equation}

Similarly, for the velocity field $V_*$ (\ref{eq. Radon-Nikodym velocity}), using the definitions (\ref{eq. joint interpolant distribution}, \ref{eq. current density}) of  the joint distribution $M_*$ and current density $J_*$, we can write the regression loss as
\begin{align}
L(V) &:= \frac{1}{2} \int_0^1 \|V(\cdot,\tau)-V_*(\cdot,\tau)\|_{L^2(P_{\tau})}^2 d\tau \nonumber \\
&= \int \frac{1}{2} \|V(\x,\tau)\|^2 - V(\x,\tau) \cdot V_*(\x,\tau) ~dM_*(\x,\tau) + C \nonumber \\
&= \int \frac{1}{2} \|V(\x,\tau)\|^2 dM_*(\x,\tau) - \int V \cdot dJ_* + C \nonumber \\
&= \int_0^1\iint \frac{1}{2} \big\|V\big((1-\tau)\x_0+\tau\x_1, \tau\big)\big\|^2 d\prob(\x_0)dP_*(\x_1)d\tau \nonumber \\
&\quad - \int_0^1\iint (\x_1-\x_0)\cdot V\big((1-\tau)\x_0+\tau\x_1, \tau\big) d\prob(\x_0)dP_*(\x_1)d\tau + C \nonumber \\
&= \frac{1}{2} \int_0^1\iint \big\| V\big((1-\tau)\x_0+\tau\x_1, \tau\big) - (\x_1-\x_0) \big\|^2 d\prob(\x_0)dP_*(\x_1)d\tau + C \label{eq. diffusion NF loss}
\end{align}
Thus, we obtain the loss of normalizing flow with stochastic interpolants \cite{albergo2022building,liu2022flow,yang2022flow}.

\vs
\textbf{Other classes.}
Finally, we briefly remark on the rest of the classes in Table \ref{table. categorization}.
For the combination ``Density + Fixed generator", it has been shown by \cite{song2021maximum} that the regression loss upper bounds the KL divergence.
Specifically, if in the loss $L$ (\ref{eq. diffusion Gaussian loss}) we set the weight by $\lambda_{\tau}=\beta_{\tau}$ where $\beta_{\tau}$ is the noise scale in the SDE (\ref{eq. diffusion SDE}), then given any score function $\mathbf{s}$,
\begin{equation}
\label{eq. diffusion KL bound}
\KL(P_*\|G_{\mathbf{s}}\#\prob) \leq L(\mathbf{s}) + \KL(P_T\|\prob)
\end{equation}
where $G_{\mathbf{s}}$ is the generator defined by the reverse-time SDE (\ref{eq. diffusion generator SDE}) with the score $\mathbf{s}$.
The result also holds for $G_{\mathbf{s}}$ defined by the reverse-time ODE (\ref{eq. diffusion generator ODE}) under a self-consistency assumption: let $(G_{\mathbf{s}})_1^{\tau}$ be the reverse-time flow map of (\ref{eq. diffusion generator ODE}), then
\begin{equation*}
\mathbf{s}(\x,\tau)=\nabla\log ((G_{\mathbf{s}})_1^{\tau}\#\prob)(\x)
\end{equation*}

The combinations ``Expectation + Potential" and ``Expectation + Fixed generator" are feasible, but we are not aware of representative models.
The combinations ``Regression + Potential" and ``Regression + Free generator" do not seem probable, since there is no clear target to perform regression.

\begin{remark}[Empirical loss]
As discussed in Section \ref{sec. loss}, the loss is almost always an expectation in the target distribution $P_*$.
Indeed, one can check that all the loss functions introduced in this section can be written in the abstract form
\begin{equation*}
L(f) = \int F(f,\x) dP_*(\x)
\end{equation*}
where $f$ is the parameter function and $F$ depends on the model.
Thus, if only a finite sample set of $P_*$ is available, as is usually the case in practice, one can define the empirical loss
\begin{equation}
\label{eq. empirical loss}
L^{(n)}(f) = \int F(f,\x) dP^{(n)}_*(\x) = \frac{1}{n} \sum_{i=1}^n F(f,\x_i)
\end{equation}
where $P_*^{(n)}$ is the empirical distribution.
\end{remark}

\subsection{Function representation}
\label{sec. function representation}

Having parametrized the distributions and losses by abstract functions, the next step is to parametrize these functions by machine learning models such as neural networks.
There is much freedom in this choice, such that any parametrization used in supervised learning should be applicable to most of the functions we have discussed.
These include the generators and discriminators of GANs, the means and variances of the decoder and encoder of VAE, the potential function of the bias potential model, the score function of score-based diffusion models, and the velocity field of normalizing flows with stochastic interpolants.
Some interesting applications are given by \cite{denton2015LAPGAN,radford2018improving,jiang2021transgan,rombach2022high}.

One exception is the generator $G$ of the normalizing flows (\ref{eq. NF NLL}), which needs to be invertible with tractable Jacobian.
As mentioned in Section \ref{sec. model overview}, one approach is to parametrize $G$ as a sequence of invertible blocks whose Jacobians have closed-form formula (Example designs can be found in \cite{dinh2016density,kingma2016improved, papamakarios2017masked, huang2018neural}).
Another approach is to represent $G$ as a flow, approximate this flow with numerical schemes, and solve for the traces $\text{Tr}[\nabla V]$ in (\ref{eq. NF NLL}) (Examples of numerical schemes are given by \cite{chen2018neural,zhang2018monge,grathwohl2018ffjord}).

\vs
For the theoretical analysis in the rest of this paper, we need to fix a function representation.
Since our focus is on the phenomena that are unique to learning distributions (e.g. memorization), we keep the function representation as simple as possible, while satisfying the minimum requirement of the universal approximation property among distributions (and thus the capacity for memorization).
Specifically, we use the random feature functions \cite{rahimi2008uniform,e2019machine,yang2022potential}.

\begin{definition}[Random feature functions]
\label{def. RFM}
Let $\HS(\R^d,\R^k)$ be the space of functions that can be expressed as
\begin{equation}
\label{eq. RFM}
f_{\a}(\x) = \E_{\rho(\w,b)} \big[ \a(\w,b) ~\sigma(\w\cdot\x + b) \big]
\end{equation}
where $\rho \in \PS(\R^{d+1})$ is a fixed parameter distribution and $\a \in L^2(\rho,\R^k)$ is a parameter function.
For simplicity, we use the notation $\HS$ when the input and output dimensions $d,k$ are clear.
\end{definition}

\begin{definition}[RKHS norm]
\label{def. RKHS}
For any subset $\Omega\subseteq \R^d$, consider the quotient space
\begin{equation*}
\HS(\Omega) = \HS / \{f_{\a} \equiv \mathbf{0} \text{ on } \Omega\}
\end{equation*}
with the norm
\begin{equation*}
\|f\|_{\HS(\Omega)} = \inf \{\|\a\|_{L^2(\rho)} ~|~ f = f_{\a} \text{ on } \Omega\}
\end{equation*}
We use the notation $\|\cdot\|_{\HS}$ if $\Omega$ is clear from context.
By \cite{cucker2002mathematical,rahimi2008uniform}, $\HS(\Omega)$ is a Hilbert space and $\|\cdot\|_{\HS(\Omega)}$ is equal to the RKHS norm (reproducing kernel Hilbert space) induced by the kernel
\begin{equation*}
k(\x,\x') = \E_{\rho} \big[ \sigma(\w\cdot\x + b) \sigma(\w\cdot\x' + b) \big]
\end{equation*}
Furthermore, given any distribution $\prob\in\PS(\Omega)$, we can define the following integral operator $K: L^2(\prob) \to L^2(\prob)$,
\begin{equation}
\label{eq. integral operator}
K(f)(\x) = \int k(\x,\x') f(\x') d\prob(\x')
\end{equation}
\end{definition}

\begin{definition}[Time-dependent random feature function]
\label{def. flow RFM}
Given any $V \in \HS(\R^{d+1},\R^d)$, one can define a flow by
\begin{align*}
G_V(\x_0) &= \x_1, \quad \frac{d}{d\tau}\x_{\tau} = V(\x_{\tau},\tau)
\end{align*}
Define the flow-induced norm
\begin{equation*}
\|V\|_{\F} = \exp \|V\|_{\HS}
\end{equation*}
\end{definition}

Our results adopt either of the following settings:
\begin{assumption}
\label{assume. ReLU}
Assume that the activation $\sigma$ is ReLU $\sigma(x)=\max(x,0)$.
Assume that the parameter distribution $\rho$ is supported on the $l^1$ sphere $\{\|\w\|_1+|b|=1\}$ and has a positive and continuous density over this sphere.
\end{assumption}

\begin{assumption}
\label{assume. sigmoid}
Assume that the activation $\sigma$ is sigmoid $\sigma(x)=\frac{e^x}{1+e^x}$.
Assume that $\rho$ has a positive and continuous density function over $\R^{d+1}$ and also bounded variance
\begin{equation*}
\int (\|\w\|^2 + b^2) d\rho(\w,b) \leq 1
\end{equation*}
\end{assumption}

Given either assumption, the universal approximation theorems \cite{hornik1991approximation,sun2018RFM} imply that the space $\HS(K,\R^k)$ is dense among the continuous functions $C(K,\R^k)$ with respect to the $C_0$ norm for any compact subset $K\subseteq\R^d$.
Also, by Lemma \ref{lemma. universal approximation L2}, $\HS(\R^d,\R^k)$ is dense in $L^2(P,\R^k)$ for any distribution $P\in\PS(\R^d)$.

The random feature functions (\ref{eq. RFM}) can be seen as a simplified form of neural networks, e.g. if we replace the parameter distribution $\rho$ by a finite sample set $\{(\w_i,b_i)\}_{i=1}^m$, then (\ref{eq. RFM}) becomes a 2-layer network with $m$ neurons and frozen first layer weights.
Similarly, for the flow $G_V$ in Definition \ref{def. flow RFM}, if the ODE is replaced by a forward Euler scheme, then $G_V$ becomes a deep residual network whose layers share similar weights.
Beyond the random feature functions, one can extend the analysis to the Barron functions \cite{e2019barron,barron1991approximation} and flow-induced functions \cite{e2019residual}, which are the continuous representations of 2-layer networks and residual networks.

\subsection{Training rule}
\label{sec. training rule}

The training of distribution learning models is very similar to training supervised learning models, such that one chooses from the many algorithms for gradient descent and optimizes the function parameters.
One exception is the GANs, whose losses are min-max problems of the form (\ref{eq. classical GAN loss}, \ref{eq. WGAN loss}) and are usually solved by performing gradient descent on the generator and gradient ascent on the discriminator \cite{goodfellow2014generative}.

For the theoretical analysis in this paper, we use the continuous time gradient descent.
Specifically, given any loss $L(f)$ over $L^2(\prob,\R^k)$ for some $\prob\in\PS(\R^d)$, we parametrize $f$ by the random feature function $f_{\a}$ from Definition \ref{def. RFM} and denote the loss by $L(\a)=L(f_{\a})$.
Given any initialization $\a_0 \in L^2(\rho,\R^k)$, we define the trajectory $\{\a_t, t\geq 0\}$ by the dynamics
\begin{equation}
\label{eq. GD continuous time}
\frac{d}{dt}\a_t = -\nabla_{\a} L \big|_{\a_t} = -\int \nabla_f L(\x) ~\sigma(\w\cdot\x+b) d\prob(\x)
\end{equation}
It follows that the function $f_t = f_{\a_t}$ evolves by
\begin{equation}
\label{eq. population trajectory}
\frac{d}{dt}f_t = \E_{\rho(\w,b)} \Big[ \frac{d}{dt} \a_t(\w,b) ~\sigma(\w\cdot\x+b) \Big] = -K(\nabla_f L)
\end{equation}
where $K$ is the integral operator defined in (\ref{eq. integral operator}).
Similarly, given the empirical loss $L^{(n)}$ (\ref{eq. empirical loss}), we define the empirical training trajectory
\begin{equation}
\label{eq. empirical trajectory}
\frac{d}{dt}f_t^{(n)} = -K(\nabla_f L^{(n)})
\end{equation}
By default, we use the initialization:
\begin{equation}
\label{eq. RFM initialization}
\a_0=\a^{(n)}_0 \equiv \mathbf{0}
\end{equation}
or equivalently $f_0 = f^{(n)}_0 \equiv \mathbf{0}$.

\subsection{Test error}

For our theoretical analysis, given a modeled distribution $P$ and target distribution $P_*$, we measure the test error by either the Wasserstein metric $W_2(P_*, P)$ or KL-divergence $\KL(P_*\|P)$.
As discussed in Section \ref{sec. introduction}, $W_2$ exhibits the curse of dimensionality, while $\KL$ is stronger than $W_2$.
Thus, they are capable of detecting memorization and can distinguish the solutions that generalize well.

In addition, one advantage of the $W_2$ metric is that it can be related to the regression loss.

\begin{proposition}[Proposition 21 of \cite{yang2022GAN}]
\label{prop. W2 matching}
Given any base distribution $\prob \in \PS_{2,ac}(\R^d)$ and any target distribution $P_* \in \PS_2(\R^d)$,
for any $G \in L^2(\prob,\R^d)$
\begin{equation*}
W_2(P_*, G\#\prob) = \inf \big\{ \|G-G_*\|_{L^2(\prob)} ~|~ G_* \in L^2(\prob,\R^d), ~P_* = G_*\#\prob \big\}
\end{equation*}
\end{proposition}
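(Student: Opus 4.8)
The plan is to prove the identity by establishing the two inequalities $W_2(P_*, G\#\prob) \le \inf\{\cdots\}$ and $W_2(P_*, G\#\prob) \ge \inf\{\cdots\}$ separately. For the first (easy) direction, I would take any $G_* \in L^2(\prob,\R^d)$ with $P_* = G_*\#\prob$ and observe that the map $\z \mapsto (G(\z), G_*(\z))$ pushes $\prob$ forward to a coupling $\pi$ of $G\#\prob$ and $P_*$; by the coupling definition \eqref{eq. W2 coupling} of $W_2$,
\begin{equation*}
W_2(P_*, G\#\prob)^2 \le \int \|\x_0 - \x_1\|^2 d\pi(\x_0,\x_1) = \int \|G(\z) - G_*(\z)\|^2 d\prob(\z) = \|G - G_*\|_{L^2(\prob)}^2.
\end{equation*}
Taking the infimum over all such $G_*$ gives one inequality.

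For the reverse (hard) direction, I would start from an optimal coupling $\pi^* \in \Pi(G\#\prob, P_*)$ achieving $W_2(G\#\prob, P_*)$ (existence of an optimizer in \eqref{eq. W2 coupling} is standard since $P_*, G\#\prob \in \PS_2$). The goal is to produce a map $G_* \in L^2(\prob,\R^d)$ with $G_*\#\prob = P_*$ and $\|G - G_*\|_{L^2(\prob)}^2 \le \int \|\x_0-\x_1\|^2 d\pi^*$. The natural device is a disintegration: write $\pi^*(d\x_0, d\x_1) = (G\#\prob)(d\x_0)\, \pi^*(d\x_1 \mid \x_0)$, pull the conditional kernel back along $G$ to get a kernel $\kappa(\z, d\x_1) := \pi^*(d\x_1 \mid G(\z))$ on the $\prob$-side, and then use the fact that $\prob$ is absolutely continuous (hence atomless) to "split" it and realize this kernel by a genuine measurable map. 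Concretely, enlarge the probability space with an independent uniform variable $U$, and on $(\R^d \times [0,1], \prob \times \mathrm{Leb})$ define $\tilde G_*(\z, u)$ to be the generalized inverse CDF of $\kappa(\z,\cdot)$ evaluated at $u$ (a measurable selection — Kuratowski–Ryll-Nardzewski or an explicit Blackwell–Dubins construction); then transfer this back to a map on $(\R^d,\prob)$ using that any atomless standard probability space is isomorphic mod 0 to itself times $[0,1]$. The resulting $G_*$ satisfies $\law(G(\z), G_*(\z)) = \pi^*$ under $\prob$, so $G_*\#\prob = P_*$ and $\|G-G_*\|_{L^2(\prob)}^2 = \int\|\x_0-\x_1\|^2 d\pi^* = W_2(P_*,G\#\prob)^2$, which shows the infimum is attained and bounded by $W_2$.

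The main obstacle is this measurable-selection / kernel-realization step: turning the optimal coupling's disintegration into an honest $L^2$ map defined $\prob$-a.e., which is exactly where the absolute continuity of $\prob$ (it being atomless, equivalently a standard nonatomic space) is essential — without it the infimum need not be attained and could even fail to equal $W_2$. A clean way to package this is to invoke a known "noise outsourcing" or transfer lemma: any coupling of two Borel probability measures on Polish spaces, with the first-marginal space atomless, can be represented as the joint law of $(Y, h(Y,U))$ for a measurable $h$ and an independent uniform $U$; combined with the isomorphism of $(\R^d,\prob)$ with an atomless space carrying such an independent uniform, this yields $G_*$ directly. Everything else — the easy inequality, the existence of the optimal coupling, and the bookkeeping that $G_* \in L^2(\prob)$ because $\int\|G_*\|^2 d\prob = \int\|\x_1\|^2 dP_* < \infty$ — is routine.
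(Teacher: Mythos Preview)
The paper does not supply a proof of this statement (it is quoted from \cite{yang2022GAN}), so I assess your argument directly. The easy direction is fine. The hard direction has a genuine gap in the ``transfer back'' step, and in fact your overall strategy of realizing the optimal plan $\pi^*$ exactly as $(G,G_*)_{\#}\prob$ cannot work in general, because the infimum in the statement is not always attained.

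The problem is that an abstract measure isomorphism $\Phi:(\R^d,\prob)\to(\R^d\times[0,1],\prob\times\mathrm{Leb})$ need not respect $G$: for the pair $(G,\tilde G_*\circ\Phi)$ on $(\R^d,\prob)$ to have law $\pi^*$ you would need $G=\tilde G\circ\Phi=G\circ\Phi_1$, i.e.\ $\Phi$ must preserve $G$-fibers, which fails precisely when $G$ is injective (then $\sigma(G)$ is the full Borel $\sigma$-algebra and no non-constant function of $\z$ can be independent of $G(\z)$). Concretely, take $d=2$, $\prob$ Lebesgue on $[0,1]^2$, $G(\z_1,\z_2)=(C(\z_1,\z_2),0)$ with $C:[0,1]^2\to[0,1]$ a Borel bijection pushing Lebesgue to Lebesgue, and $P_*$ Lebesgue on $[0,1]^2$. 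Then $G$ is injective, $W_2^2=1/3$, and the unique optimal $\pi^*$ has non-Dirac conditionals $\pi^*(\cdot\mid(t,0))=\delta_t\otimes\mathrm{Leb}_{[0,1]}$; since any measurable $G_*$ is $\sigma(G)$-measurable here, no $G_*$ realizes $\pi^*$, and one checks the infimum $1/3$ is not attained. What does work is an approximation argument: view $\inf_{G_*}\|G-G_*\|_{L^2(\prob)}^2$ as the Monge value of the transport from $\prob$ to $P_*$ with cost $c(\z,\x_1)=\|G(\z)-\x_1\|^2$, note that the corresponding Kantorovich minimum equals $W_2(G\#\prob,P_*)^2$ via the gluing $\gamma(d\z,d\x_1)=\prob(d\z)\,\pi^*(d\x_1\mid G(\z))$, and then invoke the Ambrosio--Pratelli theorem that Monge infimum equals Kantorovich minimum when the source is atomless (reducing first to continuous $G$ by $L^2$-density if one wants a continuous cost).
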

So effectively, the $W_2$ test error is the $L^2$ error with the closest target generator.

One remark is that these test losses are only applicable to theoretical analysis.
In practice, we only have a finite sample set from $P_*$, and the curse of dimensionality becomes an obstacle to meaningful evaluation.

\section{Universal Approximation Theorems}
\label{sec. approximation}

It is not surprising that distribution learning models equipped with neural networks satisfy the universal approximation property in the space of probability distributions.
The significance is that the models in general have the capacity for memorization.
This section confirms that the universal approximation property holds for all three distribution representations introduced in Section \ref{sec. distribution representation}.
Since our results are proved with the random feature functions (Definitions \ref{def. RFM} and \ref{def. flow RFM}), they hold for more expressive function parametrizations such as 2-layer and deep neural networks.

For the free generator representation, the following result is straightforward.
\begin{proposition}
\label{prop. RFM W2 approximation}
Given either Assumption \ref{assume. ReLU} or \ref{assume. sigmoid}, for any base distribution $\prob \in \PS_{2,ac}(\R^d)$,
the set of distributions generated by the random feature functions $\HS(\R^d,\R^d)$ are dense with respect to the $W_2$ metric.
Specifically, for any $P_* \in \PS_2(\R^d)$,
\begin{equation*}
\inf_{G\in \HS} W_2(P_*, G\#\prob) = 0
\end{equation*}
\end{proposition}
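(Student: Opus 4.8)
The plan is to derive this as a corollary of two facts already established in the paper: the $L^2$-universal approximation property of random feature functions, and the identification of $W_2$ with an $L^2$-distance between transport maps (Proposition~\ref{prop. W2 matching}). The first step is to produce a square-integrable target generator. Since $\prob\in\PS_{2,ac}(\R^d)$ is absolutely continuous, it is atomless, so Brenier's theorem \cite{brenier1991polar,villani2003topics} supplies a map $G_*\colon\R^d\to\R^d$ with $G_*\#\prob=P_*$ (any measurable map pushing $\prob$ to $P_*$ would also do, but the optimal transport map is the cleanest choice given that Brenier's theorem is already invoked in Section~\ref{sec. distribution representation}). Because $P_*\in\PS_2(\R^d)$, this map is automatically square-integrable:
\begin{equation*}
\int \|G_*(\z)\|^2\, d\prob(\z) \;=\; \int \|\x\|^2\, dP_*(\x) \;<\; \infty ,
\end{equation*}
hence $G_*\in L^2(\prob,\R^d)$.

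The second step is to approximate $G_*$ in $L^2$. Under either Assumption~\ref{assume. ReLU} or Assumption~\ref{assume. sigmoid}, the space $\HS(\R^d,\R^d)$ is dense in $L^2(\prob,\R^d)$ by Lemma~\ref{lemma. universal approximation L2}, so for every $\ep>0$ there is $G\in\HS$ with $\|G-G_*\|_{L^2(\prob)}<\ep$. One should also note that in either setting $\sigma(\w\cdot\x+b)$ grows at most linearly in $\x$ on the support of $\rho$ (it is bounded in the sigmoid case, and bounded by $1+\|\x\|$ up to constants on the $l^1$ sphere in the ReLU case), so $\HS\subseteq L^2(\prob,\R^d)$ because $\prob$ has finite second moment; in particular $G\#\prob\in\PS_2(\R^d)$ and $W_2(P_*,G\#\prob)$ is finite.

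The third step passes from the $L^2$ bound on maps to a $W_2$ bound on pushforwards. Since $G_*$ lies in the feasible set $\{G_*\in L^2(\prob,\R^d): P_*=G_*\#\prob\}$, Proposition~\ref{prop. W2 matching} gives directly
\begin{equation*}
W_2(P_*,\,G\#\prob) \;\leq\; \|G-G_*\|_{L^2(\prob)} \;<\; \ep ;
\end{equation*}
equivalently, the coupling $(G_*,G)\#\prob$ has marginals $P_*$ and $G\#\prob$ and realizes $\int\|\x_0-\x_1\|^2\,d\pi=\|G_*-G\|_{L^2(\prob)}^2$. Letting $\ep\to0$ yields $\inf_{G\in\HS}W_2(P_*,G\#\prob)=0$, as claimed.

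I do not expect a substantial obstacle: the statement is essentially immediate once Lemma~\ref{lemma. universal approximation L2} and Proposition~\ref{prop. W2 matching} are in hand. The only points demanding a little care are the construction of the square-integrable target $G_*$ — which relies on $\prob$ being atomless (from absolute continuity) together with $P_*\in\PS_2$ — and the routine verification that random feature functions sit inside $L^2(\prob,\R^d)$, so that every Wasserstein distance appearing in the argument is finite.
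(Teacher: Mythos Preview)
Your proof is correct and follows essentially the same route as the paper: invoke Brenier's theorem to obtain $G_*\in L^2(\prob,\R^d)$ with $G_*\#\prob=P_*$, approximate $G_*$ in $L^2(\prob)$ by $G\in\HS$ via Lemma~\ref{lemma. universal approximation L2}, and pass to the $W_2$ bound through the elementary coupling inequality $W_2(P_*,G\#\prob)\leq\|G-G_*\|_{L^2(\prob)}$. The only minor difference is that you cite Proposition~\ref{prop. W2 matching} for this last step whereas the paper simply writes the inequality directly, and you add the (harmless) verification that $\HS\subseteq L^2(\prob,\R^d)$.
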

In particular, $G\#\prob$ can approximate the empirical distribution $P_*^{(n)}$.

\subsection{Potential representation}
Consider the potential-based representation (\ref{eq. potential representation}).
Let $K \subseteq \R^d$ be any compact set with positive Lebesgue measure, let $\PS_{ac}(K) \cap C(K)$ be the space of distributions with continuous density functions, and let the base distribution $\prob$ be uniform over $K$.

\begin{proposition}
\label{prop. universal approximation potential}
Given either Assumption \ref{assume. ReLU} or \ref{assume. sigmoid}, the set of probability distributions
\begin{equation*}
\PS_{\HS} = \Big\{\frac{1}{Z} e^{-V}\prob ~\Big|~ V \in \HS(\R^d,\R) \Big\}
\end{equation*}
is dense in
\begin{itemize}
\item $\PS(K)$ under the Wasserstein metric $W_p$ ($1\leq p < \infty$),
\item $\PS_{ac}(K)$ under the total variation norm $\|\cdot\|_{\TV}$,
\item $\PS_{ac}(K) \cap C(K)$ under $\KL$ divergence.
\end{itemize}
\end{proposition}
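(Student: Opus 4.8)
The plan is to reduce the three density statements to a single statement about approximating density functions, and then use the universal approximation property of $\HS$ in an appropriate topology. Write the target distribution as $P=\frac{1}{Z_*}e^{-U}\prob$ where $\prob$ is uniform on $K$; this is always possible for $P\in\PS_{ac}(K)$ by setting $U=-\log\frac{dP}{d\prob}$, though $U$ need not be continuous or bounded in general. The modeled distribution is $P_V=\frac{1}{Z}e^{-V}\prob$ with $V\in\HS$. The key estimate is that if $\|V-U\|$ is small in a suitable sense then $P_V$ is close to $P$ in the desired metric, with the normalization constants $Z=\int e^{-V}d\prob$ controlled automatically since $Z_*=\int e^{-U}d\prob$ and $|Z-Z_*|\leq \int |e^{-V}-e^{-U}|d\prob$.

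For the three cases I would proceed from strongest to weakest topology. For the $\KL$ case, $P\in\PS_{ac}(K)\cap C(K)$ with positive continuous density (note one should observe the density is bounded away from $0$ on $K$ if it is positive and continuous on a compact set — if the density vanishes somewhere then $U=+\infty$ there and one must argue by first approximating $P$ by a distribution with strictly positive continuous density, which is standard); then $U$ is continuous on $K$, and by the universal approximation theorem cited in the excerpt (density of $\HS(K,\R)$ in $C(K,\R)$ under the $C_0$ norm), pick $V\in\HS$ with $\|V-U\|_{C_0(K)}<\ep$. A direct computation then shows $\KL(P\|P_V)=\int (V-U)\,dP + \log\frac{Z}{Z_*}$, and both terms are $O(\ep)$ via $e^{-\ep}\leq Z/Z_*\leq e^{\ep}$. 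For the total variation case, $P\in\PS_{ac}(K)$ need not have continuous density, so I would first approximate $\frac{dP}{d\prob}$ in $L^1(\prob)$ (hence in TV) by a strictly positive continuous density — using density of $C(K)$ in $L^1(K)$ plus truncation from below — reducing to the continuous-positive-density case where the $C_0$ approximation of the log-density gives $\|P-P_V\|_{\TV}\leq \frac12\int|e^{-V}/Z - e^{-U}/Z_*|\,d\prob \lesssim \ep$. Finally the $W_p$ case follows from the TV case for $P\in\PS_{ac}(K)$ by noting that on the bounded set $K$ the $W_p$ metric is dominated by (a power of) TV, i.e. $W_p(P,P_V)\leq \mathrm{diam}(K)\,\|P-P_V\|_{\TV}^{1/p}$; and a general $P\in\PS(K)$ (possibly singular) is first approximated in $W_p$ by an absolutely continuous distribution on $K$ (e.g. convolving with a small uniform kernel and restricting), which reduces to the previous case.

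The main obstacle I expect is handling densities that are not bounded away from zero (and more generally the passage from a general $P\in\PS(K)$ or $\PS_{ac}(K)$ down to the nice class where the log-density is continuous and bounded), since $U=-\log\frac{dP}{d\prob}$ blows up where the density vanishes and the map $V\mapsto P_V$ only produces strictly positive densities. The fix is a two-stage approximation: first perturb $P$ within the target metric to a distribution with strictly positive continuous density (exploiting that $K$ is compact and that $W_p$, TV are both weaker than, or controlled by, $L^1$ convergence of densities), then apply the uniform approximation of the now-continuous log-density by an element of $\HS$. One should also double-check that the universal approximation statement quoted in the excerpt indeed applies on the compact set $K$ under both Assumption \ref{assume. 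ReLU} and Assumption \ref{assume. sigmoid}, which it does by the references given; everything else is the routine normalization-constant bookkeeping sketched above.
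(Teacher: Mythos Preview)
Your proposal is correct and, in spirit, follows the same route the paper intends: first establish that $\HS$ is dense in $C(K)$ under the sup norm (which the paper does by citing \cite{sun2018RFM} under Assumption~\ref{assume. ReLU} and \cite{hornik1990universal} under Assumption~\ref{assume. sigmoid}), and then deduce the three density statements for $\PS_{\HS}$ from this. The difference is that the paper's proof is a two-line citation: after noting density of $\HS$ in $C(K)$, it simply invokes Proposition~2.1 of \cite{yang2022potential}, which packages exactly the reduction you have written out --- the two-stage approximation (first perturb a general $P$ to one with strictly positive continuous density, then approximate the now-continuous log-density uniformly by some $V\in\HS$), together with the normalization-constant bookkeeping and the chain of implications $C_0\Rightarrow\KL\Rightarrow\TV\Rightarrow W_p$ on a compact domain.

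In short, you have correctly reconstructed the content of the cited proposition rather than taken a genuinely different route. Your version is more self-contained; the paper's is shorter by outsourcing the argument. The one place to be slightly more careful in your write-up is the $\KL$ case when the continuous density of $P$ vanishes somewhere: the approximation of $P$ by a distribution with strictly positive continuous density must itself be shown to converge in $\KL(P\|\cdot)$, not just in $\TV$ or $W_p$. The mixture $P_\delta=(1-\delta)P+\delta\prob$ works, with $\KL(P\|P_\delta)\leq -\log(1-\delta)\to 0$, so this is indeed routine as you say, but it is worth stating the verification explicitly.
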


\subsection{Flow-based free generator}

For the normalizing flows, we have seen in Section \ref{sec. combination} the two common approaches for modeling the generator $G$, i.e. continuous-time flow (\ref{eq. ODE generator}) or concatenation of simple diffeomorphisms.
Both approaches have an apparent issue, that they do not satisfy the universal approximation property among functions.
Since $G$ is always a diffeomorphism, it cannot approximate for instance functions that are overlapping or orientation-reversing (such as $x \mapsto |x|$ and $x \mapsto -x$).
Hence, the approach of Proposition \ref{prop. RFM W2 approximation} is not applicable.

Nevertheless, to transport probability distributions, it is sufficient to restrict to specific kinds of generators, for instance gradient fields $\nabla\phi$ according to Brennier's theorem \cite{brenier1991polar,villani2003topics}.
Using flows induced by random feature functions, we have the following result.

\begin{proposition}
\label{prop. flow approximation}
Given Assumption \ref{assume. sigmoid}, for any base distribution $\prob \in \PS_{2,ac}(\R^d)$, the following set of distributions is dense in $\PS_2(\R^d)$ with respect to the $W_2$ metric
\begin{equation*}
\PS_G = \big\{ G_V\#\prob ~\big|~ V\in\HS(\R^{d+1},\R^d) \big\}
\end{equation*}
where $G_V$ is given by Definition \ref{def. flow RFM}.
\end{proposition}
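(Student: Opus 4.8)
The plan is to reduce the statement to a two-step approximation: first show that any target $P_*\in\PS_2(\R^d)$ can be reached (in $W_2$) by a flow generated by a \emph{continuous} velocity field, and then show that such a continuous velocity field can itself be approximated by a random feature function $V\in\HS(\R^{d+1},\R^d)$ with control over the resulting distribution. For the first step I would invoke the displacement-interpolation construction already cited in the excerpt: by Theorem 5.51 of \cite{villani2003topics}, the Wasserstein geodesic from $\prob$ to $P_*$ is generated by a flow (\ref{eq. flow generator}) whose velocity field $V_*$ is (on the relevant region) a gradient field. Since a generic $P_*$ need not have nice enough $V_*$, I would first mollify: approximate $P_*$ in $W_2$ by a compactly supported, smooth, strictly positive density $\tilde P$ (standard — convolve with a small Gaussian and truncate, using that $\PS_2$ is complete under $W_2$ and that such measures are $W_2$-dense). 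For $\prob$ and $\tilde P$ both absolutely continuous with the target nice, classical Monge–Ampère / optimal-transport regularity (Caffarelli) gives a $C^1$ optimal map, hence a Lipschitz (indeed $C^\infty$ on a neighborhood of the support) velocity field $V_*(\x,\tau)$ along the geodesic, defined and continuous on a compact spacetime cylinder $K\times[0,1]$.

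The second step is the genuinely quantitative part: I want a random feature $V\in\HS(\R^{d+1},\R^d)$ with $\|V-V_*\|_{C_0(K\times[0,1])}$ small, and then I need to propagate this into a bound on $W_2(G_V\#\prob,\, G_{V_*}\#\prob)$. The existence of the $C_0$-close $V$ is exactly the universal approximation statement recorded in the excerpt right after Assumption \ref{assume. sigmoid} (the space $\HS(K',\R^d)$ is $C_0$-dense in $C(K',\R^d)$ on any compact $K'$, here $K'=K\times[0,1]\subseteq\R^{d+1}$); note this is why the proposition is stated under Assumption \ref{assume. sigmoid} — sigmoid features are genuinely dense in $C_0$ on all of $\R^{d+1}$, whereas the ReLU setting of Assumption \ref{assume. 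ReLU} has its features supported on an $l^1$-sphere, which complicates covering a full cylinder. To turn the velocity-field estimate into a distribution estimate, I would use the standard Grönwall argument: if two velocity fields are $\ep$-close in sup-norm on a region containing the trajectories, and one of them (say $V_*$) is $L$-Lipschitz in $\x$, then $\|G_V(\x_0)-G_{V_*}(\x_0)\|\le \frac{e^{L}-1}{L}\,\ep$ uniformly, hence by Proposition \ref{prop. W2 matching} (or directly by coupling through $\prob$), $W_2(G_V\#\prob,\,G_{V_*}\#\prob)\le \|G_V-G_{V_*}\|_{L^2(\prob)}\le C\ep$. One technical wrinkle: $G_V$ must be well-defined, i.e. its ODE must not blow up on $[0,1]$; this is fine because $V\in\HS$ with sigmoid activation is globally Lipschitz (its gradient is bounded since $\rho$ has bounded second moments and $\sigma'$ is bounded), so trajectories exist for all time and stay in a compact set that I can fix in advance, and I only need the $C_0$ approximation on that enlarged compact set.

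Assembling: given $\delta>0$, pick $\tilde P$ with $W_2(P_*,\tilde P)<\delta/3$; pick the regular geodesic velocity $V_*$ with $G_{V_*}\#\prob=\tilde P$; fix a compact cylinder $K\times[0,1]$ containing all geodesic trajectories starting from $\sprt\prob$ enlarged by a unit tube; choose $V\in\HS$ with $\|V-V_*\|_{C_0(K\times[0,1])}$ small enough (and $V$ globally Lipschitz, so its own trajectories also stay in $K$ for small enough approximation error, by a continuity/open-condition argument) that $W_2(G_V\#\prob,\tilde P)<\delta/3$; then a final $\delta/3$ to absorb the mollification error on the $W_2$ triangle inequality. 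The main obstacle I anticipate is the \emph{regularity} input in step one — guaranteeing that the displacement-interpolation velocity field is Lipschitz (or at least continuous on a known compact set) for a general $\prob\in\PS_{2,ac}$. If $\prob$ is only assumed absolutely continuous, Caffarelli-type regularity needs convexity of the supports and bounded densities, which one does not have for free; the clean fix is to also mollify/truncate $\prob$ to an auxiliary nice base $\tilde\prob$, run the argument between $\tilde\prob$ and $\tilde P$, and then note $W_2(G_V\#\prob,\,G_V\#\tilde\prob)\le \|\mathrm{id}\|_{\mathrm{Lip}(G_V)}\,W_2(\prob,\tilde\prob)$ is small since $G_V$ is Lipschitz — so the approximation of $\prob$ costs only a controlled extra term. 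This keeps everything within the tools already quoted in the excerpt (Brenier/Theorem 5.51 of \cite{villani2003topics}, the $C_0$-density of $\HS$, Proposition \ref{prop. W2 matching}, and Grönwall).
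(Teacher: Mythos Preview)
Your overall strategy coincides with the paper's: both use displacement interpolation (Brenier/McCann) to produce a continuous velocity field carrying $\prob$ to an approximation of $P_*$, then approximate that field in $C_0$ by some $V\in\HS$ on a compact cylinder and propagate via a Gr\"onwall bound. The differences lie in how regularity of the velocity is obtained and how the possibly unbounded support of $\prob$ is handled, and on both points your proposal has gaps.

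First, you invoke Caffarelli regularity to make the interpolation velocity Lipschitz. Caffarelli requires source and target densities bounded above and below on convex supports; for a generic $\prob\in\PS_{2,ac}$ (say a Gaussian) the density tends to zero, so either the hypothesis fails or the constants blow up as your truncation radius grows. The paper sidesteps regularity theory entirely: it takes the raw Brenier potential $\psi$ and mollifies $\psi$ itself, yielding a $C^\infty$ convex potential whose gradient still pushes $\prob$ $W_2$-close to $P_*$; it then restricts the interpolation time to $[0,1-1/n]$ so that $T_\tau^{-1}$ stays uniformly smooth. This is more elementary and works for any $\prob\in\PS_{2,ac}$.

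Second, your treatment of non-compact $\prob$ --- truncate to $\tilde\prob$ and bound $W_2(G_V\#\prob,G_V\#\tilde\prob)\le\mathrm{Lip}(G_V)\,W_2(\prob,\tilde\prob)$ --- is circular: $\mathrm{Lip}(G_V)$ depends on $\|V\|_{\HS}$, which depends on how large a cylinder you must approximate on, which depends on the truncation radius of $\tilde\prob$; you have not shown the product stays small as the radius grows. The paper's device here is worth knowing: the true interpolation velocity points \emph{inward} (toward $\sprt P_*\subset B_R$) on the annulus $B_r\setminus B_{R+1}$, so trajectories starting in $B_r$ stay in $B_r$ and the $C_0$ estimate there suffices; for mass of $\prob$ outside $B_r$, the paper adds an explicit inward-pointing correction $g_m\in\HS$ (a smoothed radial sigmoid barrier) so that $\|G_1(\x_0)\|\le\|\x_0\|$ there, and then bounds that contribution by the tail second moment $\int_{\R^d\setminus B_{r}}(\|\x\|+r)^2\,d\prob(\x)\to 0$ as $r\to\infty$. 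Finally, the case where $\prob$ does not have full support is handled separately by mixing $\prob$ with $\ep\N$.
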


\subsection{Fixed generator}

For the fixed generator representation, we analyze the normalizing flow with stochastic interpolants (\ref{eq. Radon-Nikodym velocity}) instead of the score-based diffusion models (\ref{eq. diffusion SDE}), since the former has a simpler formulation.
As the target velocity field $V_*$ has been specified, we show a stronger result than Propositions \ref{prop. RFM W2 approximation} and \ref{prop. flow approximation}, such that we can simultaneously bound the $W_2$ test error and the $L^2$ training loss.

\begin{proposition}[Proposition 3.2 of \cite{yang2022flow}]
\label{prop. diffusion universal approximation}
Given Assumption \ref{assume. sigmoid}, assume that the base distribution $\prob$ is compactly-supported and has $C^2$ density.
For any target distribution $P_*$ that is compactly-supported, let $V_*$ be the velocity field (\ref{eq. Radon-Nikodym velocity}).
Then, for any $\ep > 0$, there exists a velocity field $V_{\ep} \in \HS(\R^{d+1},\R^d)$ with induced generator $G_{\ep}=G_{V_{\ep}}$ given by Definition \ref{def. flow RFM}, such that
\begin{align*}
W_2(P_*, G_{\ep}\#\prob) &< \ep\\
\|V_*-V_{\ep}\|_{L^2(M_*)} = \sqrt{2}\sqrt{L(V_{\ep})-L(V_*)} &< \ep
\end{align*}
where $M_*$ is the joint distribution (\ref{eq. joint interpolant distribution}) and $L$ is the loss (\ref{eq. diffusion NF loss}).
\end{proposition}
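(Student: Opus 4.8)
The plan is to build a single field $V_\ep\in\HS(\R^{d+1},\R^d)$ that meets both requirements, by truncating $V_*$ in the time variable and then invoking the universal approximation property of $\HS$. Two remarks dispose of the $L^2(M_*)$ estimate. First, the computation leading to (\ref{eq. diffusion NF loss}) shows $L(V)=\tfrac12\|V-V_*\|_{L^2(M_*)}^2+L(V_*)$ for every $V$, because $V_*(\x,\tau)$ is the $M_*$-conditional expectation of the ``label'' $\x_1-\x_0$; hence $\|V_*-V_\ep\|_{L^2(M_*)}=\sqrt2\,\sqrt{L(V_\ep)-L(V_*)}$ automatically, and it remains only to make $\|V_*-V_\ep\|_{L^2(M_*)}$ small. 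Second, $V_*(\x,\tau)$ is an average of differences $\x_1-\x_0$ with $\x_0\in\sprt\prob$ and $\x_1\in\sprt P_*$, so $\|V_*\|_{L^\infty}<\infty$; as $\sprt M_*$ is compact this gives $V_*\in L^2(M_*)$, and $\HS$ is dense in $L^2(M_*,\R^d)$ by Lemma~\ref{lemma. universal approximation L2}. (The $W_2$ bound alone would follow from Proposition~\ref{prop. flow approximation}; the point here is to meet both bounds with one field, so $V_\ep$ must also be $L^2(M_*)$-close to $V_*$.)

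The only real difficulty is that, for singular $P_*$, $V_*$ degenerates as $\tau\to1$ (it is neither Lipschitz nor bounded away from $\sprt M_*$ there), so its flow is controlled only on $[0,1-\delta]$, and $V_*$ itself cannot be approximated uniformly on the whole cylinder. I would therefore fix a small $\delta>0$, write $\Phi^t_V$ for the time-$t$ flow map of a Lipschitz field $V$ (so $G_V=\Phi^1_V$), and set $W:=\chi(\tau)V_*$, where $\chi$ is smooth, $\chi\equiv1$ on $[0,1-\delta]$, $\mathrm{supp}\,\chi\subseteq[0,1-\tfrac\delta2]$, $0\le\chi\le1$. Under the hypotheses ($\prob$ compactly supported with $C^2$ density, $P_*$ compactly supported), $V_*$ is $C^1$ near the trajectories of its own flow for $\tau\le1-\tfrac\delta2$, by the explicit quotient formula $V_*(\x,\tau)=\int(\x_1-\tfrac{\x-\tau\x_1}{1-\tau})\prob(\tfrac{\x-\tau\x_1}{1-\tau})\,dP_*(\x_1)\big/\int\prob(\tfrac{\x-\tau\x_1}{1-\tau})\,dP_*(\x_1)$ together with positivity of the denominator along the flow (the needed regularity is established in \cite{yang2022flow}); extending by a bounded smooth field off a neighbourhood of these trajectories makes $W$ a bounded Lipschitz field on $\R^d\times[0,1]$ that coincides with $V_*$ on $[0,1-\delta]$. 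Since the $V_*$-flow realizes the interpolants, $\Phi^\tau_{V_*}\#\prob=P_\tau$ (continuity equation (\ref{eq. continuity equation})), and since for $(X_0,X_1)\sim\prob\times P_*$ one has $X_{1-\delta}:=\delta X_0+(1-\delta)X_1\to X_1$ in $L^2$, we get $W_2(P_{1-\delta},P_*)\le\delta\|X_0-X_1\|_{L^2}$ and $\|V_*-W\|_{L^2(M_*)}^2\le\|V_*\|_{L^\infty}^2\,M_*(\{\tau\ge1-\delta\})=\|V_*\|_{L^\infty}^2\,\delta$. Choose $\delta$ so that these and $\delta\|V_*\|_{L^\infty}$ are each below a small fixed fraction of $\ep$.

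With $\delta$ fixed, $W$ is a bounded continuous field, so by the universal approximation property of $\HS$ (density in $C(K,\R^d)$ for compact $K$) there are $V_\eta\in\HS(\R^{d+1},\R^d)$ with $V_\eta\to W$ uniformly on a large compact cylinder $K$ chosen to contain $\sprt M_*$ and every trajectory started in $\sprt\prob$ of the flows of $W$ and of the $V_\eta$ (possible since $\|W\|_{L^\infty}$ and, for small $\eta$, $\|V_\eta\|_{C_0(K)}$ are uniformly bounded, so no trajectory leaves $K$ within unit time). Each $V_\eta$ is Lipschitz by Assumption~\ref{assume. sigmoid} and $W$ has unique trajectories, so the qualitative continuous dependence of ODE flows on the vector field — via Arzel\`a--Ascoli, needing only local uniform convergence and a uniform bound, not a uniform Lipschitz constant — yields $\Phi^{1-\delta/2}_{V_\eta}\to\Phi^{1-\delta/2}_{W}$ uniformly on $\sprt\prob$. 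Then, using the triangle inequality for $W_2$ and Proposition~\ref{prop. W2 matching} (which gives $W_2(\mu\#\prob,\nu\#\prob)\le\|\mu-\nu\|_{L^2(\prob)}$), $W_2(P_*,G_{V_\eta}\#\prob)$ is at most $W_2(P_*,P_{1-\delta})$, plus $\|\Phi^{1-\delta/2}_W-\Phi^{1-\delta}_{V_*}\|_{L^2(\prob)}\le\tfrac\delta2\|V_*\|_{L^\infty}$, plus $\|\Phi^{1-\delta/2}_{V_\eta}-\Phi^{1-\delta/2}_W\|_{L^2(\prob)}\to0$, plus $\|\Phi^1_{V_\eta}-\Phi^{1-\delta/2}_{V_\eta}\|_{L^2(\prob)}\le\tfrac\delta2\|V_\eta\|_{C_0(K)}\le\delta\|V_*\|_{L^\infty}$; and $\|V_*-V_\eta\|_{L^2(M_*)}\le\|V_*-W\|_{L^2(M_*)}+\|W-V_\eta\|_{L^2(M_*)}$, the second term $\to0$ since $\sprt M_*\subseteq K$. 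Taking $\eta$ small and $V_\ep:=V_\eta$ delivers both bounds.

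The crux is the loss of regularity of $V_*$ near $\tau=1$ for singular targets: the time cutoff quarantines the well-behaved part of the flow, but one must then verify that truncation costs only $O(\sqrt\delta)$ in $L^2(M_*)$ and $O(\delta)$ in $W_2$ — which is where compactness of $\sprt\prob$ and $\sprt P_*$ enters — and, more subtly, transfer the flow of the $\HS$-approximant to that of $W$ even though the $\HS$-norms (hence Lipschitz constants) of the $V_\eta$ may blow up as $\eta\to0$; this is precisely why the argument uses the qualitative continuous-dependence theorem rather than a Gr\"onwall estimate with a uniform Lipschitz constant. A secondary technical point is establishing the $C^1$ regularity of $V_*$ for $\tau\le1-\tfrac\delta2$ from $\prob\in C^2$, together with the lower bound on the denominator in the quotient formula along the flow.
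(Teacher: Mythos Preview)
The paper does not prove this proposition: it is quoted as Proposition~3.2 of \cite{yang2022flow}, and Section~\ref{sec. proof} contains no proof for it. There is therefore nothing in the present paper to compare your argument against.

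For what it is worth, your strategy is the natural one and appears sound. The reduction of the second inequality to smallness of $\|V_*-V_\ep\|_{L^2(M_*)}$ via the identity $L(V)=\tfrac12\|V-V_*\|_{L^2(M_*)}^2+L(V_*)$ is correct. The time truncation $W=\chi(\tau)V_*$ to avoid the singularity of $V_*$ at $\tau=1$, followed by uniform approximation of $W$ on a compact cylinder by $\HS$-fields, is exactly the mechanism one expects; your bookkeeping of the $O(\sqrt\delta)$ cost in $L^2(M_*)$ and the $O(\delta)$ cost in $W_2$ is right and uses compact support in the intended way. Your most delicate point --- replacing a Gr\"onwall bound by the qualitative continuous-dependence theorem (Arzel\`a--Ascoli plus uniqueness of trajectories for the Lipschitz limit field $W$) so as not to require uniform Lipschitz constants on the approximants $V_\eta$ --- is handled correctly. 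The one place you defer to \cite{yang2022flow}, namely the $C^1$ regularity of $V_*$ and positivity of the denominator in the quotient formula for $\tau\le 1-\tfrac\delta2$, is indeed where the hypothesis $\prob\in C^2$ is consumed; that is the substantive analytic input and you are right to flag it.
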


\section{Memorization}
\label{sec. memorization}

The previous section has shown that the distribution learning models, from all known distribution representations, satisfy the universal approximation property.
In particular, they are capable of approximating the empirical distribution $P_*^{(n)}$ and thus have the potential for memorization.
This section confirms that memorization is inevitable for some of the models.
Specifically, we survey our results on the universal convergence property, that is, the ability of a model to converge to any given distribution during training.
We believe that this property holds for other models as well, and it should be satisfied by any desirable model for learning probability distributions.

\subsection{Bias-potential model}

Recall that the bias-potential model is parametrized by potential functions $V$ (\ref{eq. potential representation}) and minimizes the loss $L$ (\ref{eq. bias-potential model}).
For any compact set $K \subseteq \R^d$ with positive Lebesgue measure, let the base distribution $\prob$ be uniform over $K$.
Parametrize $V$ by random feature functions $\HS(\R^d,\R)$, and define the training trajectory $V_t$ by continuous time gradient descent (\ref{eq. population trajectory}) on $L$ with any initialization $V_0 \in \HS$.
Denote the modeled distribution by $P_t = \frac{1}{Z} e^{-V_t} \prob$.
Similarly, let $V_t^{(n)}$ be the training trajectory on the empirical loss (\ref{eq. empirical loss}) and denote its modeled distribution by $P_t^{(n)}$.

\begin{proposition}[Lemma 3.8 of \cite{yang2022potential}]
\label{prop. potential universal convergence}
Given Assumption \ref{assume. ReLU}, for any target distribution $P_*\in \PS(K)$,
if $P_t$ has only one weak limit, then $P_t$ converges weakly to $P_*$
\begin{equation*}
\lim_{t\to\infty} W_2(P_*,P_t) = 0
\end{equation*}
\end{proposition}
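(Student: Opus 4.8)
The plan is to write the training dynamics in closed form, use weak compactness of $\PS(K)$ together with the single-limit hypothesis to reduce the claim to identifying the weak limit of $P_t$, and then exclude every limit other than $P_*$ by a long-time analysis of the potential $V_t$.

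First I would record the dynamics. Writing the loss (\ref{eq. bias-potential model}) as $L(V)=\int V\,dP_*+\ln\int e^{-V}d\prob$, a direct computation gives that its functional gradient in a direction $h$ equals $\int h\,d(P_*-P_V)$ with $P_V=\frac1Ze^{-V}\prob$. Hence along the trajectory $\nabla_V L$ acts as the (possibly singular) signed measure $P_*-P_t$, and the gradient flow (\ref{eq. GD continuous time})--(\ref{eq. population trajectory}) becomes, for each $\x$,
\begin{equation*}
\frac{d}{dt}V_t(\x)=-\int k(\x,\x')\,d(P_*-P_t)(\x')=\int k(\x,\x')\,d(P_t-P_*)(\x'),
\end{equation*}
where $k$ is the kernel of Definition \ref{def. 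RKHS}. In particular $\|\tfrac{d}{dt}V_t\|_\infty\le 2\|k\|_{L^\infty(K\times K)}<\infty$, so $V_t$ is well defined for all $t$ and grows at most linearly in $t$; one also has $\tfrac{d}{dt}L(V_t)=-\text{MMD}_k(P_t,P_*)^2\le 0$, a dissipation identity that will not actually be needed below. Since $K$ is compact, $\PS(K)$ is weakly sequentially compact, so the hypothesis that $P_t$ has a unique weak cluster point forces $P_t\to P_\infty$ weakly as $t\to\infty$ for some $P_\infty\in\PS(K)$. It then suffices to show $P_\infty=P_*$, for then weak convergence on the compact set $K$ coincides with convergence in $W_2$ and we conclude $W_2(P_*,P_t)\to 0$.

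Next I would stabilize the drift. Under Assumption \ref{assume. ReLU} the kernel $k$ is continuous and, on the compact set $K$, uniformly Lipschitz in each argument (the parameters $(\w,b)$ lie on the $l^1$-sphere and ReLU is $1$-Lipschitz), so $\{k(\x,\cdot):\x\in K\}$ is bounded and equicontinuous; hence the weak convergence $P_t\to P_\infty$ upgrades to uniform convergence of $\x\mapsto\int k(\x,\x')dP_t(\x')$, i.e. $\tfrac{d}{dt}V_t\to g_\infty$ uniformly on $K$, where $g_\infty(\x):=\int k(\x,\x')\,d(P_\infty-P_*)(\x')$. Averaging in time (and using $V_0/t\to 0$) gives $V_t/t\to g_\infty$ uniformly on $K$. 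I would then argue by contradiction, supposing $g_\infty\not\equiv 0$. It cannot be a nonzero constant $c'$, since then $\int g_\infty\,d(P_\infty-P_*)=c'(1-1)=0$ would force $P_\infty=P_*$ and hence $g_\infty\equiv 0$; so $g_\infty$ is non-constant and $m:=\min_K g_\infty<\max_K g_\infty$. For small $\ep>0$ the open set $\{g_\infty>m+\ep\}$ is non-empty while $\{g_\infty<m+\ep\}$ has positive $\prob$-mass (as $\prob$ has positive density on $K$); since $V_t\approx t\,g_\infty$ uniformly, a Laplace-type estimate on $P_t=e^{-V_t}\prob/Z_t$ shows $P_t(\{g_\infty>m+\ep\})\to 0$ exponentially fast, and then the portmanteau theorem (the set is open) gives $P_\infty(\{g_\infty>m+\ep\})=0$ for every small $\ep$, i.e. $\sprt P_\infty\subseteq\{g_\infty=m\}=\argmin_K g_\infty$. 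Consequently $\int g_\infty\,dP_\infty=m\le\int g_\infty\,dP_*$, whence
\begin{equation*}
0<\text{MMD}_k(P_\infty,P_*)^2=\int g_\infty\,d(P_\infty-P_*)=m-\int g_\infty\,dP_*\le 0,
\end{equation*}
where the strict inequality uses that the kernel $k$ is characteristic, which follows from the density of $\HS(K)$ in $C(K)$ granted by the universal approximation theorem (so $\text{MMD}_k(P_\infty,P_*)^2>0$ whenever $g_\infty\ne 0$). This contradiction shows $g_\infty\equiv 0$, hence $\text{MMD}_k(P_\infty,P_*)=0$ and $P_\infty=P_*$.

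The step I expect to be the main obstacle is the Laplace estimate that converts the uniform growth $V_t/t\to g_\infty$ into the exact support statement $\sprt P_\infty\subseteq\argmin_K g_\infty$. The subtlety is that $\argmin_K g_\infty$ may have zero Lebesgue measure, so one cannot simply say that $P_t$ converges to the normalized restriction of $\prob$ to that set; the estimate has to be run on open sublevel sets, comparing the exponentially small mass on $\{g_\infty>m+\ep\}$ against the strictly larger (and positive, since $\prob$ charges every non-empty open subset of $K$) mass near the minimum, keeping the competing exponential rates strictly ordered. A secondary point requiring care is the rigorous identification of $\nabla_V L$ with the possibly-singular measure $P_*-P_t$ in the flow rule (\ref{eq. population trajectory}); here the uniform bound $\|\tfrac{d}{dt}V_t\|_\infty\le 2\|k\|_\infty$ is what keeps the dynamics and all the expectations well posed.
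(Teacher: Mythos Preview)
The paper does not actually prove this proposition: it is quoted as Lemma~3.8 of \cite{yang2022potential}, and no argument for it appears in Section~\ref{sec. proof}. So there is no in-paper proof to compare against; I can only assess your argument on its own.

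Your approach is sound and natural. The reduction from ``unique weak cluster point'' to ``$P_t\to P_\infty$ weakly'' via compactness of $\PS(K)$, the stabilisation $\partial_t V_t\to g_\infty$ uniformly on $K$ (using that $\{k(\x,\cdot):\x\in K\}$ is equi-Lipschitz under Assumption~\ref{assume. ReLU}, hence precompact in $C(K)$), the Ces\`aro step $V_t/t\to g_\infty$, the Laplace concentration of $P_t\propto e^{-V_t}\prob$ on $\argmin_K g_\infty$, and the closing identity $\int g_\infty\,d(P_\infty-P_*)=\mathrm{MMD}_k(P_\infty,P_*)^2\ge 0$ combine correctly to force $g_\infty\equiv 0$ and hence $P_\infty=P_*$. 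The separate treatment of the constant case is also correct, and the characteristic-kernel claim is justified by the density of $\HS(K)$ in $C(K)$ stated after Assumption~\ref{assume. sigmoid}.

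One caveat worth tightening: the Laplace step needs $\prob(\{g_\infty<m+\ep'\})>0$ for small $\ep'>0$, which you justify by ``$\prob$ charges every non-empty open subset of $K$''. For a \emph{general} compact $K$ with positive Lebesgue measure (the only standing hypothesis here) this can fail---take $K=[0,1]^d\cup\{p\}$ with $p$ isolated and the argmin at $p$. The clean fix is to work on $\sprt\prob$ rather than $K$ (equivalently, assume $K=\overline{\mathrm{int}\,K}$), which is the implicit setting of \cite{yang2022potential}. Your self-identified ``main obstacle''---that $\argmin_K g_\infty$ may be Lebesgue-null---is by contrast a non-issue: you only ever need positive $\prob$-mass on the \emph{open} sublevel sets $\{g_\infty<m+\ep'\}$, never on the argmin itself, and the two-rate comparison you outline is exactly how the estimate should be run.
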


\begin{corollary}[Memorization (Proposition 3.7 of \cite{yang2022potential})]
Given Assumption \ref{assume. ReLU}, the training trajectory $P_t^{(n)}$ can only converge to the empirical distribution $P_*^{(n)}$.
Moreover, both the test error and the norm of the potential function diverge
\begin{equation*}
\lim_{t\to\infty} \KL(P_*\|P_t^{(n)}) = \lim_{t\to\infty} \|V_t^{(n)}\|_{\HS} = \infty
\end{equation*}
\end{corollary}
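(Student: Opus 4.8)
The corollary is essentially a reformulation of Proposition \ref{prop. potential universal convergence} applied to the ``target'' $P_*^{(n)}$ itself, combined with the elementary fact that absolutely continuous and purely atomic measures are separated both by $\KL$ and by the RKHS norm. First I would note that the empirical loss $L^{(n)}$ is precisely the loss (\ref{eq. bias-potential model}) with $P_*$ replaced by $P_*^{(n)}\in\PS(K)$, and that the empirical gradient flow (\ref{eq. empirical trajectory}) is the population flow (\ref{eq. population trajectory}) for that loss; hence $P_t^{(n)}$ is exactly the trajectory ``$P_t$'' of Proposition \ref{prop. potential universal convergence} for the target $P_*^{(n)}$, so its only possible weak limit is $P_*^{(n)}$. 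This is the first assertion. Moreover any such convergence can only be weak, since each $P_t^{(n)}=\frac1{Z_t}e^{-V_t^{(n)}}\prob$ is absolutely continuous with positive continuous density on $K$ as long as $\|V_t^{(n)}\|_\HS<\infty$, whereas $P_*^{(n)}$ is supported on the finite set $\{\x_1,\dots,\x_n\}$; the two divergences in the statement are a quantitative version of this incompatibility.

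For $\KL(P_*\|P_t^{(n)})\to\infty$, I would argue by contradiction: if $\KL(P_*\|P_{t_k}^{(n)})\le C$ along some $t_k\to\infty$, then passing to a further subsequence $P_{t_k}^{(n)}$ converges weakly to $P_*^{(n)}$, and by the weak lower semicontinuity of relative entropy in its second argument --- immediate from the Donsker--Varadhan formula $\KL(P\|Q)=\sup_{g}\{\int g\,dP-\log\int e^{g}\,dQ\}$ over bounded continuous $g$, a supremum of functionals of $Q$ that are continuous under weak convergence --- we would obtain $\KL(P_*\|P_*^{(n)})\le C<\infty$. But $P_*$ is absolutely continuous, hence not absolutely continuous with respect to the atomic $P_*^{(n)}$, so $\KL(P_*\|P_*^{(n)})=+\infty$, a contradiction.

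For $\|V_t^{(n)}\|_\HS\to\infty$, I would bound the KL error by the norm. Since $P_t^{(n)}=\frac1{Z_t}e^{-V_t^{(n)}}\prob$ with $\prob$ uniform on $K$, one has $\KL(P_*\|P_t^{(n)})=-H(P_*)+\int V_t^{(n)}\,dP_*+\log\!\int e^{-V_t^{(n)}}\,d\prob+\mathrm{const}$, and (using $\sprt P_*\subseteq K$, the reproducing bound $|f(\x)|\le\|f\|_\HS\sqrt{k(\x,\x)}$, and that $\prob$ is a probability measure) both variable terms on the right are bounded in absolute value by $\|V_t^{(n)}\|_{C(K)}\le\sqrt{\sup_{\x\in K}k(\x,\x)}\,\|V_t^{(n)}\|_\HS$. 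Hence $\KL(P_*\|P_t^{(n)})\le C'\big(1+\|V_t^{(n)}\|_\HS\big)$, so the divergence of $\KL(P_*\|P_t^{(n)})$ just established forces $\|V_t^{(n)}\|_\HS\to\infty$. (Alternatively, if $\|V_{t_k}^{(n)}\|_\HS$ stayed bounded, then the ball of $\HS$ is equicontinuous and uniformly bounded on $K$ --- the kernel being continuous on $K\times K$ under Assumption \ref{assume. ReLU} --- so by Arzel\`a--Ascoli a subsequence converges uniformly to some $V_\infty\in\HS$, whence $P_{t_k}^{(n)}\to\frac1{Z_\infty}e^{-V_\infty}\prob$, an absolutely continuous limit, contradicting convergence to $P_*^{(n)}$.)

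The proof is short because the analytic work is carried by Proposition \ref{prop. potential universal convergence}; the only points requiring care are the two ``soft'' inputs: the weak lower semicontinuity of $\KL(P_*\|\cdot)$, and --- for the clean splitting in the third step --- the finiteness of the differential entropy $H(P_*)$, which is implicit in the standing hypothesis that $P_*$ is absolutely continuous and supported in $K$ (needed already for $\KL(P_*\|P_0^{(n)})$ to be finite, so that ``diverges'' is meaningful). I do not expect any genuinely hard estimate beyond these; the main conceptual obstacle --- that the gradient flow might in principle drift to some other limit --- is precisely what Proposition \ref{prop. potential universal convergence} rules out.
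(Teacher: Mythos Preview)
The paper does not give its own proof of this corollary; it is cited verbatim as Proposition~3.7 of \cite{yang2022potential}, and no argument appears in Section~\ref{sec. proof}. So there is no in-paper proof to compare against. Your strategy --- reducing the first assertion to Proposition~\ref{prop. potential universal convergence} applied with target $P_*^{(n)}$, then deriving the two divergences from the incompatibility between the absolutely continuous $P_t^{(n)}$ and the atomic $P_*^{(n)}$ via lower semicontinuity of $\KL(P_*\|\cdot)$ and the bound $\KL\lesssim 1+\|V\|_\HS$ --- is the natural one and very likely matches the argument in the cited reference.

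There is, however, one genuine gap you should flag. Both your contradiction arguments assume that \emph{every} subsequential weak limit of $P_t^{(n)}$ is $P_*^{(n)}$: you write ``passing to a further subsequence $P_{t_k}^{(n)}$ converges weakly to $P_*^{(n)}$'' and, in the Arzel\`a--Ascoli alternative, ``contradicting convergence to $P_*^{(n)}$''. But Proposition~\ref{prop. potential universal convergence} as stated in this paper says only that \emph{if} $P_t$ has a unique weak limit, then that limit is $P_*$; it does not assert that every limit point is $P_*$. The paper itself concedes this in the paragraph following the corollary (``we have to exclude the possibility of more than one limit point\ldots we believe that with a more refined analysis one can prove that such exotic scenario cannot happen''). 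So either the divergence claims are meant to be read conditionally on $P_t^{(n)}$ actually converging --- in which case your proof is complete --- or Lemma~3.8 of \cite{yang2022potential} in fact proves the stronger statement that every weak limit point equals the target, which is what you are implicitly invoking. Either way, the step is not automatic from what is stated here, and you should make the dependence explicit rather than slide over it.
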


In the setting of Proposition \ref{prop. potential universal convergence}, a limit point always exists, but we have to exclude the possibility of more than one limit point, e.g. the trajectory $P_t$ may converge to a limit circle.
We believe that with a more refined analysis one can prove that such exotic scenario cannot happen.

\subsection{GAN discriminator}
\label{sec. GAN convergence}

As will be demonstrated in Section \ref{sec. training}, the training and convergence of models with the free generator representation is in general difficult to analyze.
Thus, we consider the simplified GAN model from \cite{yang2022GAN} such that the representation $G\#\prob$ replaced by a density function $P$.

Consider the GAN loss (\ref{eq. dual norm squared}), and parametrize the discriminator by $D\in\HS(\R^d,\R)$.
Equivalently, we set the penalty term $\|D\|$ to be the RKHS norm $\|\cdot\|_{\HS}$ and the loss (\ref{eq. dual norm squared}) becomes
\begin{align}
L(P) &= \sup_{D} \int D(\x) d(P-P_*)(\x) - \|D\|^2_{\HS} \nonumber \\
&= \max_a \iint a(\w,b)\sigma(\w\cdot\x+b) d\rho(\w,b) d(P-P_*)(\x) - \|a\|^2_{L^2(\rho)} \nonumber \\
&= \frac{1}{2} \iint k(\x,\x') d(P-P_*)(\x)d(P-P_*)(\x') \label{eq. MMD GAN}
\end{align}
where $k$ is the kernel function from Definition \ref{def. RKHS}.
This loss is an instance of the maximum mean discrepancy \cite{gretton2012kernel}.
Model the density $P$ as a function in $L^2([0,1]^d)$, and define the training trajectory $P_t$ by continuous time gradient descent
\begin{equation}
\label{eq. MMD GAN training}
\frac{d}{dt} P_t = -\nabla_P L(P_t) = -k*(P_t-P_*)
\end{equation}
where $(k*P)(\x)=\int k(\x,\x') dP(\x')$.
Let $\Pi_{\Delta}$ be the nearest point projection from $L^2([0,1]^d)$ to the convex subset $\PS([0,1]^d) \cap L^2([0,1]^d)$.
We measure the test error by $W_2(P_*,\Pi_{\Delta}(P))$.

\begin{proposition}[Lemma 13 of \cite{yang2022GAN}]
Given Assumption \ref{assume. ReLU}, for any target distribution $P_* \in \PS([0,1]^d)$ and any initialization $P_0 \in L^2([0,1]^d)$, the distribution $\Pi_{\Delta}(P_t)$ converges weakly to $P_*$
\begin{equation*}
\lim_{t\to\infty} W_2\big(P_*,\Pi_{\Delta}(P_t)\big) = 0
\end{equation*}
\end{proposition}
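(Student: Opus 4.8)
The flow (\ref{eq. MMD GAN training}) is an affine-linear ODE in the Hilbert space $L^2([0,1]^d)$ driven by the integral operator $K$ of the kernel $k$ (Definition \ref{def. RKHS}); under Assumption \ref{assume. ReLU} this $K$ is compact, self-adjoint and positive semidefinite, with a bounded continuous kernel. The plan is to (i) show that the loss $L(P_t)$ of (\ref{eq. MMD GAN}), which equals $\tfrac12\iint k\, d(P_t-P_*)\, d(P_t-P_*)$, decays to $0$ along the flow by a spectral argument; (ii) deduce that $\langle D, P_t\rangle_{L^2}\to \int D\, dP_*$ for every $D$ in the RKHS $\HS$; and (iii) transfer this to the projected measure $\mu_t := \Pi_\Delta(P_t)$, so that, since $\HS$ is dense in $C([0,1]^d)$ and $\mu_t\in\PS([0,1]^d)$ with $[0,1]^d$ compact, $\mu_t\rightharpoonup P_*$ weakly and hence $W_2(P_*,\mu_t)\to 0$.

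For step (i), boundedness of $k$ makes $\iint k\, d(P_t-P_*)\, d(P_t-P_*)$ finite, so the vector $q_*$ whose coefficients in an eigenbasis $(e_i,\lambda_i)$ of $K$ are $\sqrt{\lambda_i}\int e_i\, dP_*$ belongs to $L^2\cap(\ker K)^\perp$. Set $w_t := K^{1/2}P_t - q_*$, which satisfies $2L(P_t) = \|w_t\|_{L^2}^2$. Applying $K^{1/2}$ to (\ref{eq. MMD GAN training}) gives $\tfrac{d}{dt}w_t = -Kw_t$, hence $w_t = e^{-tK}w_0$; since $w_0\in(\ker K)^\perp$ and $e^{-tK}$ converges strongly to the orthogonal projection onto $\ker K$, we get $\|w_t\|_{L^2}\to 0$, i.e. $L(P_t)\to 0$. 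Step (ii) is then immediate from the reproducing property: for $D\in\HS$ one has $|\langle D, P_t\rangle_{L^2}-\int D\, dP_*| = |\langle D,\, \int k(\cdot,\x)\, d(P_t-P_*)(\x)\rangle_{\HS}| \le \|D\|_{\HS}\sqrt{2L(P_t)}\to 0$.

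For step (iii) it suffices to show $\langle D, \mu_t\rangle_{L^2}\to\int D\, dP_*$ for $D\in\HS$, and by step (ii) this reduces to $\langle D, P_t-\mu_t\rangle_{L^2}\to 0$ for each fixed $D\in\HS$. Here I would use the variational characterization of the nearest-point projection onto the closed convex set $\PS([0,1]^d)\cap L^2([0,1]^d)$: the residual $P_t-\mu_t$ lies in its normal cone, so $P_t-\mu_t = c_t\mathbf{1}-g_t$ with $c_t\in\R$ and $g_t\ge 0$ vanishing on $\{\mu_t>0\}$, whence $\langle D, P_t-\mu_t\rangle_{L^2} = c_t\int D - \int D\, g_t$. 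It then remains to prove $c_t\to 0$ and $\|g_t\|_{L^1}\to 0$, equivalently that $\int P_t\to 1$ and that the negative part of $P_t$ vanishes. The mass statement follows from step (i) once one checks $\mathbf{1}\in\HS([0,1]^d) = \mathrm{ran}(K^{1/2})$ (true for the ReLU kernel on a bounded domain, since the affine functions lie in the RKHS), because then $\int(P_t-P_*) = \langle w_t,\, K^{-1/2}\mathbf{1}\rangle_{L^2}\to 0$.

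I expect step (iii), and within it the control of the negative part of $P_t$, to be the main obstacle. The subtlety is that when $P_*$ is singular — as for the empirical distribution $P_*^{(n)}$ — the coefficients of $P_t$ converge to those of $P_*$ but $\|P_t\|_{L^2}\to\infty$, so the projection error cannot be bounded crudely by $\|P_t-\rho\|_{L^2}$ for any fixed density $\rho$; one must instead exploit the structure of the semigroup $e^{-tK}$ together with the positivity of $k$ to show that $P_t$ asymptotically enters the cone of nonnegative, unit-mass functions. Steps (i)--(ii) are the short, essentially spectral-theoretic core of the argument.
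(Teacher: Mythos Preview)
The paper does not give its own proof of this statement; it is quoted from \cite{yang2022GAN}. So there is no in-paper argument to compare against, and I evaluate your proposal on its own terms.

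Steps (i) and (ii) are correct and are the natural spectral core. The identity $2L(P_t)=\|K^{1/2}P_t-q_*\|_{L^2}^2$ with $q_*=K^{-1/2}(k*P_*)\in(\ker K)^\perp$ is legitimate because the kernel mean embedding $k*P_*$ always lies in $\HS=\mathrm{ran}(K^{1/2})$; the semigroup computation then gives $L(P_t)\to 0$, and (ii) is just the dual description of the MMD. Your observation that $\mathbf{1}\in\HS([0,1]^d)$ under Assumption~\ref{assume. ReLU} (via the affine features near $(\w,b)=(0,1)$) is also fine and does yield $\int P_t\to 1$.

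Step (iii), however, has a genuine gap, and it is more serious than you flag. First, the ``equivalently'' is false. With $\mu_t=(P_t-c_t)_+$ and $g_t=(c_t-P_t)_+$ one only has the single relation $c_t-\int g_t=\int P_t-1$; the pair $(c_t,\int g_t)$ is not determined by $\int P_t$ and $\int(P_t)_-$. For instance, for $P_t=(t{+}1)\mathbf{1}_{[0,1/t]}-\tfrac{1/t}{1-1/t}\mathbf{1}_{[1/t,1]}$ on $[0,1]$ one has $\int P_t=1$ and $\int(P_t)_-=1/t\to 0$, yet $c_t\equiv 1$ and $\int g_t\equiv 1$. In this example $\langle D,P_t-\mu_t\rangle = c_t\!\int D-\int Dg_t\to 0$ still holds, but only by cancellation, not because the two pieces vanish separately. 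Since concentration of $P_t$ onto a small set is exactly what happens when $P_*$ is singular, this regime is not exotic; your proposed reduction to ``$c_t\to 0$ and $\|g_t\|_{L^1}\to 0$'' would then fail even when the conclusion is true.

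Second, the mechanism you suggest for the remaining obstacle --- positivity of $k$ forcing $P_t$ into the nonnegative cone --- does not work. Positivity of $k$ means $K$ maps nonnegative functions to nonnegative functions, but then $e^{-tK}=I-tK+\cdots$ has no reason to be positivity-preserving, and for singular $P_*$ the truncations $\sum_i (1-e^{-t\lambda_i})\,e_i(\x_*)\,e_i$ that build up a Dirac typically oscillate in sign. So the passage from $P_t$ to $\Pi_{\Delta}(P_t)$ cannot be handled by a sign estimate on $P_t$; it needs an argument that couples the variational inequality of the $L^2$ projection with the specific structure of the flow (e.g.\ working directly with $L(\mu_t)$ or with weak limit points of $\mu_t$ and lower semicontinuity of the MMD), and that argument is absent from the proposal.
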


\section{Generalization Error}
\label{sec. generalization}

Despite that Sections \ref{sec. approximation} and \ref{sec. memorization} have demonstrated that distribution learning models have the capacity for memorization, this section shows that solutions with good generalization are still achievable.
For the four classes highlighted in Table \ref{table. categorization}, we show that their models escape from the curse of dimensionality with either early stopping or regularization.
Specifically, their generalization errors scale as $O(n^{-\alpha})$ where $\alpha$ are absolute constants, instead of dimension-dependent terms such as $\alpha/d$.

These results depend on either of the two forms of regularizations
\begin{itemize}
\item Implicit regularization: As depicted in Figure \ref{fig: generalizable path} (Left), the training trajectory $P_t$ comes very close to the hidden target distribution $P_*$ before eventually turning towards the empirical distribution $P_*^{(n)}$
\item Explicit regularization: Analogous to the above picture, we consider some regularized loss $L^{(n)} + R(\lambda)$ with strength $\lambda \geq 0$.
With an appropriate regularization strength, the minimizer $P_{\lambda}$ becomes very close to the hidden target $P_*$.
\end{itemize}
The mechanism underlying both scenarios is that the function representations of the models are insensitive to the sampling error $P_*-P_*^{(n)}$.
Thus, we resolve the seeming paradox between good generalization and the inevitable memorization.

Without a good function representation, this behavior cannot be guaranteed.
For instance, as argued in \cite{yang2022GAN}, if a distribution $P_t$ is trained by Wasserstein gradient flow (i.e. without any function parametrization) on the empirical loss
\begin{equation*}
L^{(n)}(P) = W_2(P_*^{(n)}, P)
\end{equation*}
and if the initialization $P_0 \neq P_*$ is in $\PS_{2,ac}(\R^d)$, then the training trajectory $P_t$ follows the $W_2$ geodesic that connects $P_0$ and $P_*^{(n)}$.
Since the Wasserstein manifold has positive curvature \cite{ambrosio2008gradient}, the geodesic in general can never come close to the hidden target $P_*$, as depicted in Figure \ref{fig: generalizable path} (Right).

\begin{figure}[H]
\centering
\subfloat{\includegraphics[scale=0.27]{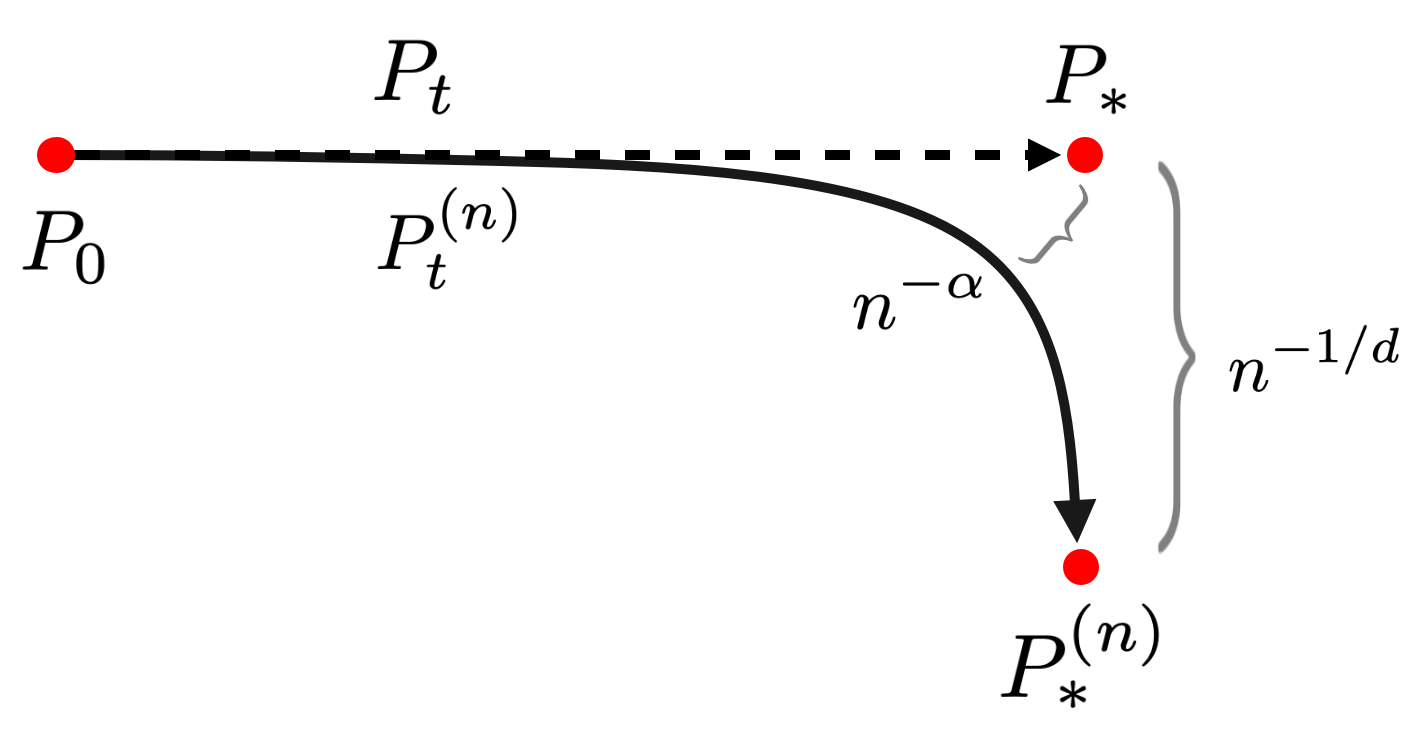}}
\quad
\subfloat{\includegraphics[scale=0.27]{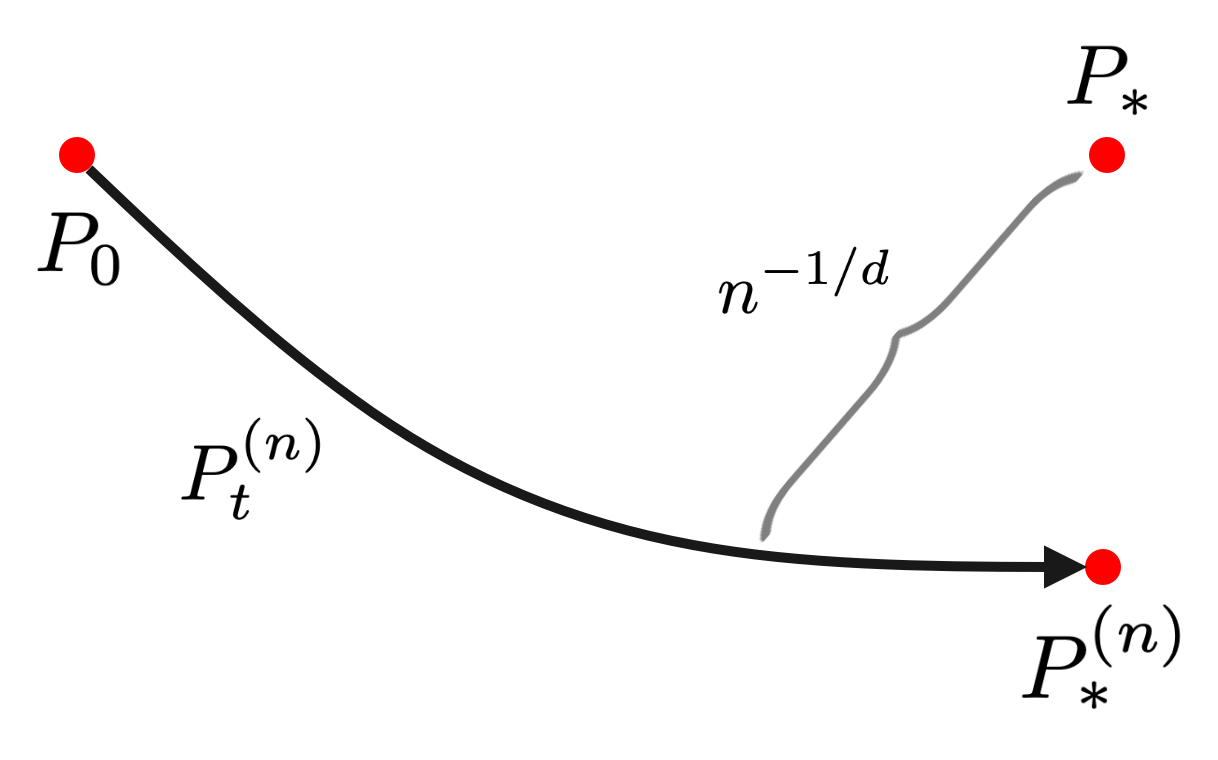}}
\caption{Left: Implicit regularization enables $P^{(n)}_t$ to stay close to $P_t$ and thus approximate $P_*$ better than $P_*^{(n)}$. Right: Wasserstein gradient flow on $W_2$ loss.}
\label{fig: generalizable path}
\end{figure}

In the following five subsections, we survey our results on three models that have rigorous proofs, and then analyze two models with heuristic calculations.

\subsection{Bias-potential model}

We start with the bias-potential model (\ref{eq. bias-potential model}) since
it enjoys the most convexity and thus the arguments are the most transparent.

Consider the domain $\Omega=[0,1]^d$ with base distribution $\prob \in \PS(\Omega)$.
Let $V_t, V^{(n)}_t \in \HS$ be potential functions trained on the population loss $L$ (\ref{eq. bias-potential model}) and the empirical loss $L^{(n)}$ (\ref{eq. empirical loss}) respectively, using continuous time gradient descent (\ref{eq. population trajectory}).
Denote their induced distributions (\ref{eq. potential representation}) by $P_t,P^{(n)}_t$.

\begin{theorem}[Theorem 3.3 of \cite{yang2022potential}]
\label{thm. bias potential generalization}
Given Assumption \ref{assume. ReLU}, assume that the target distribution $P_*$ has the form (\ref{eq. potential representation}) with a potential function $V_* \in \HS$.
For any $\delta >0$, with probability $1-\delta$ over the sampling of $P_*^{(n)}$,
\begin{equation}
\label{eq. bias potential generalization error}
\KL\big(P_*\|P_t^{(n)}\big) \leq \frac{\|V_*\|^2_{\mathcal{H}}}{2t} + \frac{8\sqrt{2\log 2d} + 2\sqrt{2\log(2/\delta)}}{\sqrt{n}} t
\end{equation}
\end{theorem}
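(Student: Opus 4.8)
The plan is to read the left-hand side as the excess \emph{population} risk of the trained potential and split it into an optimization error on the empirical loss plus a statistical error, exactly as in the a priori estimates for supervised learning. First, since $\prob$ is uniform on $\Omega=[0,1]^d$ its density equals $1$, so for every $V\in\HS$ one has the identity $\KL(P_*\|P_V)=L(V)+\int\log P_*\,dP_*$ with $L$ the population loss (\ref{eq. bias-potential model}); taking $V=V_*$ and using $P_*=P_{V_*}$ gives $L(V_*)=-\int\log P_*\,dP_*$, hence $\KL(P_*\|P_t^{(n)})=L(V_t^{(n)})-L(V_*)$. I then insert the empirical loss $L^{(n)}$ (\ref{eq. empirical loss}) and telescope this into the optimization error $L^{(n)}(V_t^{(n)})-L^{(n)}(V_*)$ together with the two generalization gaps $(L-L^{(n)})(V_t^{(n)})$ and $(L^{(n)}-L)(V_*)$. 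The point is that the log-partition term cancels in $L-L^{(n)}$, so each gap equals $\pm\int V\,d(P_*-P_*^{(n)})$ and is bounded by $\|V\|_\HS\,\mathcal R_n$, where $\mathcal R_n:=\sup_{\|V\|_\HS\le1}\int V\,d(P_*-P_*^{(n)})$ is the dual RKHS norm of the sampling error.

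Next I control the optimization error and the trajectory norm. The map $\a\mapsto L^{(n)}(f_\a)$ is convex (it is a convex functional of $f_\a$, which is linear in $\a$) and the dynamics (\ref{eq. GD continuous time}) applied to $L^{(n)}$ is ordinary gradient flow on it started at $\a_0=\mathbf{0}$; the standard comparison $\frac{d}{dt}\tfrac12\|\a_t-y\|_{L^2(\rho)}^2\le-(L^{(n)}(f_{\a_t})-L^{(n)}(f_y))$ together with monotonicity of $t\mapsto L^{(n)}(f_{\a_t})$ gives, for any $y$ agreeing with $V_*$ on $\Omega$, $L^{(n)}(V_t^{(n)})-L^{(n)}(V_*)\le\|y\|_{L^2(\rho)}^2/(2t)$, and passing to the near-minimal representation produces the $\|V_*\|_\HS^2/(2t)$ term. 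For the norm I invoke Assumption \ref{assume. ReLU}: on $\Omega=[0,1]^d$ every feature $\sigma(\w\cdot\x+b)$ with $\|\w\|_1+|b|=1$ takes values in $[0,1]$, so the velocity $\dot\a_t(\w,b)=\int\sigma(\w\cdot\x+b)\,d(P_t^{(n)}-P_*^{(n)})(\x)$ in (\ref{eq. GD continuous time}) is a difference of two quantities in $[0,1]$; hence $\|\dot\a_t\|_{L^2(\rho)}\le1$ uniformly, so $\|V_t^{(n)}\|_\HS\le\|\a_t\|_{L^2(\rho)}\le t$. Combining,
\[
\KL(P_*\|P_t^{(n)})\ \le\ \frac{\|V_*\|_\HS^2}{2t}+\big(\|V_t^{(n)}\|_\HS+\|V_*\|_\HS\big)\,\mathcal R_n\ \le\ \frac{\|V_*\|_\HS^2}{2t}+\big(t+\|V_*\|_\HS\big)\,\mathcal R_n,
\]
and for $t\ge\|V_*\|_\HS$ — the regime relevant to early stopping — the last factor is at most $2t$.

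It remains to bound $\mathcal R_n$ with probability $1-\delta$, and this is the step that defeats the curse of dimensionality; note that $\mathcal R_n$ does not depend on $t$, so a single good event yields the bound for all $t$ at once. I would symmetrize over the RKHS ball, apply the contraction lemma for the $1$-Lipschitz ReLU, and then use Massart's finite-class lemma together with the fact that $\{\w\cdot\x+b:\|\w\|_1+|b|\le1\}$ is the convex hull of the $2(d+1)$ functions $\pm x_j,\pm1$, each bounded by $1$ on $\Omega$; this gives an empirical Rademacher complexity of order $\sqrt{\log(2d)/n}$. McDiarmid's bounded-difference inequality — the map $(\x_1,\dots,\x_n)\mapsto\mathcal R_n$ has increments $\le 2/n$ because $|V(\x)|\le\|V\|_\HS\sqrt{k(\x,\x)}\le1$ — then upgrades this to a high-probability bound $\mathcal R_n\lesssim(\sqrt{2\log 2d}+\sqrt{2\log(2/\delta)})/\sqrt n$, and substituting into the display above gives (\ref{eq. bias potential generalization error}) with the stated constant.

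The main obstacle is the coupling of the two error terms: the optimization error decays like $1/t$ while the statistical error grows like $t\,\mathcal R_n$, precisely because the trajectory norm grows linearly in $t$, so one must pin down that linear growth tightly enough that stopping at $t\sim\|V_*\|_\HS\,n^{1/4}$ balances the two and delivers the dimension-free rate $n^{-1/4}$ (up to logarithms). The convexity of the bias-potential loss makes every piece transparent here, which is why this model is treated first; but the linear norm-growth estimate and the Rademacher bound for random-feature classes are exactly the two templates that must be re-established, with considerably more work, for the non-convex GAN and normalizing-flow models in the subsequent subsections.
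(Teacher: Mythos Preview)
Your argument is correct but the decomposition is genuinely different from the one the paper sketches. You telescope through the \emph{empirical} loss,
\[
L(V_t^{(n)})-L(V_*)=\big[L^{(n)}(V_t^{(n)})-L^{(n)}(V_*)\big]+(L-L^{(n)})(V_t^{(n)})+(L^{(n)}-L)(V_*),
\]
bound the first bracket by convexity of $\a\mapsto L^{(n)}(f_\a)$, and control the two deviation terms via the linear identity $L-L^{(n)}=\int(\cdot)\,d(P_*-P_*^{(n)})$ together with the norm growth $\|V_t^{(n)}\|_{\HS}\le t$ obtained from the pointwise bound $|\dot a_t|\le 1$. The paper instead compares the two \emph{trajectories}: it first bounds the population training error $L(V_t)-L(V_*)\le\|V_*\|_{\HS}^2/(2t)$ (Proposition~\ref{prop. bias potential RFM}) and then controls $\|V_t-V_t^{(n)}\|_{\HS}$ by observing that $\nabla_a(L-L^{(n)})$ is a \emph{constant} vector in $L^2(\rho)$ of norm at most $\mathcal R_n$, so that $\|a_t-a_t^{(n)}\|$ grows at rate $\mathcal R_n$ (the cross term $\langle a_t-a_t^{(n)},\nabla L(a_t)-\nabla L(a_t^{(n)})\rangle$ is nonnegative by convexity). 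Your route is the classical a~priori estimate and never introduces the population trajectory; the paper's route makes the implicit-regularization picture of Figure~\ref{fig: generalizable path} literal. Both rely on exactly the same three ingredients---convexity, the affine structure of $L-L^{(n)}$, and the Rademacher bound for the $\HS$-ball---and yield the same rate. One small caveat: your bound reads $(t+\|V_*\|_{\HS})\,\mathcal R_n$ and matches the stated $2t\,\mathcal R_n$ only for $t\ge\|V_*\|_{\HS}$, whereas the trajectory-comparison route gives a factor proportional to $t$ for all $t>0$.
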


\begin{corollary}
\label{cor. bias potential early stopping}
Given the condition of Theorem \ref{thm. bias potential generalization}, if we choose an early-stopping time $T$ such that
\begin{equation*}
T = \Theta\Big(\|V_*\|_{\mathcal{H}}\big(\frac{n}{\log d}\big)^{1/4}\Big)
\end{equation*}
then the test error satisfies
\begin{equation*}
\KL\big(P_*\|P_T^{(n)}\big) \lesssim \|V_*\|_{\mathcal{H}} \Big(\frac{\log d}{n}\Big)^{1/4}
\end{equation*}
\end{corollary}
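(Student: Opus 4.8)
The plan is to simply optimize the bound in Theorem \ref{thm. bias potential generalization} over the training time $t$. Writing the right-hand side of \eqref{eq. bias potential generalization error} as
\[
g(t) = \frac{\|V_*\|^2_{\HS}}{2t} + \frac{C_{d,\delta}}{\sqrt{n}}\, t,
\qquad C_{d,\delta} := 8\sqrt{2\log 2d} + 2\sqrt{2\log(2/\delta)},
\]
I would first note that $g$ is a sum of a term decreasing in $t$ and a term increasing in $t$, so it is minimized by balancing the two. Setting the derivative to zero, $-\|V_*\|^2_{\HS}/(2t^2) + C_{d,\delta}/\sqrt{n} = 0$, yields the optimal time
\[
T = \|V_*\|_{\HS}\,\Big(\frac{\sqrt{n}}{2\,C_{d,\delta}}\Big)^{1/2}
 = \Theta\!\Big(\|V_*\|_{\HS}\,\big(\tfrac{n}{\log d}\big)^{1/4}\Big),
\]
where in the last step I absorb the constant $2$ and use that $C_{d,\delta} = \Theta(\sqrt{\log d})$ for fixed $\delta$ (indeed $C_{d,\delta}$ is bounded above and below by absolute multiples of $\sqrt{\log d}$ once $d \geq 2$, with the $\delta$-dependence contributing only an additive constant). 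This matches the stated choice of $T$ up to the absolute constants hidden in $\Theta$.

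Next I would substitute $T$ back into $g$. At the balancing point the two terms are equal, so
\[
g(T) = 2\cdot \frac{C_{d,\delta}}{\sqrt{n}}\,T
 = 2\,\frac{C_{d,\delta}}{\sqrt{n}}\,\|V_*\|_{\HS}\Big(\frac{\sqrt{n}}{2\,C_{d,\delta}}\Big)^{1/2}
 = \sqrt{2}\,\|V_*\|_{\HS}\,\frac{\sqrt{C_{d,\delta}}}{n^{1/4}}.
\]
Since $\sqrt{C_{d,\delta}} = \Theta\big((\log d)^{1/4}\big)$, this gives
\[
\KL\big(P_*\|P_T^{(n)}\big) \leq g(T) \lesssim \|V_*\|_{\HS}\Big(\frac{\log d}{n}\Big)^{1/4},
\]
which is exactly the claimed bound. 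One subtlety worth a line: the theorem's bound holds on an event of probability $1-\delta$ for each fixed stopping rule, and since $T$ above depends only on $n, d, \delta$ and the (assumed known, or at least fixed) quantity $\|V_*\|_{\HS}$ — not on the random sample — the same high-probability event carries over directly, so no union bound or uniform-in-$t$ argument is needed.

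There is no real obstacle here; the corollary is a one-variable optimization of an already-established bound. The only things to be careful about are bookkeeping: tracking that $C_{d,\delta} \asymp \sqrt{\log d}$ so that the $\Theta$ and $\lesssim$ statements are literally correct, and observing that $T$ is a deterministic function of the data size rather than of the samples themselves, so that the probabilistic guarantee of Theorem \ref{thm. bias potential generalization} applies verbatim at $t = T$.
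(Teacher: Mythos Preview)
Your proposal is correct and is exactly the intended argument: the paper does not even spell out a separate proof of this corollary, treating it as an immediate one-variable optimization of the bound \eqref{eq. bias potential generalization error}. Your bookkeeping on $C_{d,\delta}\asymp\sqrt{\log d}$ and the remark that $T$ is deterministic (so no union bound is needed) are both accurate.
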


Hence, the generalization error escapes from the curse of dimensionality.

The two terms in the upper bound (\ref{eq. bias potential generalization error}) are the training error and generalization gap.
The former is a consequence of convexity, while the latter follows from the observation that the landscapes of $L$ and $L^{(n)}$ differ very little
\begin{align*}
\Big\|\frac{\delta L-L^{(n)}}{\delta a}\Big\|_{L^2(\rho)} &\leq \sup_{\|V\|_{\HS}\leq 1} \Big| \int V(\x) d(P_*-P_*^{(n)})(\x) \Big|\\
&\lesssim Rad_n\big(\{\|V\|_{\HS} \leq 1\}\big) + \frac{\sqrt{\log 1/\delta}}{\sqrt{n}}
\end{align*}
where the $Rad_n$ term is the Rademacher complexity and scales as $O(1/\sqrt{n})$.
Then, $V_t, V^{(n)}_t$ remain close during training
\begin{equation*}
\|V_t-V_t^{(n)}\|_{\HS} \lesssim \int_0^t \Big\|\frac{\delta L-L^{(n)}}{\delta a}\Big\|_{L^2(\rho)} \lesssim \frac{t}{\sqrt{n}}
\end{equation*}
which confirms the depiction in Figure \ref{fig: generalizable path} (Left).

\vs
In the meantime, the bias potential model also generalizes well in the explicit regularization setting.
Here we consider the Ivanov and Tikhonov regularizations \cite{ivanov1976theory,tikhonov1977solutions,oneto2016tikhonov}.

\begin{proposition}[Proposition 3.9 of \cite{yang2022potential}]
\label{prop. bias potential Ivanov}
Given Assumption \ref{assume. ReLU}, assume that the target $P_*$ is generated by a potential $V_* \in \HS$.
Let $V_R^{(n)}$ be the minimizer of the regularized loss
\begin{equation*}
\min_{\|V\|_{\HS}\leq R} L^{(n)}(V)
\end{equation*}
where $R$ is any constant such that $R \geq \|V_*\|_{\HS}$.
For any $\delta>0$, with probability $1-\delta$ over the sampling of $P_*^{(n)}$, the distribution $P^{(n)}_R$ generated by the potential $V^{(n)}_R$ satisfies
\begin{equation*}
\KL(P_*\|P^{(n)}_R) \leq \frac{8\sqrt{2\log 2d} + 2\sqrt{2\log(2/\delta)}}{\sqrt{n}} R
\end{equation*}
\end{proposition}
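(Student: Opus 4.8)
The plan is to convert the suboptimality of the Ivanov-regularized estimator directly into a $\KL$ bound using the exact identity that already underlies Theorem~\ref{thm. bias potential generalization}, and then close the argument with a one-line excess-risk decomposition. First I would record that, since both the target $P_*=\frac{1}{Z_*}e^{-V_*}\prob$ and the estimated distribution $P_R^{(n)}=\frac{1}{Z}e^{-V_R^{(n)}}\prob$ are of the Boltzmann form (\ref{eq. potential representation}) with potentials in $\HS$, the Radon--Nikodym derivative is $\frac{dP_*}{dP_R^{(n)}}=\frac{Z}{Z_*}\,e^{V_R^{(n)}-V_*}$, so that
\begin{equation*}
\KL(P_*\|P_R^{(n)}) = L(V_R^{(n)}) - L(V_*),
\end{equation*}
where $L$ is the population loss (\ref{eq. bias-potential model}). (The minimizer $V_R^{(n)}$ exists because $L^{(n)}$ is convex and norm-continuous — hence weakly lower semicontinuous — on the weakly compact ball $\{\|V\|_{\HS}\leq R\}$, using that the reproducing property bounds $\|V\|_{C_0([0,1]^d)}$ by $\|V\|_{\HS}$ times $\sup_{\x}\sqrt{k(\x,\x)}$.)

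Next I would split the excess population loss in the usual way:
\begin{equation*}
L(V_R^{(n)}) - L(V_*) = \big[L(V_R^{(n)}) - L^{(n)}(V_R^{(n)})\big] + \big[L^{(n)}(V_R^{(n)}) - L^{(n)}(V_*)\big] + \big[L^{(n)}(V_*) - L(V_*)\big].
\end{equation*}
The middle bracket is $\leq 0$: the hypothesis $R\geq\|V_*\|_{\HS}$ makes $V_*$ feasible for $\min_{\|V\|_{\HS}\leq R}L^{(n)}$, which $V_R^{(n)}$ solves. The first and third brackets are where the sampling error enters; crucially, the entropy term $\ln\int e^{-V}d\prob$ is identical in $L$ and $L^{(n)}$, so $L(V)-L^{(n)}(V)=\int V\,d(P_*-P_*^{(n)})$ is linear in $V$, and since $\|V_R^{(n)}\|_{\HS}\leq R$ and $\|V_*\|_{\HS}\leq R$ we obtain
\begin{equation*}
\KL(P_*\|P_R^{(n)}) \leq 2R\,\sup_{\|V\|_{\HS}\leq 1}\Big|\int V\,d(P_*-P_*^{(n)})\Big|.
\end{equation*}

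The only quantitative ingredient left is the uniform deviation over the $\HS$-unit ball, and here I would invoke exactly the estimate already used in the proof of Theorem~\ref{thm. bias potential generalization}: with probability $1-\delta$,
\begin{equation*}
\sup_{\|V\|_{\HS}\leq 1}\Big|\int V\,d(P_*-P_*^{(n)})\Big| \leq \frac{4\sqrt{2\log 2d}+\sqrt{2\log(2/\delta)}}{\sqrt{n}}.
\end{equation*}
This is the standard symmetrization-plus-bounded-differences argument — under Assumption~\ref{assume. ReLU} the features $\sigma(\w\cdot\x+b)$ are bounded by $1$ on $[0,1]^d$, so the McDiarmid increment is $2/n$ — together with the dimension-free Rademacher-complexity estimate for $\{\|V\|_{\HS}\leq1\}$ that exploits the parametrization of the ReLU features by the $l^1$-sphere and thereby yields a $\sqrt{\log d}$ rather than $\sqrt{d}$ factor. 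Substituting it into the previous display gives precisely $\KL(P_*\|P_R^{(n)})\leq\frac{8\sqrt{2\log 2d}+2\sqrt{2\log(2/\delta)}}{\sqrt{n}}R$. Since the first three steps are elementary, the real obstacle is this last estimate; but as it is already established in \cite{yang2022potential} for Theorem~\ref{thm. bias potential generalization}, for the present proposition it can be reused verbatim, and what is genuinely specific here is only the exactness of the $\KL$–excess-loss identity and the feasibility of $V_*$, both consequences of the Boltzmann representation together with the assumption $R\geq\|V_*\|_{\HS}$.
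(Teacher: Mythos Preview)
Your proposal is correct and follows essentially the same route as the paper: the proof of Proposition~\ref{prop. bias potential Tikhonov} in the paper is explicitly described as a slight modification of the Ivanov proof and uses exactly the same ingredients you identify --- the identity $\KL(P_*\|P_R^{(n)}) = L(V_R^{(n)}) - L(V_*)$, the feasibility of $V_*$ in the constrained problem, and the uniform deviation bound $|L(a)-L^{(n)}(a)| \leq \frac{4\sqrt{2\log 2d}+\sqrt{2\log(2/\delta)}}{\sqrt{n}}\|a\|_{L^2(\rho)}$ (equation (17) of \cite{yang2022potential}), applied once at $V_R^{(n)}$ and once at $V_*$, each with norm at most $R$.
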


\begin{proposition}
\label{prop. bias potential Tikhonov}
Given the condition of Proposition \ref{prop. bias potential Ivanov},
for any $\delta>0$, let $V_{\lambda}^{(n)}$ be the minimizer to the regularized loss
\begin{equation*}
\min_{V \in \HS} L^{(n)}(V) + \frac{\lambda}{\sqrt{n}} \|V\|_{\HS}, \quad \lambda \geq 4\sqrt{2\log 2d} + \sqrt{2\log(2/\delta)}
\end{equation*}
With probability $1-\delta$ over the sampling of $P_*^{(n)}$, the distribution $P^{(n)}_{\lambda}$ generated by $P^{(n)}_R$ satisfies
\begin{equation*}
\KL(P_*\|P^{(n)}_{\lambda}) \leq \frac{2\lambda\|V_*\|_{\HS}}{\sqrt{n}}
\end{equation*}
\end{proposition}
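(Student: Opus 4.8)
The plan is to compare the Tikhonov-regularized minimizer $V^{(n)}_{\lambda}$ against the true potential $V_*$ using the standard ``optimality-plus-generalization-gap'' argument, exactly parallel to the proof of Proposition \ref{prop. bias potential Ivanov}. First I would recall the key Rademacher complexity estimate that underlies Theorem \ref{thm. bias potential generalization} and Proposition \ref{prop. bias potential Ivanov}: with probability $1-\delta$ over the sampling of $P_*^{(n)}$, one has the uniform deviation bound
\begin{equation*}
\sup_{\|V\|_{\HS}\leq 1} \Big| \int V \, d(P_* - P_*^{(n)}) \Big| \leq \frac{4\sqrt{2\log 2d} + \sqrt{2\log(2/\delta)}}{\sqrt{n}} =: \frac{\lambda_0}{\sqrt{n}},
\end{equation*}
so that for \emph{any} $V \in \HS$ we have $|L(V) - L^{(n)}(V)| = |\int V\, d(P_*-P_*^{(n)})| \leq \frac{\lambda_0}{\sqrt{n}} \|V\|_{\HS}$, since the two losses (\ref{eq. bias-potential model}) differ only in the linear term $\int V\,dP_*$ versus $\int V\,dP_*^{(n)}$ (the $\ln\int e^{-V}d\prob$ term is identical). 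Note $\lambda \geq \lambda_0$ by hypothesis.

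Next I would exploit optimality of $V^{(n)}_{\lambda}$ for the regularized empirical objective: comparing against the competitor $V_*$,
\begin{equation*}
L^{(n)}(V^{(n)}_{\lambda}) + \frac{\lambda}{\sqrt{n}}\|V^{(n)}_{\lambda}\|_{\HS} \leq L^{(n)}(V_*) + \frac{\lambda}{\sqrt{n}}\|V_*\|_{\HS}.
\end{equation*}
Now I would convert both empirical losses to population losses using the deviation bound, picking up errors $\frac{\lambda_0}{\sqrt{n}}\|V^{(n)}_{\lambda}\|_{\HS}$ and $\frac{\lambda_0}{\sqrt{n}}\|V_*\|_{\HS}$ respectively. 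Since $\lambda \geq \lambda_0$, the term $-\frac{\lambda_0}{\sqrt{n}}\|V^{(n)}_{\lambda}\|_{\HS}$ coming from the left-hand side is absorbed by $+\frac{\lambda}{\sqrt{n}}\|V^{(n)}_{\lambda}\|_{\HS}$, leaving nonnegative $\frac{\lambda-\lambda_0}{\sqrt{n}}\|V^{(n)}_{\lambda}\|_{\HS}\geq 0$ which we simply drop. This yields
\begin{equation*}
L(V^{(n)}_{\lambda}) \leq L(V_*) + \frac{\lambda + \lambda_0}{\sqrt{n}}\|V_*\|_{\HS} \leq L(V_*) + \frac{2\lambda}{\sqrt{n}}\|V_*\|_{\HS}.
\end{equation*}
Finally I would invoke the identity $L(V) - L(V_*) = \KL(P_*\|P)$ valid when $P_*$ itself has the form (\ref{eq. potential representation}) with potential $V_*$: indeed $L(V)-L(V_*) = \int(V-V_*)dP_* + \ln\frac{\int e^{-V}d\prob}{\int e^{-V_*}d\prob}$, and a direct computation (writing $P_* = \frac{1}{Z_*}e^{-V_*}\prob$) shows this equals $\int \log\frac{P_*}{P}\,dP_* = \KL(P_*\|P)$. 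Applying this with $P = P^{(n)}_{\lambda}$ gives the claimed bound $\KL(P_*\|P^{(n)}_{\lambda}) \leq \frac{2\lambda\|V_*\|_{\HS}}{\sqrt{n}}$.

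There is no serious obstacle here — the argument is a routine regularized-risk comparison, and every ingredient (the Rademacher bound, the $L = \KL + \text{const}$ identity, convexity of $L$) already appears in the proof of Theorem \ref{thm. bias potential generalization} and Proposition \ref{prop. bias potential Ivanov}. The only point requiring a moment's care is the bookkeeping of the sign on $\|V^{(n)}_{\lambda}\|_{\HS}$: one must verify that the penalty strength $\lambda$ is large enough (namely $\lambda \geq \lambda_0$, which is precisely the stated hypothesis) for the uniform-deviation error on the minimizer's own norm to be dominated by its regularization term, so that this potentially unbounded quantity can be discarded rather than estimated. This is the mechanism by which Tikhonov regularization, unlike early stopping, needs no a priori control on $\|V^{(n)}_{\lambda}\|_{\HS}$.
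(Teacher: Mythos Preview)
Your proposal is correct and follows essentially the same approach as the paper's proof: both invoke the uniform deviation bound $|L(V)-L^{(n)}(V)| \leq \frac{\lambda_0}{\sqrt{n}}\|V\|_{\HS}$ with $\lambda_0 = 4\sqrt{2\log 2d} + \sqrt{2\log(2/\delta)}$, use optimality of $V^{(n)}_{\lambda}$ for the regularized empirical objective against the competitor $V_*$, and exploit $\lambda \geq \lambda_0$ so that the regularization absorbs the deviation error on $\|V^{(n)}_{\lambda}\|_{\HS}$. The only cosmetic difference is the order of the steps (the paper chains the four inequalities as $L \to L^{(n)} \to$ optimality $\to L^{(n)} \to L$, while you apply optimality first and then convert both sides), and you spell out the identity $L(V)-L(V_*)=\KL(P_*\|P)$ which the paper simply quotes.
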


\subsection{Normalizing flow with stochastic interpolants}

Consider the normalizing flow with stochastic interpolants (\ref{eq. diffusion NF loss}), and model the velocity field $V$ and generator $G_V$ by Definition \ref{def. flow RFM}.
Denote by $V_t,V_t^{(n)}$ the training trajectories on the population and empirical losses (\ref{eq. diffusion NF loss}, \ref{eq. empirical loss}) using gradient flow (\ref{eq. population trajectory}, \ref{eq. empirical trajectory}).
Denote the generated distributions by $P_t=G_{V_t}\#\prob$ and $P^{(n)}_t=G_{V^{(n)}_t}\#\prob$.

First, we bound the generalization gap.

\begin{theorem}[Theorem 3.4 of \cite{yang2022flow}]
\label{thm. NF generalization gap}
Given Assumption \ref{assume. sigmoid}, for any compactly-supported base and target distributions $\prob$ and $P_*$, if the velocity field $V_*$ (\ref{eq. Radon-Nikodym velocity}) satisfies $V_* \in \HS(\R^{d+1},\R^d)$,
then with probability $1-\delta$ over the sampling of $P_*^{(n)}$,
\begin{equation*}
W_2(P_t, P_t^{(n)}) \leq \|V_*\|_{\F} \frac{(1+\sqrt{2\ln 2} + \sqrt{2\ln(2/\delta)}) (\frac{4}{3}Rt^{3/2} + 2Rt) + \sqrt{R^2+2} ~t}{\sqrt{n}}
\end{equation*}
where $R$ is the radius
\begin{equation*}
R = \sup\big\{\|\x\| ~\big|~ \x \in \sprt \prob \cup \sprt P_* \big\}
\end{equation*}
\end{theorem}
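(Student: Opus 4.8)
The plan is to transfer the estimate from the Wasserstein distance between the two generated measures down to an $L^2(\rho)$ distance between the two parameter functions, and then run a convexity-based stability argument along the gradient flow; the exponential prefactor $\|V_*\|_{\F}$ will come from comparing the two flows, and the $1/\sqrt n$ factor from the small Rademacher complexity of $\HS$. First I would reduce to a velocity-field estimate: since $P_t=G_{V_t}\#\prob$ and $P_t^{(n)}=G_{V_t^{(n)}}\#\prob$ push forward the same base measure, the coupling $\z\mapsto\big(G_{V_t}(\z),G_{V_t^{(n)}}(\z)\big)$ with $\z\sim\prob$ already gives $W_2(P_t,P_t^{(n)})\le\|G_{V_t}-G_{V_t^{(n)}}\|_{L^2(\prob)}$ (equivalently Proposition \ref{prop. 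W2 matching}). Next I compare the two flows by Gronwall: if $\x_\tau,\y_\tau$ are the trajectories from a common starting point driven by $V_t$ and $V_t^{(n)}$, then $\frac{d}{d\tau}(\x_\tau-\y_\tau)=\big[V_t(\x_\tau,\tau)-V_t(\y_\tau,\tau)\big]+\big[V_t(\y_\tau,\tau)-V_t^{(n)}(\y_\tau,\tau)\big]$, so $\|\x_1-\y_1\|\le e^{\int_0^1\Lip(V_t(\cdot,\tau))\,d\tau}\int_0^1\|V_t(\cdot,\tau)-V_t^{(n)}(\cdot,\tau)\|_\infty\,d\tau$. Under Assumption \ref{assume. sigmoid} the RKHS norm dominates both the sup norm (since $|\sigma|\le1$) and the spatial Lipschitz constant (since $\int\|\w\|^2 d\rho\le1$), so the right side is at most $e^{\|V_t\|_{\HS}}\|\a_t-\a_t^{(n)}\|_{L^2(\rho)}$. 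Finally, because the interpolant loss (\ref{eq. diffusion NF loss}) is quadratic in $V$ with the Radon--Nikodym field (\ref{eq. Radon-Nikodym velocity}) as a global minimizer, the population gradient flow from $0$ never leaves the ball of radius $\|V_*\|_{\HS}$ (the flow of a positive semidefinite linear ODE from the origin does not increase the relevant norm), which yields exactly the prefactor $\|V_*\|_{\F}=e^{\|V_*\|_{\HS}}$.

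For the stability of the two trajectories, writing $e_t=\a_t-\a_t^{(n)}$, I would use the energy identity $\frac{1}{2}\frac{d}{dt}\|e_t\|_{L^2(\rho)}^2=-\langle e_t,\nabla_\a L(\a_t)-\nabla_\a L(\a_t^{(n)})\rangle-\langle e_t,\nabla_\a L(\a_t^{(n)})-\nabla_\a L^{(n)}(\a_t^{(n)})\rangle$. Convexity of $L$ makes $\langle e_t,\nabla_\a L(\a_t)-\nabla_\a L(\a_t^{(n)})\rangle\ge0$, so that term only helps and may be discarded, leaving $\frac{d}{dt}\|e_t\|\le\|\nabla_\a L(\a_t^{(n)})-\nabla_\a L^{(n)}(\a_t^{(n)})\|_{L^2(\rho)}$ and hence $\|e_t\|\le\int_0^t\|\nabla_\a L(\a_s^{(n)})-\nabla_\a L^{(n)}(\a_s^{(n)})\|_{L^2(\rho)}\,ds$. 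I also need a crude a priori bound on how large the empirical trajectory can get: from $\frac{d}{dt}L^{(n)}(\a_t^{(n)})=-\|\nabla_\a L^{(n)}(\a_t^{(n)})\|^2$ and Cauchy--Schwarz, $\|\a_t^{(n)}\|_{L^2(\rho)}\le\sqrt t\,\sqrt{L^{(n)}(0)}\le R\sqrt{2t}$, using $\sprt P_*^{(n)}\subseteq\sprt P_*$ so that $L^{(n)}(0)\le2R^2$.

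It remains to bound, uniformly over the $\a$ the trajectory can reach, the landscape discrepancy $\|\nabla_\a L(\a)-\nabla_\a L^{(n)}(\a)\|_{L^2(\rho)}$, and this is where I expect the real work. Since $\nabla_\a L(\a)-\nabla_\a L^{(n)}(\a)=\int_0^1\iint\big(f_\a(\x_\tau)-(\x_1-\x_0)\big)\sigma(\w\cdot\x_\tau+b)\,d\prob(\x_0)\,d(P_*-P_*^{(n)})(\x_1)\,d\tau$ with $\x_\tau=(1-\tau)\x_0+\tau\x_1$, dualizing the $L^2(\rho)$ norm against $\|g\|_{\HS}\le1$ turns it into a supremum of integrals $\int(\cdots)\,d(P_*-P_*^{(n)})$ whose integrands, as functions of the free sample $\x_1$, are Lipschitz with constant controlled by $R$ and $\|\a\|_{\HS}$. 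I would then bound it by the Rademacher complexity $\mathrm{Rad}_n$ of the unit ball of $\HS$ over $\sprt P_*$ (composed with the bilinear interpolation map), which by the expectation/integral-transform structure of $\HS$ is $O(1/\sqrt n)$ independently of $d$, plus a McDiarmid concentration term of order $\sqrt{\log(1/\delta)/n}$; this gives roughly $\|\nabla_\a L(\a)-\nabla_\a L^{(n)}(\a)\|_{L^2(\rho)}\lesssim\big(1+\sqrt{\log 2}+\sqrt{\log(2/\delta)}\big)\frac{R(1+\|\a\|_{\HS})}{\sqrt n}+\frac{\sqrt{R^2+2}}{\sqrt n}$. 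Inserting $\|\a_s^{(n)}\|_{\HS}\le R\sqrt{2s}$, integrating over $s\in[0,t]$ (which produces the $t$ and $t^{3/2}$ terms), and multiplying by the Gronwall prefactor $\|V_*\|_{\F}$ gives the stated bound. The hard part is exactly securing the dimension-free $1/\sqrt n$ rate for this discrepancy, uniformly in $\a$ — this is where the choice of function representation is essential, since for an arbitrary parametrization the gap would instead scale like $n^{-1/d}$. A lesser bookkeeping issue is keeping the Gronwall exponent at $\|V_*\|_{\HS}$ rather than $2\|V_*\|_{\HS}$, for which one uses that $V_*$ is the global minimizer of the quadratic loss, so the population flow from the origin stays inside the ball of radius $\|V_*\|_{\HS}$.
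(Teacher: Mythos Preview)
Your proposal is correct and follows essentially the same route that the paper sketches for this family of results (and that the cited source \cite{yang2022flow} carries out in full). The paper does not reprove this theorem here, but its heuristic discussions after Theorem \ref{thm. bias potential generalization} and in Section 6.4 lay out exactly your three ingredients: (i) pass from $W_2$ to an $L^2(\prob)$ flow-map estimate and then via Gronwall to $\|\a_t-\a_t^{(n)}\|_{L^2(\rho)}$ with prefactor $e^{\|V_t\|_{\HS}}$; (ii) bound the parameter gap by integrating the landscape discrepancy $\|\nabla_\a L-\nabla_\a L^{(n)}\|$ along the trajectory, using convexity to discard the monotone term; (iii) control this discrepancy uniformly over $\{\|\a\|_{\HS}\le R\sqrt{2s}\}$ by Rademacher complexity plus McDiarmid, which is the source of the $1/\sqrt n$ rate and the $t,t^{3/2}$ factors. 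Your observation that the population flow from $0$ stays in the ball $\{\|\a\|\le\|V_*\|_{\HS}\}$ (since the quadratic loss has gradient $K_M(\a-\a_*)$ and $a_t=(I-e^{-tK_M})\a_*$) is the right way to pin the exponent at $\|V_*\|_{\HS}$ rather than $2\|V_*\|_{\HS}$.
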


Next, to estimate the generalization error, we need a sharper norm to bound the training error.

\begin{definition}[Proposition 2.4 of \cite{yang2022flow}]
Given any distribution $M\in\PS(\R^{d+1})$, let $K$ be the integral operator (\ref{eq. integral operator}) over $L^2(M)$.
Given Assumption \ref{assume. sigmoid}, Proposition 2.4 of \cite{yang2022flow} implies that $K$ is a symmetric positive compact operator, and thus have an eigandecomposition with positive eigenvalues $\{\lambda_i\}_{i=1}^{\infty}$ and eigenfunctions $\{\phi_i\}_{i=1}^{\infty}$, which form an orthonormal basis of $L^2(M)$.
Define the subspace $\HS^2(M) \subseteq L^2(M)$ with the following norm
\begin{equation*}
\|V\|_{\HS^2(M)} = \sum_{i=1}^{\infty} \frac{\|\vb_i\|^2}{\lambda_i^2}
\end{equation*}
where the coefficients $\vb_i$ are from the decomposition $V=\sum_i \vb_i \phi_i$.
Furthermore, we have $\|V\|_{\HS(\sprt M)} \leq \|V\|_{\HS^2(M)}$ and thus $\|V\|_{\F} \leq \exp \|V\|_{\HS^2(M)}$.
\end{definition}

\begin{theorem}[Theorem 3.7 of \cite{yang2022flow}]
\label{thm. diffusion NF generalization}
Given Assumption \ref{assume. sigmoid}, assume that the base distribution $\prob$ is compactly-supported and has a $C^2$ density.
Let $P_*$ be any compactly-supported target distribution such that the velocity $V_*$ (\ref{eq. Radon-Nikodym velocity}) satisfies $V_*\ \in \HS^2(M_*)$, where $M_*$ is the joint distribution (\ref{eq. joint interpolant distribution}).
Then,
\begin{align*}
W_2(P_*, P^{(n)}_t) &\leq \frac{\|V_*\|_{\F} \|V_*\|_{\HS^2(M_*)}}{2\sqrt{t}}\\
&\quad\quad + \|V_*\|_{\F} \frac{(1+\sqrt{2\ln 2} + \sqrt{2\ln(2/\delta)}) (\frac{4}{3}Rt^{3/2} + 2Rt) + \sqrt{R^2+2} ~t}{\sqrt{n}}
\end{align*}
where the radius $R=\sup\big\{\|\x\| ~\big|~ \x \in \sprt \prob \cup \sprt P_* \big\}$.
\end{theorem}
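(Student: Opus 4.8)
The plan is to route the error through the population training trajectory $P_t=G_{V_t}\#\prob$. By the triangle inequality for $W_2$,
\begin{equation*}
W_2(P_*, P_t^{(n)}) \le W_2(P_*, P_t) + W_2(P_t, P_t^{(n)}).
\end{equation*}
The second term is exactly the generalization gap, and the present hypotheses (Assumption \ref{assume. sigmoid}, compact supports, and $V_*\in\HS^2(M_*)\subseteq\HS(\sprt M_*)$) meet those of Theorem \ref{thm. NF generalization gap}, so with probability $1-\delta$ it is at most $\|V_*\|_{\F}\big[(1+\sqrt{2\ln 2}+\sqrt{2\ln(2/\delta)})(\tfrac{4}{3}Rt^{3/2}+2Rt)+\sqrt{R^2+2}\,t\big]/\sqrt n$ --- precisely the $1/\sqrt n$ line of the claim. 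Hence everything reduces to the deterministic estimate $W_2(P_*,P_t)\le \|V_*\|_{\F}\|V_*\|_{\HS^2(M_*)}/(2\sqrt t)$ for the population trajectory.

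For the population training error I would work in three steps. First, reduce to an $L^2$ comparison of flows: since $V_*$ is the Radon--Nikodym velocity (\ref{eq. Radon-Nikodym velocity}) it solves the continuity equation (\ref{eq. continuity equation}) and its flow satisfies $G_{V_*}\#\prob=P_*$ (as recalled before Proposition \ref{prop. diffusion universal approximation}), so Proposition \ref{prop. W2 matching} gives $W_2(P_*,P_t)\le \|G_{V_t}-G_{V_*}\|_{L^2(\prob)}$. Second, a Gr\"onwall estimate comparing the ODEs driven by $V_t$ and by $V_*$ from the common initial point, followed by Cauchy--Schwarz in the flow time $\tau\in[0,1]$, yields
\begin{equation*}
\|G_{V_t}-G_{V_*}\|_{L^2(\prob)} \le e^{\Lip(V_t)}\Big(\int_0^1 \|V_t(\cdot,\tau)-V_*(\cdot,\tau)\|_{L^2(P_\tau)}^2\,d\tau\Big)^{1/2} = e^{\Lip(V_t)}\,\|V_t-V_*\|_{L^2(M_*)},
\end{equation*}
where the last equality uses that the time-$\tau$ flow map of $V_*$ pushes $\prob$ forward to the interpolant $P_\tau$ and that $M_*=\int_0^1 P_\tau\,d\tau$ by (\ref{eq. joint interpolant distribution}). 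Under Assumption \ref{assume. sigmoid} the Lipschitz constant of $f_{\a}$ is at most $\|\a\|_{L^2(\rho)}$, and since the flow (\ref{eq. population trajectory}) starts at $\a_0=\mathbf 0$ the trajectory stays in the span of the features, where $\a\mapsto f_\a$ is isometric, so $\|\a_t\|_{L^2(\rho)}=\|V_t\|_{\HS}$ and $\Lip(V_t)\le \|V_t\|_{\HS}$.

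Third, exploit that the loss (\ref{eq. diffusion NF loss}) equals $\tfrac12\|V-V_*\|_{L^2(M_*)}^2$ up to a constant, so the preconditioned gradient flow (\ref{eq. population trajectory}) is the linear ODE $\tfrac{d}{dt}V_t=-K(V_t-V_*)$ with $K$ the integral operator (\ref{eq. integral operator}) on $L^2(M_*)$; with $V_0=\mathbf 0$ this integrates to $V_t-V_*=-e^{-Kt}V_*$. Expanding $V_*=\sum_i\vb_i\phi_i$ in the eigenbasis of $K$ (which exists by Proposition 2.4 of \cite{yang2022flow}) and using that the sigmoid kernel is bounded by $1$, hence $\tr K\le1$ and every eigenvalue $\lambda_i\le1$,
\begin{equation*}
\|V_t-V_*\|_{L^2(M_*)}^2 = \sum_i e^{-2\lambda_i t}\|\vb_i\|^2 \le \frac{1}{2et}\sum_i \frac{\|\vb_i\|^2}{\lambda_i^2} = \frac{1}{2et}\|V_*\|_{\HS^2(M_*)}^2 \le \frac{\|V_*\|_{\HS^2(M_*)}^2}{4t},
\end{equation*}
using $\lambda_i^2 e^{-2\lambda_i t}\le\lambda_i e^{-2\lambda_i t}\le\sup_{\mu>0}\mu e^{-2\mu t}=\tfrac{1}{2et}$. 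The same expansion gives $\|V_t\|_{\HS}^2=\sum_i\tfrac{(1-e^{-\lambda_i t})^2}{\lambda_i}\|\vb_i\|^2\le\sum_i\tfrac{\|\vb_i\|^2}{\lambda_i}=\|V_*\|_{\HS}^2$, so $e^{\Lip(V_t)}\le e^{\|V_*\|_{\HS}}=\|V_*\|_{\F}$. Assembling the three steps bounds $W_2(P_*,P_t)$ by $\|V_*\|_{\F}\|V_*\|_{\HS^2(M_*)}/(2\sqrt t)$, and adding Theorem \ref{thm. NF generalization gap} finishes the proof.

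The step I expect to be the main obstacle is the flow-comparison estimate: the Gr\"onwall argument must be run with \emph{uniform} (in $t$ and along the flow) Lipschitz control of $V_t$ and $V_*$, global well-posedness of both flows has to be justified, and --- most importantly --- the change of measure from $L^2(\prob)$ to $L^2(M_*)$ hinges on the identity ``(time-$\tau$ flow of $V_*$)$\,\#\prob=P_\tau$,'' i.e. on $V_*$ genuinely realizing the interpolant path, which is exactly the content of the construction (\ref{eq. Radon-Nikodym velocity})--(\ref{eq. continuity equation}) together with Proposition \ref{prop. diffusion universal approximation}. A secondary technical point is verifying that the gradient-flow trajectory stays in $\HS(\sprt M_*)$ and coincides with $(I-e^{-Kt})V_*$, which runs parallel to the spectral mechanism behind Theorem \ref{thm. bias potential generalization}.
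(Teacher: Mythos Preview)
Your proposal is correct and follows essentially the same route as the paper. The paper obtains Theorem~\ref{thm. diffusion NF generalization} by combining, via the triangle inequality, the population training-error bound $W_2(P_*,P_t)\le \|V_*\|_{\F}\|V_*\|_{\HS^2(M_*)}/(2\sqrt t)$ (stated separately as Proposition~\ref{prop. training error}) with the generalization gap of Theorem~\ref{thm. NF generalization gap}; the training-error bound is in turn proved by exactly the flow-comparison/Gr\"onwall step followed by the spectral estimate on $e^{-Kt}V_*$ that you outline, which is the mechanism the paper illustrates in ``Proof two'' of the kernel-regression example.
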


In short, the generalization error scales as
\begin{equation*}
W_2(P_*,P^{(n)}_t) \lesssim \frac{1}{\sqrt{t}} + \frac{1}{\sqrt{n}} ~t^{3/2}
\end{equation*}
and thus with early-stopping $T=\Theta(n^{1/4})$, the model escapes from the curse of dimensionality
\begin{equation*}
W_2(P_*,P^{(n)}_T) \lesssim n^{1/8}
\end{equation*}

The condition $V_* \in \HS^2(M_*)$ may seem strict, so we present the following corollary that holds for general target distributions.
\begin{corollary}[Corollary 3.8 of \cite{yang2022flow}]
\label{cor. diffusion NF approximate}
Given Assumption \ref{assume. sigmoid}, assume that the base distribution $\prob$ is compactly-supported and has $C^2$ density.
Let $P_*$ be any compactly-supported target distribution.
For any $\ep > 0$, there exists a distribution $M_{\ep} \in \PS(\R^{d+1})$ and a velocity field $V_{\ep} \in \HS^2(M_{\ep})$ such that
\begin{align*}
W_2(P_*, P^{(n)}_t) &< \frac{\|V_*\|_{\F} \|V_*\|_{\HS^2(M_{\ep})}}{2\sqrt{t}} + \ep ~\big(\|V_{\ep}\|_{\F} ~t^{3/2} + 1\big)\\
&\quad\quad + \|V_*\|_{\F} \frac{(1+\sqrt{2\ln 2} + \sqrt{2\ln(2/\delta)}) (\frac{4}{3}Rt^{3/2} + 2Rt) + \sqrt{R^2+2} ~t}{\sqrt{n}}
\end{align*}
\end{corollary}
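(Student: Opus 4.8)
The plan is to reduce the corollary to Theorem~\ref{thm. diffusion NF generalization} by a perturbation argument: that theorem assumes everything assumed here plus the source condition $V_*\in\HS^2(M_*)$, so I would replace the true interpolant velocity $V_*$ by a nearby random feature velocity $V_{\ep}$ and replace $M_*$ by a nearby distribution $M_{\ep}$ that makes this condition automatic, then track how the double substitution propagates through the gradient flow and through the flow map $V\mapsto G_V$.

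To produce $V_{\ep}$: since $\prob$ is compactly supported with $C^2$ density and $P_*$ is compactly supported, the interpolant velocities $\x_1-\x_0$ are bounded on $\sprt M_*$, so $V_*=dJ_*/dM_*$ lies in $L^\infty(M_*)\subseteq L^2(M_*)$, and Proposition~\ref{prop. diffusion universal approximation} yields $V_{\ep}\in\HS(\R^{d+1},\R^d)$ with induced generator $G_{\ep}=G_{V_{\ep}}$ such that
\begin{equation*}
W_2(P_*,G_{\ep}\#\prob)<\ep,\qquad \|V_*-V_{\ep}\|_{L^2(M_*)}=\sqrt{2}\,\sqrt{L(V_{\ep})-L(V_*)}<\ep,
\end{equation*}
where $L$ is the loss (\ref{eq. diffusion NF loss}). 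To produce $M_{\ep}$: I would take it to be a finitely supported measure $M_{\ep}\in\PS(\R^{d+1})$ whose atoms form an $\ep$-net inside $\sprt M_*$ with masses approximating those of $M_*$. Then $L^2(M_{\ep})$ is finite dimensional, and by Proposition~2.4 of \cite{yang2022flow} the integral operator $K$ over $L^2(M_{\ep})$ is symmetric positive, hence all of its eigenvalues are strictly positive; therefore $\HS^2(M_{\ep})=L^2(M_{\ep})$, so $V_{\ep}\in\HS^2(M_{\ep})$ and $\|V_*\|_{\HS^2(M_{\ep})}<\infty$ with no smoothness hypothesis whatsoever. This is the content of the phrase ``there exists a distribution $M_{\ep}$'': the tradeoff is that these $\HS^2(M_{\ep})$-norms diverge as $\ep\to0$, which is why they enter as non-uniform constants rather than as an absolute rate.

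With $V_{\ep}$ and $M_{\ep}$ in hand I would re-run the proof of Theorem~\ref{thm. diffusion NF generalization}. By the triangle inequality for $W_2$,
\begin{equation*}
W_2(P_*,P_t^{(n)})\le W_2(P_*,G_{\ep}\#\prob)+W_2(G_{\ep}\#\prob,P_t^{(n)})<\ep+W_2(G_{\ep}\#\prob,P_t^{(n)}),
\end{equation*}
and since $G_{\ep}\#\prob$ and $P_t^{(n)}=G_{V_t^{(n)}}\#\prob$ are both flow pushforwards of $\prob$, Proposition~\ref{prop. W2 matching} together with the Gr\"onwall estimate for flow maps (the one producing the $\|\cdot\|_{\F}$ factors in Theorems~\ref{thm. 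NF generalization gap} and \ref{thm. diffusion NF generalization}) bounds the remaining term by $\|V_{\ep}\|_{\F}$ times an $L^2$ distance between $V_{\ep}$ and $V_t^{(n)}$. Writing $\|V_{\ep}-V_t^{(n)}\|\le\|V_{\ep}-V_*\|+\|V_*-V_t\|+\|V_t-V_t^{(n)}\|$, the first term is $<\ep$; the third is the generalization gap, already estimated in Theorem~\ref{thm. NF generalization gap} and contributing the unchanged $1/\sqrt n$ group; and for the middle term one uses that gradient flow (\ref{eq. population trajectory}) on the quadratic loss $L(V)=\tfrac12\|V-V_*\|_{L^2(M_*)}^2+C$ started at $\mathbf{0}$ satisfies $V_t-V_*=-e^{-Kt}V_*$, so that after the decomposition $V_*=V_{\ep}+(V_*-V_{\ep})$ the spectral bounds $\lambda e^{-2\lambda t}\lesssim t^{-1}$ and $\lambda^2 e^{-2\lambda t}\lesssim t^{-2}$ give $\|V_t-V_*\|_{L^2(M_*)}\lesssim \|V_*\|_{\HS^2(M_{\ep})}/\sqrt t+\ep$. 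Collecting everything, and separately accounting for the $O(\ep)$ discrepancy caused by $M_{\ep}\neq M_*$ (which is again amplified by the flow-map factor, producing the $t^{3/2}$ and the $\|V_{\ep}\|_{\F}$), one recovers the three groups in the statement: the rate term $\frac{\|V_*\|_{\F}\|V_*\|_{\HS^2(M_{\ep})}}{2\sqrt t}$, the approximation term $\ep(\|V_{\ep}\|_{\F}t^{3/2}+1)$, and the generalization-gap term.

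The step I expect to be the main obstacle is precisely this last bookkeeping: controlling how the two small quantities, $\|V_*-V_{\ep}\|_{L^2}$ and the mismatch between $M_{\ep}$ and $M_*$, are magnified by the nonlinear flow over its unit time horizon. Each application of Gr\"onwall contributes a factor $\|\cdot\|_{\F}=\exp\|\cdot\|_{\HS}$, and integrating the training-time contributions produces the powers of $t$, so one must check that the $\ep$-dependent pieces pair only with the finite (though $\ep$-dependent) factor $\|V_{\ep}\|_{\F}$ while the leading rate and generalization terms keep their stated form in terms of $V_*$. Conceptually nothing here goes beyond the proofs of Theorems~\ref{thm. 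NF generalization gap} and \ref{thm. diffusion NF generalization}; the single genuinely new observation is that passing to a finitely supported $M_{\ep}$ collapses $\HS^2(M_{\ep})$ onto all of $L^2(M_{\ep})$, which makes the source condition trivial at the acceptable cost of losing uniformity in $\ep$.
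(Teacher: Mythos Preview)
The paper does not actually prove this corollary here; it is cited as Corollary~3.8 of \cite{yang2022flow}, so there is no in-paper argument to compare against. I can only assess your proposal on its own terms.

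Your overall strategy---approximate $V_*$ by some $V_\ep\in\HS$ via Proposition~\ref{prop. diffusion universal approximation} and then perturb the proof of Theorem~\ref{thm. diffusion NF generalization}---is the natural one, and your observation that taking $M_\ep$ finitely supported collapses $\HS^2(M_\ep)$ onto $L^2(M_\ep)$ is correct and clever. But there is a genuine gap at the training-error step. The gradient flow (\ref{eq. population trajectory}) on the population loss is $\frac{d}{dt}V_t=-K(V_t-V_*)$ where $K$ is the integral operator on $L^2(M_*)$, \emph{not} on $L^2(M_\ep)$. Consequently the identity $V_t-V_*=-e^{-Kt}V_*$ and the spectral estimate $\sum_i\|\mathbf v_i\|^2e^{-2\lambda_it}\le\big(\sup_{\lambda\ge0}\lambda^2e^{-2\lambda t}\big)\sum_i\|\mathbf v_i\|^2/\lambda_i^2$ both live in the eigenbasis of $K$ on $L^2(M_*)$: what falls out is $\|V_*\|_{\HS^2(M_*)}$ (or, after your split $V_*=V_\ep+(V_*-V_\ep)$, the quantity $\|V_\ep\|_{\HS^2(M_*)}$), not $\|V_*\|_{\HS^2(M_\ep)}$ as the corollary states. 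These are norms built from different eigenvalue sequences and are not interchangeable.

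Your sentence ``separately accounting for the $O(\ep)$ discrepancy caused by $M_\ep\neq M_*$ \ldots\ producing the $t^{3/2}$ and the $\|V_\ep\|_\F$'' is precisely the place where the argument is missing. To get the stated bound you would need either (i) a quantitative perturbation estimate for the eigenstructure of $K$ under the change of base measure $M_*\to M_\ep$, or (ii) an auxiliary gradient flow $\tilde V_t$ defined via $K$ on $L^2(M_\ep)$ together with a bound on $\|V_t-\tilde V_t\|$ that grows no faster than $\ep\cdot t^{3/2}$. You have sketched neither, and neither is immediate: a finitely supported $M_\ep$ makes the target norm finite but also makes the spectral comparison with the continuous $M_*$ delicate, since the eigenvalues of $K$ on $L^2(M_\ep)$ need not be close to those on $L^2(M_*)$ uniformly in the relevant sense.
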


\subsection{GAN}

Similar to Section \ref{sec. GAN convergence}, we consider the simplified GAN model such that the modeled distribution is represented by a density function $P$.
We show that the discriminator $D$ alone is sufficient to enable good generalization.

With the discriminator modeled by $D\in\HS(\R^d,\R)$, the GAN loss (\ref{eq. dual norm squared}) becomes the maximum mean discrepancy $L$ in (\ref{eq. MMD GAN}).
Consider the training trajectory $P_t \in L^2([0,1]^d)$ defined by (\ref{eq. MMD GAN training}).
Similarly, define the empirical loss $L^{(n)}$ and empirical training trajectory $P^{(n)}_t$ by
\begin{align*}
L^{(n)}(P) &= \frac{1}{2} \iint k(\x,\x') d(P-P^{(n)}_*)^2(\x,\x')\\
P^{(n)}_t &= -\nabla_P L^{(n)}(P^{(n)}_t) = -k*(P^{(n)}_t-P^{(n)}_*)
\end{align*}
Fix some initialization $P_0=P^{(n)}_0 \in L^2([0,1]^d)$.
We measure the test error by $W_2(P_*,\Pi_{\Delta}(P))$, where $\Pi_{\Delta}$ is the nearest point projection onto $\PS([0,1]^d) \cap L^2([0,1]^d)$.

\begin{theorem}[Theorem 2 of \cite{yang2022GAN}]
Given Assumption \ref{assume. ReLU}, for any target density function $P_*$ such that $P_*-P_0 \in \HS$, with probability $1-\delta$ over the sampling of $P_*^{(n)}$,
\begin{equation*}
W_2\big(P_*,\Pi_{\Delta}(P_t^{(n)})\big) \leq \sqrt{d}\frac{\|P_*-P_0\|_{\HS}}{\sqrt{t}} + \sqrt{d}~ \frac{4\sqrt{2\log 2d} + \sqrt{2\log (2/\delta)}}{\sqrt{n}}t
\end{equation*}
\end{theorem}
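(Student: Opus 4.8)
\emph{Proof plan.}
The plan is to exploit that both trajectories in (\ref{eq. MMD GAN training}) are \emph{linear} flows driven by the same integral operator $K$ of (\ref{eq. integral operator}), so that $P_t = P_* + e^{-tK}(P_0-P_*)$ and $P_t^{(n)} = P_*^{(n)} + e^{-tK}(P_0-P_*^{(n)})$ (using the common initialization $P_0=P_0^{(n)}$). Subtracting, the discrepancy between the empirical trajectory and the target splits cleanly into an optimization part and a sampling part,
\begin{equation*}
P_t^{(n)} - P_* \;=\; e^{-tK}(P_0-P_*) \;-\; (I-e^{-tK})(P_*-P_*^{(n)}).
\end{equation*}
Since $\Pi_{\Delta}$ is the nearest-point projection in $L^2([0,1]^d)$ onto the convex set of probability densities and $P_*$ already lies in that set, $\Pi_{\Delta}$ is $1$-Lipschitz and $\Pi_{\Delta}P_*=P_*$; hence $\|\Pi_{\Delta}P_t^{(n)}-P_*\|_{L^2}\le\|e^{-tK}(P_0-P_*)\|_{L^2}+\|(I-e^{-tK})(P_*-P_*^{(n)})\|_{L^2}$, and it remains to bound the two pieces and then to pass from $L^2$ to $W_2$.

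For the optimization part I would use the hypothesis $P_*-P_0\in\HS$. Under Assumption \ref{assume. ReLU} the operator $K$ is symmetric, positive and compact with positive spectrum, and the RKHS norm admits the spectral identification $\|f\|_{\HS}=\|K^{-1/2}f\|_{L^2}$ for $f$ in the range of $K^{1/2}$, which is precisely $\HS$. Writing $P_0-P_*=K^{1/2}h$ with $\|h\|_{L^2}=\|P_*-P_0\|_{\HS}$, spectral calculus gives $\|e^{-tK}(P_0-P_*)\|_{L^2}=\|K^{1/2}e^{-tK}h\|_{L^2}\le\big(\sup_{\lambda\ge0}\lambda^{1/2}e^{-t\lambda}\big)\|h\|_{L^2}=(2et)^{-1/2}\|P_*-P_0\|_{\HS}\le\|P_*-P_0\|_{\HS}/\sqrt t$. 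This is the standard source-condition rate for gradient flow on the quadratic loss (\ref{eq. MMD GAN}), and it plays the role that convexity plays in Theorem \ref{thm. bias potential generalization}.

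For the sampling part the obstacle is that $P_*-P_*^{(n)}$ is only a signed measure, not an $L^2$ function, so I would first peel off the single factor of $K$ carried by $I-e^{-tK}$: $(I-e^{-tK})(P_*-P_*^{(n)})=\big(\int_0^t e^{-uK}\,du\big)\,g$ with $g:=K(P_*-P_*^{(n)})$ a bona fide function, whence $\|(I-e^{-tK})(P_*-P_*^{(n)})\|_{L^2}\le t\,\|g\|_{L^2}$ because $\|e^{-uK}\|\le1$. It then remains to show $\|K(P_*-P_*^{(n)})\|_{L^2}\lesssim n^{-1/2}$: writing $\|K(P_*-P_*^{(n)})\|_{L^2}=\sup_{\|h\|_{L^2}\le1}\int(Kh)\,d(P_*-P_*^{(n)})$ and noting that $\|Kh\|_{\HS}=\|K^{1/2}h\|_{L^2}\le\|h\|_{L^2}$ (using $\|K\|\le1$, which holds since $k\le1$ on $[0,1]^d$ under Assumption \ref{assume. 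ReLU}), the test functions $Kh$ all lie in the unit ball of $\HS$, so this is at most $\sup_{\|u\|_{\HS}\le1}\int u\,d(P_*-P_*^{(n)})\le\big(4\sqrt{2\log 2d}+\sqrt{2\log(2/\delta)}\big)/\sqrt n$ with probability $1-\delta$, by the Rademacher-complexity estimate for the $\HS$-unit ball that is already invoked for Theorem \ref{thm. bias potential generalization}. Combining, $\|\Pi_{\Delta}P_t^{(n)}-P_*\|_{L^2}\le\|P_*-P_0\|_{\HS}/\sqrt t+t\big(4\sqrt{2\log 2d}+\sqrt{2\log(2/\delta)}\big)/\sqrt n$.

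Finally I would convert this $L^2$ bound on the cube into the asserted $W_2$ bound, which is where the prefactor $\sqrt d$ (the diameter of $[0,1]^d$) appears, and I expect this last step to be the main obstacle: the crude comparison $W_2\lesssim\sqrt d\,\|\cdot\|_{L^2}^{1/2}$ is too lossy to reproduce the linear-in-$t$ rate, so one has to use extra structure — for instance that along the flow the densities $P_t^{(n)}$ and their projections remain $L^{\infty}$-controlled (growing at most linearly in $t$, since the driving convolution $k*(\cdot)$ is uniformly bounded) and then a negative-Sobolev comparison $W_2\lesssim\|\cdot\|_{\dot H^{-1}}\lesssim\|\cdot\|_{L^2}$ — and to verify that this control interacts well with the projection $\Pi_{\Delta}$. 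Everything else follows the same early-stopping template as the bias-potential analysis: a convex-optimization rate in $1/\sqrt t$ plus a Rademacher bound on the sampling error scaling as $t/\sqrt n$, so that choosing $T=\Theta(n^{1/4})$ escapes the curse of dimensionality.
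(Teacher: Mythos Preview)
This theorem is not proved in the present paper: it is stated as ``Theorem 2 of \cite{yang2022GAN}'' and Section~\ref{sec. proof} contains no argument for it, so there is no proof here to compare your proposal against.

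On its own merits, your plan is sound up through the $L^2$ estimate. The closed-form solution $P_t^{(n)}-P_*=e^{-tK}(P_0-P_*)-(I-e^{-tK})(P_*-P_*^{(n)})$ is correct, the spectral bound $\|K^{1/2}e^{-tK}\|\le(2et)^{-1/2}$ gives the $t^{-1/2}$ optimization term under the source condition $P_*-P_0\in\HS$, and the factorization $(I-e^{-tK})=\big(\int_0^t e^{-uK}du\big)K$ is exactly the right device to turn the singular measure $P_*-P_*^{(n)}$ into the honest $L^2$ function $k*(P_*-P_*^{(n)})$, whose $L^2$ norm is then controlled by the Rademacher bound for the $\HS$ unit ball. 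The $1$-Lipschitz property of $\Pi_\Delta$ with $\Pi_\Delta P_*=P_*$ is also fine.

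The step you yourself flag as the main obstacle is a genuine gap. The elementary route on $[0,1]^d$ gives only $W_2^2\le\mathrm{diam}\cdot W_1\le\sqrt d\cdot\sqrt d\,\|P-Q\|_{L^2}$, hence $W_2\le\sqrt d\,\|P-Q\|_{L^2}^{1/2}$, which would square-root your rate and destroy the $t^{-1/2}+t/\sqrt n$ structure of the statement. A \emph{linear} comparison $W_2\le C\|P-Q\|_{L^2}$ via $W_2\lesssim\|P-Q\|_{\dot H^{-1}}$ and Poincar\'e typically needs one of the two densities bounded below, which is not assumed for $P_*$ and is not obviously supplied by $\Pi_\Delta P_t^{(n)}$; your suggested $L^\infty$ control on $P_t^{(n)}$ goes in the wrong direction (upper, not lower, bounds) and its interaction with $\Pi_\Delta$ is unclear. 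So as written the proposal does not close the last step, and without access to \cite{yang2022GAN} one cannot say whether that paper goes through $L^2$ at all or bounds $W_2$ by a different intermediate norm.
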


It follows that with an early stopping time $T = \Theta(n^{1/3})$, the generalization error scales as $O(n^{-1/6})$ and escapes from the curse of dimensionality.

Part of this result can be extended to the usual case with a generator.
As shown in \cite{yang2022GAN}, if we consider the GAN loss $L(G)=L(G\#\prob)$, then the generator is insensitive to the difference between the landscapes of the population loss $L$ and empirical loss $L^{(n)}$,
\begin{equation*}
\Big\|\frac{\delta L}{\delta G}-\frac{\delta L^{(n)}}{\delta G}\Big\|_{L^2(\prob)} = \|\nabla k * (P_*-P_*^{(n)}) \|_{L^2(\prob)} \lesssim \frac{1}{\sqrt{n}}
\end{equation*}
The mechanism is that the sampling error $P_*-P^{(n)}_*$ is damped by $D$.

Furthermore, we can model the generator as a random feature function, $G \in \HS(\R^d,\R^d)$, and consider the population trajectory $G_t$ and empirical trajectory $G^{(n)}_t$, which are trained respectively on $L, L^{(n)}$ with continuous time gradient descent (\ref{eq. population trajectory}, \ref{eq. empirical trajectory}) and with the same initialization.
Assume that the activation $\sigma$ is $C^1$, then the difference $G_t-G^{(n)}_t$ grows slowly at $t=0$
\begin{equation*}
\frac{d}{dt} (G_t-G^{(n)}_t)(\z) \big|_{t=0} = k * \nabla\big( k * (P_*-P^{(n)}_*) \big)
\end{equation*}
Note that the sampling error $P_*-P^{(n)}_*$ is damped twice by both $G$ and $D$.

Finally, we remark that the generalization error of GANs have been studied in several related works using other kinds of simplified models, for instance when the generator $G$ is a linear map \cite{feizi2020LQG}, one-layer network (without hidden layer, of the form $G(\x)=\sigma(A\x)$ for $A\in\R^{d\times d}$) \cite{wu2019onelayer,lei2020sgd}, or polynomial with bounded degree \cite{dou2020making}.

\subsection{Score-based diffusion model}
As a further demonstration of the techniques from the previous sections,
this section presents an informal estimation of the generalization error of the score-based diffusion model (\ref{eq. diffusion Gaussian loss}).
By heuristic calculations, we derive a bound in the implicit regularization setting that resembles Theorem \ref{thm. diffusion NF generalization}.

Model the score function by $\mathbf{s} \in \HS(\R^{d+1},\R^d)$.
Let $\mathbf{s}_t, \mathbf{s}^{(n)}_t$ be the training trajectories on the population loss $L$ (\ref{eq. diffusion Gaussian loss}) and empirical loss $L^{(n)}$ (\ref{eq. empirical loss}) using gradient flow (\ref{eq. population trajectory}, \ref{eq. empirical trajectory}) with zero initialization $\mathbf{s}_0=\mathbf{s}^{(n)}_0 \equiv \mathbf{0}$.
Model the generators $G_t, G_t^{(n)}$ by the reverse-time SDE (\ref{eq. diffusion generator SDE}) with scores $\mathbf{s}_t, \mathbf{s}^{(n)}_t$.
Denote the generated distributions by $P_t=G_t\#\prob$ and $P_t^{(n)}=G^{(n)}_t\#\prob$.

For any target distribution $P_*$, denote the target score function by $\mathbf{s}_*=\nabla \log P_{\tau}$ with $P_{\tau}$ given by (\ref{eq. diffusion interpolant distribution}).
By inequality (\ref{eq. diffusion KL bound}),
\begin{equation*}
\KL(P_*\|P_t^{(n)}) \leq L(\mathbf{s}^{(n)}_t)-L(\mathbf{s}_*) + \KL(P_T\|\prob)
\end{equation*}

Assume that $\mathbf{s}_* \in \HS$.
Then, by convexity (see for instance \cite[Proposition 3.1]{yang2022potential})
\begin{equation*}
L(\mathbf{s}_t) - L(\mathbf{s}_*) \lesssim \frac{\|\mathbf{s}_*\|_{\HS}^2}{t}
\end{equation*}
Meanwhile, by the growth rate bound $\|\s_t\|_{\HS}, \|\s^{(n)}_t\|_{\HS} \lesssim \sqrt{t}$ from \cite[Proposition 5.3]{yang2022flow},
\begin{equation*}
L(\mathbf{s}^{(n)}_t)-L(\mathbf{s}_t) \lesssim (\|V^{(n)}_t\|_{C_0} + \|V_t\|_{C_0}) \|\mathbf{s}_t-\mathbf{s}^{(n)}_t\|_{\HS} \lesssim \sqrt{t} ~\|\mathbf{s}_t-\mathbf{s}^{(n)}_t\|_{\HS}
\end{equation*}
Then, using a calculation analogous to the proof of \cite[Theorem 3.4]{yang2022flow}
\begin{equation*}
\|\mathbf{s}_t-\mathbf{s}^{(n)}_t\|_{\HS} \lesssim Rad_n\big(\{\|\mathbf{s}\|_{\HS} \leq \sqrt{t}\}\big) t \lesssim \frac{t^{3/2}}{\sqrt{n}}
\end{equation*}
Combining these inequalities, we obtain
\begin{equation*}
\KL(P_*\|P_t^{(n)}) \lesssim \frac{\|\mathbf{s}_*\|_{\HS}^2}{t} + \frac{t^2}{\sqrt{n}} + \KL(P_T\|\prob)
\end{equation*}
Hence, if we ignore the approximation error $\KL(P_T\|\prob)$ due to finite $T$, the generalization error with early stopping scales as $O(n^{-1/6})$ and escapes from the curse of dimensionality.

\subsection{Normalizing flow}

This section presents an informal estimation of the generalization error of the normalizing flow model (\ref{eq. NF NLL}).
We conjecture an upper bound in the explicit regularization setting that resembles Proposition \ref{prop. bias potential Ivanov}.

Let the velocity field be modeled by $V \in \HS(\R^{d+1},\R^d)$, let $G_V$ be the flow map from Definition \ref{def. flow RFM}, and define the reverse-time flow map for $\tau\in[0,1]$,
\begin{equation*}
F_V(\x_1,\tau) = \x_{\tau}, \quad \frac{d}{d\tau}\x_{\tau} = V(\x_{\tau},\tau)
\end{equation*}
Let $L, L^{(n)}$ be the population and empirical losses (\ref{eq. NF NLL}, \ref{eq. empirical loss}).
Let the base distribution be the unit Gaussian $\prob=\N$.
For any target distribution $P_* \in \PS_2(\R^d)$ such that $P_*=G_{V_*}\#\prob$ for some $V_* \in \HS$, and for any $R \geq \|V_*\|_{\F}$, consider the problem with explicit regularization:
\begin{equation*}
\min_{\|V\|_{\F} \leq R} L^{(n)}(V)
\end{equation*}
Let $V^{(n)}_R$ be a minimizer.
It follows that
\begin{align*}
L(V^{(n)}_R) &\leq L^{(n)}(V^{(n)}_R) + \sup_{\|V\|_{\F} \leq R} L(V)-L^{(n)}(V)\\
&\leq L^{(n)}(V_*) + \sup_{\|V\|_{\F} \leq R} L(V)-L^{(n)}(V)\\
&\leq L(V_*) + 2 \sup_{\|V\|_{\F} \leq R} L(V)-L^{(n)}(V)
\end{align*}
Then,
\begin{align*}
&\quad \sup_{\|V\|_{\F} \leq R} L(V)-L^{(n)}(V)\\
&\leq \sup_{\|V\|_{\F} \leq R} \iint_0^1 \text{Tr}\big[\nabla V\big(F_V(\x_{\tau},\tau), \tau\big)\big] d\tau d(P_*-P^{(n)}_*)(\x)\\
&\quad + \sup_{\|V\|_{\F} \leq R} \iint_0^1 \frac{1}{2}\|F_V(\x,1)\|^2 d(P_*-P^{(n)}_*)(\x)
\end{align*}
Denote the two terms by the random variables $A, B$.
Using the techniques of \cite[Theorem 2.11]{e2019residual} and \cite[Theorem 3.3]{han2021class} for bounding the Rademacher complexity of flow-induced functions, one can try to bound the following expectations
\begin{align*}
\E[A] &\lesssim \frac{R}{\sqrt{n}} \E\big[\max_{1\leq i \leq n}\|X_i\|~\big|~X_i\sim^{i.i.d.} P_* \big] \lesssim \frac{R^2}{\sqrt{n}}\\
\E[B] &\lesssim \frac{R^2}{\sqrt{n}} \E\big[\max_{1\leq i \leq n}\|X_i\|^2~\big|~X_i\sim^{i.i.d.} P_* \big] \lesssim \frac{R^4}{\sqrt{n}}
\end{align*}
Meanwhile, for the random fluctuations $A-\E[A], B-\E[B]$, one can apply the extension of McDiarmid's inequality to sub-Gaussian random variables \cite{kontorovich2014concentration}, and try to show that, with probability $1-\delta$ over the sampling of $P_*^{(n)}$,
\begin{align*}
A-\E[A] &\lesssim \frac{R^2 \sqrt{\log 1/\delta}}{\sqrt{n}}. \quad B-\E[B] \lesssim \frac{R^4 \sqrt{\log 1/\delta}}{\sqrt{n}}
\end{align*}

Combining these inequalities, one can conjecture that the solution $P^{(n)}_R = G_{V^{(n)}_R}\#\prob$ satisfies
\begin{equation*}
\KL(P_*\|P^{(n)}_R) = L(V^{(n)}_R) - L(V_*) \lesssim \frac{1+\sqrt{\log 1/\delta}}{\sqrt{n}} R^4
\end{equation*}

\section{Training}
\label{sec. training}

This section studies the training behavior of distribution learning models, and illustrates the differences between the three distribution representations discussed in Section \ref{sec. distribution representation}.
On one hand, we survey our results on the global convergence rates of models with the potential representation and fixed generator representation.
On the other hand, we present new results on the landscape of models with the free generator representation, and analyze the mode collapse phenomenon of GANs.

\subsection{Potential and fixed generator}

Models with these two representations are easier to analyze since their losses are usually convex over abstract functions.
Specifically, this section considers the bias-potential model (\ref{eq. bias-potential model}) and normalizing flow with stochastic interpolants (\ref{eq. diffusion NF loss}):
\begin{align*}
L(V) &= \int V dP_* + \ln \int e^{-V} d\prob\\
L(V) &= \frac{1}{2} \int_0^1\iint \big\| V\big((1-\tau)\x_0+\tau\x_1, \tau\big) - (\x_1-\x_0) \big\|^2 d\prob(\x_0)dP_*(\x_1)d\tau
\end{align*}
If we choose a convex function representation for $V$, then the optimization problem becomes convex.

To estimate the rate of convergence, one approach is to bound the test error by a stronger norm.
The following toy example shows how to bound the $L^2$ error by the RKHS norm $\|\cdot\|_{\HS}$.

\begin{example}[Kernel regression]
\normalfont
Fix any base distribution $\prob\in\PS(\R^d)$ and assume that the activation $\sigma$ is bounded.
For any target function $f_* \in \HS$, consider the regression loss $L(f)=\frac{1}{2}\|f-f_*\|_{L^2(\prob)}^2$.
Parametrize $f$ by $f_a \in \HS$ and train the parameter function $a$ by continuous time gradient descent (\ref{eq. GD continuous time}) with initialization $a \equiv 0$.
Then,
\begin{equation*}
L(f_t) \leq \frac{\|f_*\|_{\HS}^2}{2t}
\end{equation*}
\end{example}

\begin{proof}[Proof one]
Denote the loss by $L(a)=L(f_a)$.
Since $\sigma$ is bounded, $L$ has continuous Fr\'{e}chet derivative in $a \in L^2(\rho)$, so the gradient descent (\ref{eq. GD continuous time}) is well-defined.
Choose $a_* \in L^2(\rho)$ such that $f_*=f_{a_*}$ and $\|f_*\|_{\HS}=\|a_*\|_{L^2(\rho)}$.
Define the Lyapunov function
\begin{equation*}
E(t) = t~\big(L(a_t)-L(a_*)\big) + \frac{1}{2} ||a_*-a_t||^2_{L^2(\rho_0)}
\end{equation*}
Then,
\begin{align*}
\frac{d}{dt} E(t) &= \big(L(a_t)-L(a_*)\big) + t \cdot \frac{d}{dt} L(a_t) + \big\lb a_t-a_*, ~\frac{d}{dt} a_t \big\rb_{L^2(\rho_0)}\\
&\leq \big(L(a_t)-L(a_*)\big) - \big\lb a_t-a_*, ~\nabla L(a_t) \big\rb_{L^2(\rho_0)}
\end{align*}
By convexity, for any $a_0,a_1$,
\begin{equation*}
L(a_0) + \lb a_1-a_0, ~\nabla L(a_0) \rb \leq L(a_1)
\end{equation*}
Hence, $\frac{d}{dt}E \leq 0$. We conclude that $E(t) \leq E(0)$.
\end{proof}

\begin{proof}[Proof two]
Since $a_t$ evolves by (\ref{eq. GD continuous time}), $f_t$ evolves by (\ref{eq. population trajectory}):
\begin{equation}
\label{eq. kernel regression dynamics}
\frac{d}{dt} f_t = -K(f_t-f_*)
\end{equation}
where $K$ is the integral operator (\ref{eq. integral operator}) over $L^2(\prob)$.
Since $K$ is symmetric, positive semidefinite and compact, there exists an eigendecomposition with non-negative eigenvalues $\{\lambda_i\}_{i=1}^{\infty}$ and eigenfunctions $\{\phi_i\}_{i=1}^{\infty}$ that form an orthonormal basis of $L^2(\prob)$.
Consider the decomposition
\begin{equation*}
f_t = \sum_{i=1}^{\infty} c^i_t \phi_i, \quad f_* = \sum_{i=1}^{\infty} c_*^i \phi_i
\end{equation*}
It is known that the RKHS norm satisfies \cite{rahimi2008uniform,cucker2002mathematical}
\begin{equation*}
\|f_*\|_{\HS}^2 = \sum_{i=1}^{\infty} \frac{(c_*^i)^2}{\lambda_i}
\end{equation*}
Since $c^i_0=0$ by assumption, (\ref{eq. kernel regression dynamics}) implies that $c^i_t = (1-e^{-\lambda_i t})c^i_*$.
Hence,
\begin{align*}
L(f_t) &= \frac{1}{2} \sum_{i=1}^{\infty} (c^i_t-c^i_*)^2 = \frac{1}{2} \sum_{i=1}^{\infty} (c_*^i)^2 e^{-2\lambda_i t}\\
&\leq \frac{1}{2} \sum_{i=1}^{\infty} \frac{(c_*^i)^2}{\lambda_i} \sup_{\lambda\geq 0} \lambda e^{-2\lambda t} = \frac{\|f_*\|_{\HS}^2}{4et}
\end{align*}
\end{proof}

Using similar arguments, we have the following bounds on the test error of models trained on the population loss.

\begin{proposition}[Bias-potential, Proposition 3.1 of \cite{yang2022potential}]
\label{prop. bias potential RFM}
Given the setting of Theorem \ref{thm. bias potential generalization},
the distribution $P_t$ generated by the potential $V_t$ satisfies
\begin{equation*}
\KL(P_*\|P_t) \leq \frac{\|V_*\|^2_{\HS}}{2t}
\end{equation*}
\end{proposition}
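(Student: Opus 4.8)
The plan is to reduce the claimed bound to the convex-optimization argument already carried out in the kernel regression example (Proof one), after first identifying the test error with the optimization gap. The first step is to record the identity
\begin{equation*}
\KL(P_*\|P_t) = L(V_t) - L(V_*).
\end{equation*}
This follows by writing out $\KL\big(P_*\,\big\|\,P_t\big)$ with $P_*=\tfrac{1}{Z_*}e^{-V_*}\prob$ and $P_t=\tfrac{1}{Z_t}e^{-V_t}\prob$: the logarithms of the two normalizing constants together with the linear term $\int V_*\,dP_*$ recombine exactly into $-L(V_*)$, while $\int V_t\,dP_* + \ln Z_t = L(V_t)$. In particular $V_*$ is a global minimizer of $L$ (because $\KL\ge 0$), so it suffices to prove $L(V_t) - \inf L \le \|V_*\|_{\HS}^2/(2t)$.

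The second step checks that the hypotheses of the kernel regression example hold here. On $K=[0,1]^d$ with $\prob$ uniform and under Assumption \ref{assume. ReLU}, the features satisfy $0\le\sigma(\w\cdot\x+b)\le\|\w\|_1+|b|=1$ for $\x\in K$, so $V=f_{\a}$ is bounded by $\|\a\|_{L^2(\rho)}$ uniformly on $K$; hence $e^{-V}$ is bounded above and below on $K$, the loss $L(\a)=L(f_{\a})$ has a continuous Fr\'echet derivative in $\a$ (explicitly $\nabla L(\a)=\int \sigma(\w\cdot\x+b)\,d(P_*-P_{\a})(\x)$), and the gradient flow (\ref{eq. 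GD continuous time}) is well defined. Convexity of $L$ in $\a$ is immediate, since $\a\mapsto f_{\a}$ is linear, $\int V\,dP_*$ is linear in $V$, and $V\mapsto\ln\int e^{-V}d\prob$ is convex.

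The third step is the Lyapunov computation. Pick $\a_*\in L^2(\rho)$ with $f_{\a_*}=V_*$ on $K$ and $\|\a_*\|_{L^2(\rho)}=\|V_*\|_{\HS(K)}$ (this minimal-norm representative exists by Definition \ref{def. RKHS}, being an orthogonal projection in the Hilbert space $L^2(\rho)$), and set
\begin{equation*}
E(t) = t\big(L(\a_t)-L(\a_*)\big) + \tfrac12\|\a_*-\a_t\|_{L^2(\rho)}^2 .
\end{equation*}
Differentiating, inserting $\tfrac{d}{dt}\a_t=-\nabla L(\a_t)$, and using the convexity inequality $L(\a_*)\ge L(\a_t)+\lb \a_*-\a_t,\ \nabla L(\a_t)\rb$ exactly as in Proof one gives $\tfrac{d}{dt}E(t)\le 0$, hence $E(t)\le E(0)=\tfrac12\|\a_*\|_{L^2(\rho)}^2=\tfrac12\|V_*\|_{\HS}^2$. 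Dropping the nonnegative quadratic term gives $t\big(L(V_t)-L(V_*)\big)\le\tfrac12\|V_*\|_{\HS}^2$, which together with the first identity is the claim.

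The argument is essentially routine; the only points that need care are the well-posedness of the flow for the nonsmooth ReLU feature — handled by its boundedness on the compact $K$ — and the attainment of $\a_*$, both standard. The alternative eigendecomposition argument (Proof two) does not transfer directly because the log-partition term is not quadratic, so the Lyapunov/convexity route is the one I would follow.
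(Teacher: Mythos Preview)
Your proposal is correct and follows essentially the same approach as the paper: the paper does not give a standalone proof here but says ``using similar arguments'' in reference to the kernel regression example, and you have correctly expanded Proof one (the Lyapunov/convexity route) for the bias-potential loss, including the identification $\KL(P_*\|P_t)=L(V_t)-L(V_*)$ and the observation that Proof two does not transfer because the log-partition term is not quadratic.
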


\begin{proposition}[NF, Proposition 3.5 of \cite{yang2022flow}]
\label{prop. training error}
Given the setting of Theorem \ref{thm. diffusion NF generalization}, the distribution $P_t = G_{V_t}\#\prob$ generated by the trajectory $V_t$ satisfies
\begin{equation*}
W_2(P_*, P_t) \leq \frac{\|V_*\|_{\F} \|V_*\|_{\HS^2(M_*)}}{2\sqrt{t}}
\end{equation*}
\end{proposition}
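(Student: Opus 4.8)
The plan is to recognise the population loss (\ref{eq. diffusion NF loss}) as a kernel regression for the velocity field, read off an $L^2$ estimate from the kernel-regression example above, and then transfer it to a Wasserstein estimate for the pushforwards through a Gr\"onwall comparison of the two generating flows. Expanding the square in (\ref{eq. diffusion NF loss}) exactly as done there shows that, up to an additive constant, $L(V)=\tfrac12\|V-V_*\|_{L^2(M_*)}^2$, so $L$ is quadratic with $L^2(M_*)$-gradient $V-V_*$. Parametrising $V$ by a random feature function and running the gradient flow (\ref{eq. GD continuous time})--(\ref{eq. population trajectory}) therefore reproduces exactly the kernel-regression dynamics of that example, with integral operator $K$ acting on $L^2(M_*)$, so that its eigendata $\{\lambda_i,\phi_i\}$ are the ones defining $\|\cdot\|_{\HS^2(M_*)}$. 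Running the argument of the second proof of the example (zero initialisation, $V_t=\sum_i(1-e^{-\lambda_i t})\vb_i\phi_i$, and $\sup_{\lambda\ge 0}\lambda e^{-2\lambda t}=1/(2et)$) gives
\[
\|V_t-V_*\|_{L^2(M_*)}\;\le\;\frac{\|V_*\|_{\HS(\sprt M_*)}}{\sqrt{2et}}\;\le\;\frac{\|V_*\|_{\HS^2(M_*)}}{2\sqrt t},
\]
the last step being the stated comparison $\|\cdot\|_{\HS(\sprt M_*)}\le\|\cdot\|_{\HS^2(M_*)}$. The same decomposition also gives the uniform shrinkage $\|V_t\|_{\HS(\sprt M_*)}\le\|V_*\|_{\HS(\sprt M_*)}$ for all $t\ge 0$, which is used below.

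Under the hypotheses of Theorem \ref{thm. diffusion NF generalization} the interpolant velocity satisfies $G_{V_*}\#\prob=P_*$ (the construction recalled in Section \ref{sec. distribution representation} after (\ref{eq. Radon-Nikodym velocity})), so Proposition \ref{prop. W2 matching}---equivalently, the coupling $(G_{V_*}(Z),G_{V_t}(Z))$ with $Z\sim\prob$---gives $W_2(P_*,P_t)\le\|G_{V_t}-G_{V_*}\|_{L^2(\prob)}$. It thus remains to bound this $L^2(\prob)$-distance between the two flow maps by $\|V_*\|_{\F}\,\|V_t-V_*\|_{L^2(M_*)}$.

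To do so, fix $\z\in\sprt\prob$ and let $\x_\tau,\y_\tau$ solve $\tfrac{d}{d\tau}\x_\tau=V_t(\x_\tau,\tau)$ and $\tfrac{d}{d\tau}\y_\tau=V_*(\y_\tau,\tau)$ with $\x_0=\y_0=\z$, so that $\x_1=G_{V_t}(\z)$, $\y_1=G_{V_*}(\z)$, and $\y_\tau\sim P_\tau$ when $\z\sim\prob$. Writing $\tfrac{d}{d\tau}(\x_\tau-\y_\tau)=[V_t(\x_\tau,\tau)-V_t(\y_\tau,\tau)]+[V_t-V_*](\y_\tau,\tau)$ and applying Gr\"onwall,
\[
\|\x_1-\y_1\|\;\le\;\exp\!\Big(\int_0^1\Lip_{\x}V_t(\cdot,\tau)\,d\tau\Big)\int_0^1\big\|[V_t-V_*](\y_\tau,\tau)\big\|\,d\tau.
\]
For a sigmoid random feature function $\nabla_{\x}V_t(\cdot,\tau)$ is an $\E_\rho$-average of terms $\a_t\,\sigma'(\cdot)\,\w$, so by Cauchy--Schwarz, $\sigma'\le\tfrac14$ and $\int\|\w\|^2\,d\rho\le1$ (Assumption \ref{assume. sigmoid}) its operator norm is $\le\tfrac14\|V_t\|_{\HS}\le\tfrac14\|V_*\|_{\HS(\sprt M_*)}$, so the Gr\"onwall factor is at most $\exp\|V_*\|_{\HS(\sprt M_*)}=\|V_*\|_{\F}$. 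Since $(\y_\tau,\tau)\in\sprt M_*$, which is exactly where $V_t-V_*$ was controlled above, squaring, averaging over $\z\sim\prob$, applying Cauchy--Schwarz in $\tau$, and using $G_{V_*,\tau}\#\prob=P_\tau$ together with $M_*=\int_0^1 P_\tau\,d\tau$ (definition (\ref{eq. joint interpolant distribution})) yields $\|G_{V_t}-G_{V_*}\|_{L^2(\prob)}^2\le\|V_*\|_{\F}^2\,\|V_t-V_*\|_{L^2(M_*)}^2$.

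Chaining the two displays gives $W_2(P_*,P_t)\le\|V_*\|_{\F}\,\|V_t-V_*\|_{L^2(M_*)}\le\|V_*\|_{\F}\|V_*\|_{\HS^2(M_*)}/(2\sqrt t)$, as claimed. I expect the main obstacle to be the flow comparison: it must be run along the $V_*$-trajectory rather than the $V_t$-trajectory, because $V_t-V_*$ is only controlled on $\sprt M_*$ while the pushforward of $\prob$ under the $V_t$-flow need not stay there, and at the same time the exponential Gr\"onwall factor has to be kept at most $\|V_*\|_{\F}$---which is exactly what forces both the uniform-in-$t$ norm shrinkage and the fact that, for sigmoid random features, the spatial Lipschitz constant is dominated by the RKHS norm. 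Everything else (the quadratic-loss kernel regression, the Cauchy--Schwarz in $\tau$, and the identity $\int_0^1\|\cdot\|_{L^2(P_\tau)}^2\,d\tau=\|\cdot\|_{L^2(M_*)}^2$) is routine; note that this bound is precisely the training-error term appearing inside Theorem \ref{thm. diffusion NF generalization}.
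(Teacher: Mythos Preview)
Your proposal is correct and follows precisely the route the paper sets up: the quadratic structure of (\ref{eq. diffusion NF loss}) reduces to the kernel-regression example (second proof) for the $L^2(M_*)$ velocity error, and then a Gr\"onwall comparison along the $V_*$-flow combined with $G_{V_*,\tau}\#\prob=P_\tau$ converts this to the $W_2$ bound via Proposition~\ref{prop. W2 matching}. The paper does not reproduce the proof here (it is cited as Proposition~3.5 of \cite{yang2022flow}), but your argument matches the intended one; the only minor point to watch is that the Lipschitz bound on $V_t$ uses $\|\a_t\|_{L^2(\rho)}$, and one needs the observation that for the gradient-flow trajectory from zero initialization $\|\a_t\|_{L^2(\rho)}=\|V_t\|_{\HS(\sprt M_*)}\le\|V_*\|_{\HS(\sprt M_*)}$, which follows from the same eigen-decomposition you invoke.
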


\begin{corollary}[NF, Corollary 3.6 of \cite{yang2022flow}]
Given the setting of Corollary \ref{cor. diffusion NF approximate}, let $P_*$ be any compactly-supported target distribution and $V_*$ be the target velocity field (\ref{eq. Radon-Nikodym velocity}).
For any $\ep > 0$, there exists a distribution $M_{\ep}\in\PS(\R^{d+1})$ and velocity field $V_{\ep}\in\HS^2(M_{\ep})$ such that
\begin{equation*}
W_2(P_*, P_t) < \frac{\|V_*\|_{\F} \|V_*\|_{\HS^2(M_{\ep})}}{2\sqrt{t}} + \ep ~\big(\|V_{\ep}\|_{\F} ~t^{3/2} + 1\big)
\end{equation*}
\end{corollary}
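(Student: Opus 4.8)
The present corollary is exactly the bound on $W_2(P_*,P_t)$ that one needs inside the triangle inequality $W_2(P_*,P_t^{(n)})\le W_2(P_*,P_t)+W_2(P_t,P_t^{(n)})$ used to prove Corollary~\ref{cor. diffusion NF approximate}; the second piece there is the generalization gap of Theorem~\ref{thm. NF generalization gap} (which is why the $1/\sqrt n$ term is absent here), and the first piece is what remains. So the plan is to relax the hypothesis $V_*\in\HS^2(M_*)$ of Proposition~\ref{prop. training error} while keeping its two ingredients intact: a convex-optimization bound on the population training loss, and a flow-stability estimate that converts an $L^2$-velocity error into a $W_2$-distribution error through Proposition~\ref{prop. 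W2 matching}.

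First I would set up two approximation objects. Following the construction behind Proposition~\ref{prop. diffusion universal approximation} together with density of $\HS$ in $L^2(M_*)$, fix $V_\ep\in\HS(\R^{d+1},\R^d)$ with $\|V_*-V_\ep\|_{L^2(M_*)}<\ep$ and $W_2(P_*,G_{V_\ep}\#\prob)<\ep$; being in $\HS$, $V_\ep$ is Lipschitz with constant controlled by $\|V_\ep\|_{\HS}$, so its flow $G_{V_\ep}$ is well-defined, which matters because $V_*$ itself need not be Lipschitz. Separately, choose a measure $M_\ep\in\PS(\R^{d+1})$ whose support contains $\sprt M_*$ and on which $V_*$ has finite $\HS^2$ norm (e.g. an enlargement of $M_*$ by an absolutely continuous component, so that the integral operator over $L^2(M_\ep)$ has a richer spectrum). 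Since the loss (\ref{eq. diffusion NF loss}) probes a velocity field only at spacetime points $((1-\tau)\x_0+\tau\x_1,\tau)\in\sprt M_*$, the minimizer of $L$ is unchanged upon passing to the restriction of $V_*$ to $\sprt M_*$, which lies in $\HS(\sprt M_*)$ with $\|V_*\|_{\HS(\sprt M_*)}\le\|V_*\|_{\HS^2(M_\ep)}$. Using this restriction as the comparison point in the Lyapunov argument of the Kernel Regression example (Proof one) gives $L(V_t)-L(V_*)\le\|V_*\|_{\HS^2(M_\ep)}^2/(2t)$, hence, since $L(V)-L(V_*)=\frac{1}{2}\|V-V_*\|_{L^2(M_*)}^2$, the bound $\|V_t-V_*\|_{L^2(M_*)}\le\|V_*\|_{\HS^2(M_\ep)}/\sqrt t$.

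Next I would convert this velocity error into a distribution error, routing through the Lipschitz surrogate $V_\ep$: $W_2(P_*,P_t)\le W_2(P_*,G_{V_\ep}\#\prob)+W_2(G_{V_\ep}\#\prob,G_{V_t}\#\prob)<\ep+W_2(G_{V_\ep}\#\prob,G_{V_t}\#\prob)$, and by Proposition~\ref{prop. W2 matching} the last term is at most $\|G_{V_\ep}-G_{V_t}\|_{L^2(\prob)}$. A Gr\"onwall estimate on the ODE (\ref{eq. flow generator}), carried out along the \emph{fixed} reference flow $G_{V_\ep}$ so that the exponent is the $t$-independent quantity $\|V_\ep\|_{\HS}$, bounds this by the flow-stability factor $\|V_\ep\|_{\F}$ times the time-integrated discrepancy of the two driving fields along the interpolant path of $V_\ep$. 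That discrepancy is at most $\|V_\ep-V_*\|_{L^2(M_*)}+\|V_*-V_t\|_{L^2(M_*)}<\ep+\|V_*\|_{\HS^2(M_\ep)}/\sqrt t$, up to the cost of changing reference measure from $M_*$ to the path of $V_\ep$; since the two paths are $W_2$-close and the fields are Lipschitz with constants $\|V_\ep\|_{\HS}$ and $\|V_t\|_{\HS}\lesssim\sqrt t$ (the latter being the growth bound \cite[Proposition 5.3]{yang2022flow}), that cost enters only linearly, as a factor of order $\ep\sqrt t$, and after time integration produces an $\ep t^{3/2}$ contribution. Collecting the pieces and folding the $\|V_*\|_{\HS^2(M_\ep)}/\sqrt t$ part into the first displayed term of the statement (with its $\|V_*\|_{\F}$ prefactor, finite because $V_*\in\HS^2(M_\ep)$) yields the claimed inequality.

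I expect the main obstacle to be precisely this last bookkeeping: because $V_*$ is only in $L^2$, every flow comparison must be detoured through the regularized field $V_\ep$ and the enlarged measure $M_\ep$ rather than through $V_*$ directly, and one must verify that all Gr\"onwall constants depend on $\|V_\ep\|_{\HS}$ and on the explored radius $\|V_t\|_{\HS}\lesssim\sqrt t$ — and not on $\|V_*\|$ — so that the $t$-dependence stays \emph{polynomial} and assembles into exactly $t^{3/2}$, mirroring the argument of Theorem~\ref{thm. NF generalization gap}. A secondary point is exhibiting the enlarged measure $M_\ep$ on which $V_*$ acquires a finite $\HS^2$ norm; here one exploits that $V_*$ lies within $\ep$ in $L^2(M_*)$ of the fixed $\HS$ function $V_\ep$, so a modest thickening of the spectrum of the integral operator suffices to bring $V_*$ into $\HS^2(M_\ep)$ with a finite, $\ep$-dependent norm.
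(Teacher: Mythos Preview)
Note first that the paper does not supply its own proof of this corollary—it only cites it as Corollary~3.6 of \cite{yang2022flow}—so there is no in-paper argument to compare against; I can only assess your proposal on its merits.

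Your overall architecture is reasonable: route through the Lipschitz surrogate $V_\ep$ via the triangle inequality, use the convex (Lyapunov) bound on the population loss to control $\|V_t-V_*\|_{L^2(M_*)}$, and convert to a $W_2$ error by a Gr\"onwall flow-stability estimate, with the $t^{3/2}$ arising from the growth $\|V_t\|_{\HS}\lesssim\sqrt t$ under time integration. The chain $\|V_*\|_{\HS(\sprt M_*)}\le\|V_*\|_{\HS(\sprt M_\ep)}\le\|V_*\|_{\HS^2(M_\ep)}$ you invoke for the Lyapunov step is also correct once $\sprt M_*\subseteq\sprt M_\ep$.

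The gap is the construction of $M_\ep$ itself, which you treat as a ``secondary point'' but which is in fact the crux. You propose enlarging $M_*$ by an absolutely continuous piece so that the integral operator ``has a richer spectrum'' and $V_*$ acquires a finite $\HS^2$ norm. This heuristic is wrong: $V\in\HS^2(M)$ means precisely that $V=K_M g$ for some $g\in L^2(M)$, and with the sigmoid kernel every such $K_M g$ is a smooth function regardless of $M$. A $V_*$ that is not smooth therefore cannot be forced into $\HS^2(M_\ep)$ by any choice of measure, and your closing remark—that $V_*$ lies within $\ep$ in $L^2(M_*)$ of an $\HS$ function—does not help either, since $L^2$ proximity says nothing about the spectral decay that the $\HS^2$ norm measures. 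Without a genuine construction of $M_\ep$ (for instance, one tied to a regularized target $P_*^\ep$ whose associated interpolant velocity is \emph{provably} in $\HS^2$ of its own interpolant measure), the first term in the displayed bound is infinite and the inequality is vacuous. You would need to supply that construction explicitly, not defer it.
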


It seems probable that the $O(\ep t^{3/2})$ term can be strengthened to $O(\ep)$, which would imply universal convergence, i.e. convergence to any target distribution.

\subsection{Free generator: Landscape}

Models with the free generator representation are more difficult to analyze, since the loss $L(G)$ is not convex in $G$.
For instance, it is straightforward to check that if $\prob$ and $P_*$ are uniform over $[0,1]$, then the solution set $\{G_*~|~G_*\#\prob=P_*\}$, or equivalently the set of minimizers of $L$, is an infinite and non-convex subset of $L^2(\prob)$, and thus $L$ is non-convex.

Despite the non-convexity, there is still hope for establishing global convergence through a careful analysis of the critical points.
For instance, one can conjecture that there are no spurious local minima and that the saddle points can be easily avoided.
This section offers two results towards this intuition: a characterization of critical points for general loss functions of the form $L(G\#\prob)$, and a toy example such that global convergence can be determined from initialization.

In general, no matter how we parametrize the generator, the modeled distribution $P_t=G_t\#\prob$ satisfies the continuity equation during training:
Assume that $G(\x,\theta)$ is $C^1$ in $\x\in\R^d$ and $\theta\in\Theta$, where $\Theta$ is some Hilbert space.
Then, given any path $\theta_t$ that is $C^1$ in $t$,
for any smooth test function $\phi$,
\begin{align*}
\frac{d}{dt} \int \phi dP_t &= \frac{d}{dt} \int \phi\big(G(\x,\theta_t)\big) d\prob(\x) = \int \nabla \phi\big(G(\x,\theta_t)\big) \cdot \nabla_{\theta}G(\x,\theta_t) \dot{\theta}_t d\prob(\x)\\
&= \int \nabla \phi(\x) \cdot V_t(\x) dP_t(\x) = -\int \phi d\nabla \cdot (V_t P_t)
\end{align*}
where the velocity field $V_t$ is defined by
\begin{equation*}
\int \mathbf{f}(\x) \cdot V_t(\x) ~dP_t(\x) = \int \mathbf{f}\big(G(\x,\theta_t)\big) \cdot \nabla_{\theta}G(\x,\theta_t) \dot{\theta}_t ~d\prob(\x)
\end{equation*}
for any test function $\mathbf{f} \in L^2(P_t,\R^d)$.
Thus, $P_t$ is a weak solution to the continuity equation
\begin{equation*}
\partial_t P_t + \nabla \cdot(V_t P_t) = 0
\end{equation*}
In particular, the equation implies that no matter how $G$ is parametrized, the ``particles" of $P_t$ during training can only move continuously without jumps or teleportation.

For abstraction, it is helpful to consider the Wasserstein gradient flow \cite{santambrogio2015optimal,ambrosio2008gradient}:
For any loss function $L$ over $\PS_2(\R^d)$ and any initialization $P_0\in\PS_2(\R^d)$, define the training trajectory $P_t$ by
\begin{equation*}
\partial_t P_t + \nabla \cdot\Big(P_t\nabla\frac{\delta L}{\delta P}(P_t)\Big) = 0
\end{equation*}
where $\delta_P L$ is the first variation, which satisfies
\begin{equation*}
\frac{d}{d\ep} L\big(P+\ep(Q-P)\big) \big|_{\ep=0} = \int \frac{\delta L}{\delta P}(P_t) d(Q-P)
\end{equation*}
for any bounded and compactly-supported density function $Q$.

The Wasserstein gradient flow abstracts away the parametrization $\theta \mapsto G_{\theta}$ of the generator, and thus simplifies the analysis of critical points.
To relate to our problem, the following result shows that the Wasserstein gradient flow often shares the same critical points as the parametrized loss $L(\theta)=L(G_{\theta}\#\prob)$, and thus we are allowed to study the former instead.

\begin{proposition}[Comparison of critical points]
\label{prop. compare critical}
Given any loss $L$ over $\PS_2(\R^d)$, assume that the first variation $\delta_P L$ exists, is $C^1$, and $\nabla_{\x} \delta_P L(P) \in L^2(P,\R^d)$ for all $P\in\PS_2(\R^d)$.
Define the set of critical points
\begin{equation*}
CP = \big\{P\in\PS_2(\R^d) ~\big|~ \nabla_{\x} \delta_P L(P)(\x) = \mathbf{0} \text{ for } P \text{ almost all } \x \big\}
\end{equation*}
Given any base distribution $\prob\in\PS_2(\R^k)$ and any generator $G_{\theta}:\R^k\to\R^d$ parametrized by $\theta\in\Theta$, where $\Theta$ is a Hilbert space.
Assume that $G_{\theta}(\x)$ is Lipschitz in $\x$ for any $\theta$, and $C^1$ in $\theta$ for any $\x$, and that the gradient $\nabla_{\theta}G_{\theta}$ at any $\theta$ is a continuous linear operator $\Theta \to L^2(\prob,\R^d)$.
Define the set of generated distributions $\PS_{\Theta} = \{ G_{\theta}\#\prob, ~\theta\in\Theta\}$ and the set of critical points
\begin{equation*}
CP_{\Theta} = \big\{G_{\theta}\#\prob ~\big|~ \theta\in\Theta, ~\nabla_{\theta} L(G_{\theta}\#\prob) = 0 \big\}
\end{equation*}
Then, $CP \cap \PS_{\Theta} \subseteq CP_{\Theta}$.
Furthermore, if the parametrization $G_{\theta}$ is the random feature functions $G_{\a}\in\HS(\R^k,\R^d)$, and either Assumption \ref{assume. ReLU} or \ref{assume. sigmoid} holds, then
\begin{equation*}
CP_{\Theta} = CP \cap \PS_{\Theta}
\end{equation*}
\end{proposition}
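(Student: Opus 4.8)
The plan is to prove the two inclusions separately, with a chain rule for the parametrized loss $\ell(\theta):=L(G_\theta\#\prob)$ as the common tool. For $P=G_\theta\#\prob$ and a direction $\dot\theta\in\Theta$, the curve $t\mapsto P_t:=G_{\theta+t\dot\theta}\#\prob$ is, by the continuity-equation computation already carried out in the paragraph preceding the proposition, a weak solution of $\partial_t P_t+\nabla\cdot(V_t P_t)=0$, where $V_0$ is characterized by $\int \mathbf{f}\cdot V_0\,dP=\int \mathbf{f}(G_\theta(\x))\cdot\nabla_\theta G_\theta(\x)[\dot\theta]\,d\prob(\x)$ for $\mathbf{f}\in L^2(P,\R^d)$. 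Feeding the $C^1$ function $\delta_P L(P)$ into this identity (in place of the smooth test function) and using the definition of the first variation gives
\[
\langle\nabla_\theta\ell(\theta),\dot\theta\rangle=\int \nabla_\x\delta_P L(P)\big(G_\theta(\x)\big)\cdot\nabla_\theta G_\theta(\x)[\dot\theta]\;d\prob(\x).
\]
Here the hypotheses $\nabla_\x\delta_P L(P)\in L^2(P,\R^d)$ and $\nabla_\theta G_\theta\colon\Theta\to L^2(\prob,\R^d)$ continuous (together with Lipschitzness of $G_\theta$ in $\x$, which keeps $P_t\in\PS_2$) are precisely what is needed to differentiate under the integral and to make the pairing legitimate, via dominated convergence and Cauchy--Schwarz. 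Justifying this chain rule rigorously — in particular localizing $\delta_P L(P_t)$ to $\delta_P L(P_0)$ as $t\to0$, which uses the $C^1$/continuity-in-$P$ assumption on $\delta_P L$ — is the main technical obstacle; everything after it is essentially bookkeeping.

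Granting the chain rule, the inclusion $CP\cap\PS_\Theta\subseteq CP_\Theta$ is immediate. If $P=G_\theta\#\prob\in CP$, then $\nabla_\x\delta_P L(P)=\mathbf{0}$ on a set of full $P$-measure; writing $N$ for its complement, $P(N)=0$ forces $\prob(G_\theta^{-1}(N))=0$ by the defining property $P(\cdot)=\prob(G_\theta^{-1}(\cdot))$ of the pushforward, so $\nabla_\x\delta_P L(P)(G_\theta(\x))=\mathbf{0}$ for $\prob$-a.e.\ $\x$. Hence the integrand in the chain rule vanishes for every $\dot\theta$, so $\nabla_\theta\ell(\theta)=0$ and $P\in CP_\Theta$.

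For the reverse inclusion under the random-feature parametrization, specialize to $G_\a\in\HS(\R^k,\R^d)$, where $\nabla_\a G_\a(\x)[\dot\a]=\E_\rho[\dot\a(\w,b)\,\sigma(\w\cdot\x+b)]$ is linear in $\a$. Set $h_j(\x):=\big[\nabla_\x\delta_P L(P)(G_\a(\x))\big]_j$; since $P=G_\a\#\prob$ and $\nabla_\x\delta_P L(P)\in L^2(P)$, each $h_j\in L^2(\prob)$. Using Fubini (justified by $\int k(\x,\x)\,d\prob(\x)<\infty$ under Assumption~\ref{assume. ReLU} or~\ref{assume. sigmoid}), the chain-rule formula shows that $\nabla_\a\ell(\a)=0$ is equivalent to: for each coordinate $j$ and $\rho$-a.e.\ $(\w,b)$,
\[
\int h_j(\x)\,\sigma(\w\cdot\x+b)\,d\prob(\x)=0.
\]
Thus $h_j$ is orthogonal in $L^2(\prob)$ to the feature $\x\mapsto\sigma(\w\cdot\x+b)$ for $\rho$-a.e.\ $(\w,b)$, hence orthogonal to every $f_\a\in\HS(\R^k,\R)$, hence $h_j\equiv0$ by density of $\HS$ in $L^2(\prob)$ (Lemma~\ref{lemma. universal approximation L2}). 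Therefore $\nabla_\x\delta_P L(P)\circ G_\a=\mathbf{0}$ $\prob$-a.e., and pushing forward once more, $\nabla_\x\delta_P L(P)=\mathbf{0}$ $P$-a.e., i.e.\ $P\in CP\cap\PS_\Theta$. Combined with the first inclusion, $CP_\Theta=CP\cap\PS_\Theta$.
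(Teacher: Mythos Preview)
The proposal is correct and follows essentially the same route as the paper's proof: both establish the chain-rule identity $\langle\nabla_\theta\ell(\theta),\dot\theta\rangle=\int\nabla_\x\delta_P L(P)(G_\theta(\x))\cdot\nabla_\theta G_\theta(\x)[\dot\theta]\,d\prob(\x)$ and use it for both inclusions. The only cosmetic difference in the reverse inclusion is that the paper upgrades the $\rho$-a.e.\ orthogonality to all $(\w,b)$ by continuity of the integrand and then applies Hornik's universal approximation theorem, whereas you route through density of $\HS$ in $L^2(\prob)$ via Lemma~\ref{lemma. universal approximation L2} --- equivalent endpoints.
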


Below is an example use case of this proposition.

\begin{example}
Consider the GAN loss $L(P)$ in (\ref{eq. MMD GAN}) induced by discriminators that are random feature functions (\ref{eq. RFM}).
Assume that the target distribution $P_* \in \PS_{2,ac}(\R^d)$ is radially symmetric, the parameter distribution $\rho$ in (\ref{eq. RFM}) is radially symmetric in $\w$ conditioned on any $b$, and that the activation $\sigma$ is $C^1$.
Consider the point mass $P=\delta_{\mathbf{0}}$.
Then,
\begin{align*}
\nabla\frac{\delta L}{\delta P}(P)(\mathbf{0}) &= \nabla \int k(\x,\x') d(P-P_*)(\x') \big|_{\x=\mathbf{0}}\\
&= \iint \w\sigma'(\w\cdot\mathbf{0}+b) \sigma(\w\cdot\x'+b) d(P-P_*)(\x') d\rho(\w,b)\\
&= \frac{1}{2} \iint \w\sigma'(b) \sigma(\w\cdot\x'+b) d(P-P_*)(\x') d\rho(\w|b) d\rho(b)\\
&\quad +\frac{1}{2} \iint (-\w)\sigma'(b) \sigma\big((-\w)\cdot(-\x')+b\big) d(P-P_*)(\x') d\rho(\w|b) d\rho(b)\\
&= \mathbf{0}
\end{align*}
Thus, $\nabla \delta_P L(P)= \mathbf{0}$ for $P$ almost all $\x$, and $P \in CP$.
It follows from Proposition \ref{prop. compare critical} that $P$ is a critical point ($P \in CP_{\Theta}$) for any parametrized generator $G_{\theta}$ that can express the constant zero function.
\end{example}

A probability measure $P$ is called singular if $P$ cannot be expressed as a density function ($P \in \PS(\R^d)-\PS_{ac}(\R^d)$).
The critical points of an expectation-based loss are often singular distributions such as $\delta_{\mathbf{0}}$ from the previous example.
The following toy model shows that the critical points may consist of only global minima and saddle points that are singular.

Consider a one-dimensional setting.
Model the generator by any function $G \in L^2(\prob,\R)$, where the base distribution $\prob$ is conveniently set to be uniform over $[0,1]$.
Given any target distribution $P_* \in \PS_{2,ac}(\R)$ and any initialization $G_0 \in L^2(\prob,\R)$, consider the dynamics
\begin{equation}
\label{eq. generalized W2 GD}
\frac{d}{dt} G_t(z) = \int x ~d\pi_t\big(x\big|G_t(z)\big) - G_t(z)
\end{equation}
where $\pi_t(\cdot|\cdot)$ is the conditional distribution of the optimal transport plan $\pi_t \in \PS(\R\times\R)$ between $P_t = G_t\#\prob$ and $P_*$.
By Brennier's theorem \cite{villani2003topics}, since $P_*$ is absolutely continuous, the optimal transport plan is unique.

Note that if $P_t$ is also absolutely continuous, then Brennier's theorem implies that the transport plan is deterministic: $\pi(\cdot|x_0) = \delta_{\nabla\phi(x_0)}$ for some convex potential $\phi$.
Since the first variation of $W_2^2(P, P_*)$ is exactly the potential $\phi$ \cite{santambrogio2015optimal}, the dynamics (\ref{eq. generalized W2 GD}) when restricted to $\{G~|~G\#\prob \in \PS_{ac}(\R)\}$ becomes equivalent to the gradient flow on the loss
\begin{equation*}
L(G) = W_2^2(G\#\prob, P_*)
\end{equation*}

Since the dynamics (\ref{eq. generalized W2 GD}) is not everywhere differentiable, we consider stationary points instead of critical points, and extend the definition of saddle points:
A stationary point $G\in L^2(\prob,\R^d)$ is a generalized saddle point if for any $\ep>0$, there exists a perturbation $h$ ($\|h\|_{L^2(\prob)} < \ep$) such that $L(G+h)<L(G)$.

\begin{proposition}
\label{prop. W2 GD landscape}
For any target distribution $P_*\in\PS_{2,ac}(\R)$, the stationary points of the dynamics (\ref{eq. generalized W2 GD}) consist only of global minima and generalized saddle points.
If $G$ is a generalized saddle point, then $G\#\prob$ is a singular distribution, and there exists $x$ such that $(G\#\prob)(\{x\})>0$.
Moreover, global convergence holds
\begin{equation*}
\lim_{t\to\infty}W_2(P_*, P_t) = 0
\end{equation*}
if and only if the initialization $P_0\in\PS_{2,ac}(\R)$.
\end{proposition}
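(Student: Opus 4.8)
The plan is to prove the proposition in three stages --- classify the stationary points of (\ref{eq. generalized W2 GD}), identify the non-minimizing ones with the generalized saddle points (and show they carry an atom), and establish the convergence dichotomy --- throughout exploiting the one-dimensional monotone structure: for atomless $\mu\in\PS_2(\R)$ the quantile function $F_\mu^{-1}$ pushes the uniform distribution to $\mu$ and is the Brenier map, and for any two atomless measures the unique optimal coupling is deterministic, given by the monotone rearrangement $F_\nu^{-1}\circ F_\mu$. The right-hand side of (\ref{eq. generalized W2 GD}) vanishes for $\prob$-a.e.\ $z$ iff the barycentric projection $b(y):=\int x\,d\pi(x\,|\,y)$ of the optimal plan $\pi$ between $P:=G\#\prob$ and $P_*$ satisfies $b(y)=y$ for $P$-a.e.\ $y$. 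If $P$ is atomless then $b=F_{P_*}^{-1}\circ F_P$ is the Brenier map, so $b\#P=P_*$, and together with $b=\mathrm{id}$ $P$-a.e.\ this forces $P=\mathrm{id}\#P=P_*$, i.e.\ $G$ minimizes $L(G)=W_2^2(G\#\prob,P_*)$; if $P$ has an atom then $P\neq P_*$ (as $P_*$ is atomless), so $G$ is not a global minimum. Hence the stationary set splits into the global minima and the stationary $G$ whose law has an atom.

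Next I would show that \emph{any} $G$ with $P=G\#\prob$ possessing an atom fails to be a local minimum, so the second class above consists of generalized saddle points; and conversely a generalized saddle is stationary and non-minimizing, hence of the second class, hence its law has an atom and in particular is singular. For the non-local-minimum claim, let $a$ be an atom of $P$ with mass $m=P(\{a\})>0$; under the monotone coupling its fibre is sent to a set $J$ with $P_*(J)=m$, contributing cost $\int_J(x-a)^2\,dP_*(x)>0$. Given $\ep>0$ choose $s>0$ small and let $h$ be supported on $Z_a:=\{z:G(z)=a\}$, replacing the value $a$ there by $(1-s)a+sX(z)$, where $X$ parametrizes $\tfrac1m P_*|_J$ monotonically over $Z_a$. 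Then $\|h\|_{L^2(\prob)}^2=s^2\int_J(x-a)^2\,dP_*(x)\to 0$, so $\|h\|_{L^2(\prob)}<\ep$ once $s$ is small; and coupling $(G+h)\#\prob$ to $P_*$ by the old coupling off the block and by $(1-s)a+sx\mapsto x$ on it gives $L(G+h)\le L(G)-\bigl(1-(1-s)^2\bigr)\int_J(x-a)^2\,dP_*(x)<L(G)$.

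For convergence, if $P_0$ is absolutely continuous put $T:=F_{P_*}^{-1}\circ F_{P_0}$ and $\Phi_t:=e^{-t}\mathrm{id}+(1-e^{-t})T$; I would verify that $G_t:=\Phi_t\circ G_0$ solves (\ref{eq. generalized W2 GD}): $\Phi_t$ is strictly increasing so $P_t=\Phi_t\#P_0$ stays absolutely continuous, the optimal map from $P_t$ to $P_*$ is $T\circ\Phi_t^{-1}$, and then $b_t(G_t(z))-G_t(z)=T(G_0(z))-\Phi_t(G_0(z))=e^{-t}\bigl(T(G_0(z))-G_0(z)\bigr)=\dot G_t(z)$. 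Since $(T\circ G_0)\#\prob=P_*$, Proposition \ref{prop. W2 matching} gives $W_2(P_t,P_*)\le\|G_t-T\circ G_0\|_{L^2(\prob)}=e^{-t}\|G_0-T\circ G_0\|_{L^2(\prob)}\to 0$ (this is the Wasserstein gradient flow of $W_2^2(\cdot,P_*)$ restricted to absolutely continuous measures, as observed in the text just before the proposition). For the converse, suppose $P_0$ has an atom at $a$ of mass $m$. In quantile coordinates an atom is a flat of $u\mapsto F_{P_t}^{-1}(u)$, and (\ref{eq. generalized W2 GD}) becomes $\tfrac{d}{dt}F_{P_t}^{-1}(u)=b_t\bigl(F_{P_t}^{-1}(u)\bigr)-F_{P_t}^{-1}(u)$, whose right-hand side is a single number at all $u$ in that flat; hence the flat persists and $P_t$ retains an atom of mass $\ge m$ for every $t$. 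Along any $W_2$-convergent sequence second moments stay bounded, so the atom locations stay bounded; passing to a subsequence and using the Portmanteau inequality on shrinking closed balls would produce an atom of mass $\ge m$ in $P_*$, contradicting $P_*\in\PS_{ac}(\R)$, so $W_2(P_t,P_*)\not\to 0$.

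I expect the crux to be in the convergence part, and specifically in the regularity needed to make the persistence argument rigorous: $b_t=F_{P_*}^{-1}\circ F_{P_t}$ need not be Lipschitz, so uniqueness of the trajectory --- and hence the statement that particles sharing a location remain together --- requires either a priori density bounds, a monotonicity argument, or a direct analysis of the one-dimensional continuity equation. The most delicate sub-case is an initialization that is singular but atomless: the atom-persistence mechanism does not apply there, so one must instead control the singular part of $P_t$ directly, working against the regularizing tendency of the identity drift in (\ref{eq. generalized W2 GD}).
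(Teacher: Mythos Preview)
Your classification of stationary points and the saddle-point perturbation are essentially the paper's argument; your atomless/atomic dichotomy is if anything sharper than the paper's absolutely-continuous/singular split. The real difference is in the convergence analysis. Rather than guess $G_t=\Phi_t\circ G_0$, verify, and then argue the converse separately via atom-persistence and Portmanteau, the paper proves a single lemma that handles everything at once: for \emph{any} trajectory of (\ref{eq. generalized W2 GD}) the quantity $m_t\circ G_t$ is constant in $t$. The reasoning is that monotonicity of the optimal plan $\pi_t$ makes $m_t$ non-decreasing, so the flow is order-preserving; hence the rank $P_t((-\infty,G_t(z)])=\prob(\{z':G_0(z')\le G_0(z)\})$ is frozen in $t$, and $m_t(G_t(z))$ --- being the $P_*$-barycentre of the quantile interval determined by that rank --- is frozen as well. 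The dynamics then reads $\dot G_t=m_0\circ G_0-G_t$, with explicit solution $G_t=e^{-t}G_0+(1-e^{-t})\,m_0\circ G_0$ for \emph{every} initialization. Both directions of the dichotomy drop out of this one formula: the limit is $m_0\#P_0$, which equals $P_*$ when $P_0$ is atomless and carries an atom at $m_0(x_0)$ when $P_0(\{x_0\})>0$. This sidesteps the uniqueness and regularity concerns you flag (no Lipschitz control on $b_t$, no subsequence extraction) and also dispatches the singular-but-atomless sub-case you identify as delicate --- indeed the formula shows convergence holds there too.
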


This toy example confirms the intuition that despite the loss $L(G)$ is nonconvex, global convergence is still achievable.
Moreover, all saddle points have one thing in common, that part of the mass has collapsed onto one point.

\subsection{Free generator: Mode collapse}

The previous section has presented a general study of the critical points of models with free generator, while this section focuses on a particular training failure that is common to GANs, the mode collapse phenomenon. Mode collapse is characterized as the situation when the generator $G_t$ during training maps a positive amount of mass of $\prob$ onto the same point \cite{salimans2016improved,kodali2017convergence,mao2019mode,pei2021alleviating}, and is identified as the primary obstacle for GAN convergence.
This characterization is analogous to the saddle points from Proposition \ref{prop. W2 GD landscape} that are also singular distributions.
Despite that in the setting of Proposition \ref{prop. W2 GD landscape}, the singular distributions can be avoided and the toy model enjoys global convergence, how mode collapse occurs in practice remains a mystery.

To provide some insight into the mode collapse phenomenon, we demonstrate with toy examples two mechanisms that can lead to mode collapse.

Denote by $U[x,y]$ the uniform distribution over the interval $[x,y]$.
Let the base and target distributions be $\prob=P_*=U[0,1]$.
Model the generator by $G(x)=ax$, and discriminator by $D(x)=b\phi(x)$ for some $\phi$ to be specified.
Consider the following GAN loss based on (\ref{eq. dual norm squared})
\begin{equation*}
\min_a \max_b L(a,b) = \int D ~d(G\#\prob-P_*) + \frac{c}{2}|b|^2
\end{equation*}
where $c \geq 0$ is the strength of regularization.
Train $a,b$ by continuous time gradient flow
\begin{equation*}
\frac{d}{dt}a_t = -\partial_a L(a_t,b_t), \quad \frac{d}{dt}b_t = \partial_b L(a_t,b_t)
\end{equation*}
with initialization $a_0>0$ and $b_0=0$.
Mode collapse happens when $G_t\#\prob=[-a_t,a_t]$ becomes a singular distribution, i.e. when $a_t=0$.

\vs
\textbf{Case one.}
This example shows that a non-differentiable discriminator can lead to mode collapse.
Set $\phi(x) = |x| = \frac{\text{ReLU}(x)+\text{ReLU}(-x)}{2}$.
Restricting to the half space $\{(a,b), ~a>0\}$, the loss and training dynamics become
\begin{align*}
L(a,b) &= \frac{a-1}{2}b - \frac{cb^2}{2}\\
\frac{d}{dt}a_t &= -\frac{b}{2}, \quad \frac{d}{dt} b_t = \frac{a-1}{2} - cb
\end{align*}
Assume that $c \in [0,1)$.
Then, the unique solution is given by
\begin{align*}
a_t  &= 1 + (a_0-1) e^{-ct/2} \Big(\cos\big(\sqrt{1-c^2}~t/2\big) + \frac{c}{\sqrt{1-c^2}} \sin\big(\sqrt{1-c^2}~t/2\big) \Big)\\
b_t &= \frac{a_0}{\sqrt{1-c^2}} e^{-ct/2} \sin\big(\sqrt{1-c^2}~t/2\big)
\end{align*}
At time $T = \frac{2\pi}{\sqrt{1-c^2}}$, we have
\begin{equation*}
a_T = 1 - (a_0-1) e^{-c\pi/\sqrt{1-c^2}}
\end{equation*}
Thus, if $a_0 > 1+e^{c\pi/\sqrt{1-c^2}}$, then the trajectory $a_t$ must hit the line $\{a=0\}$ at some $t \in (0,T)$, and thus mode collapse happens.

This process is depicted in Figure \ref{fig: mode collapse} (Left).
In general, one can conjecture that mode collapse may occur at the locations where $\nabla D_t$ is discontinuous.

\begin{figure}[H]
\centering
\subfloat{\includegraphics[scale=0.4]{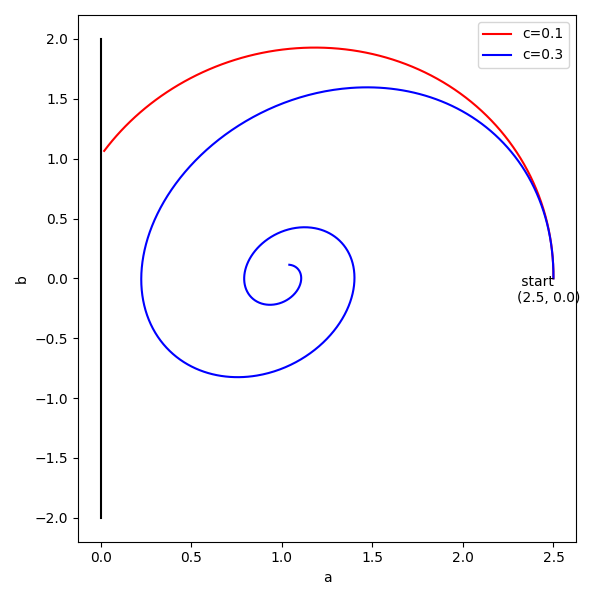}}
\quad
\subfloat{\includegraphics[scale=0.4]{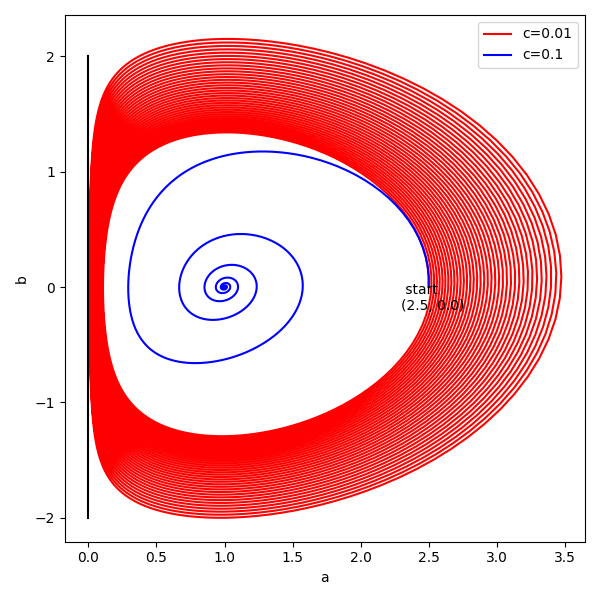}}
\caption{Mode collapse during GAN training. Left: Case 1. Right: Case 2. The horizontal and vertical axes are the parameters $a$ and $b$. The red curves are trained with small regularization $c$ and end up in mode collapse $a=0$, while the blue curves have larger $c$ and converge to the stationary point $(1,0)$. The initialization is $(2.5,0)$ and the learning rate for Case 2 is $\gamma=0.1$.}
\label{fig: mode collapse}
\end{figure}

\textbf{Case two.}
This example shows that the accumulation of numerical error due to the finite learning rate can lead to mode collapse.
This result is analogous to \cite{mescheder2018training} which shows that if $P_t$ is a point mass, numerical error can cause it to diverge to infinity.

Set $\phi=\frac{1}{2}x^2$.
The loss and training dynamics become
\begin{align*}
L(a,b) &= \frac{a^2-1}{6}b - \frac{cb^2}{2}\\
\frac{d}{dt}a_t &= -\frac{ab}{3}, \quad \frac{d}{dt} b_t = \frac{a^2-1}{6} - cb
\end{align*}
Given a learning rate $\gamma>0$, consider the discretized training dynamics
\begin{equation*}
a_{(k+1)\gamma} = a_{k\gamma} - \gamma \frac{a_{k\gamma} b_{k\gamma}}{3}, \quad b_{(k+1)\gamma} = b_{k\gamma} + \gamma \Big( \frac{a_{k\gamma}^2-1}{6} - cb_{k\gamma} \Big)
\end{equation*}
for every $k \in \mathbb{N}$

In general, given a discrete dynamics in the form of $\theta_{(k+1)\gamma} = \theta_{k\gamma} + \gamma f(\theta_{k\gamma})$, one can fit the sequence $\theta_{k\gamma}$ by the continuous time solution of an ODE $\frac{d}{dt}\theta_t = \tilde{f}(\theta_t)$, where
$\tilde{f}=f+\gamma h$ for some function $h$.
Assume that the two trajectories match at each $t=k\gamma$,
\begin{align*}
f(\theta_{k\gamma}) &= \gamma^{-1} (\theta_{(k+1)\gamma}-\theta_{k\gamma}) = \gamma^{-1} \int_0^{\gamma} \tilde{f}(\theta_{k\gamma+s}) ds = \int_0^1 (f+\gamma h)(\theta_{(k+s)\gamma}) ds\\
&= \int_0^1 f(\theta_{k\gamma}) + s\gamma \nabla f(\theta_{k\gamma}) \cdot f(\theta_{k\gamma}) + \gamma h(\theta_{k\gamma}) + O(\gamma^2) ds\\
&= f(\theta_{k\gamma}) + \frac{\gamma}{2} \nabla f(\theta_{k\gamma}) \cdot f(\theta_{k\gamma}) + \gamma h(\theta_{k\gamma}) + O(\gamma^2)
\end{align*}
It follows that
\begin{equation*}
\tilde{f} = f - \frac{\gamma}{2} \nabla f \cdot f + O(\gamma^2)
\end{equation*}

Plugging in $\theta_t = [a_t,b_t]$, we obtain the following approximate ODE
\begin{align*}
\frac{d}{dt}a_t &= -\frac{ab}{3} + \gamma \Big(-\frac{ab^2}{18} + \frac{a(a^2-1)}{36} - \frac{cab}{6}\Big)\\
\frac{d}{dt}b_t &= \frac{a^2-1}{6} - cb + \gamma \Big(\frac{a^2b}{18} + \frac{c(a^2-1)}{12} - \frac{c^2b}{2}\Big)
\end{align*}
Define the energy function
\begin{equation*}
H(a,b) = \frac{b^2}{2} + \frac{a^2-1}{4} - \frac{\log a}{2}
\end{equation*}
Then
\begin{equation*}
\frac{d}{dt} H(a_t,b_t) = -cb^2 + \gamma \Big( \frac{(a^2+1)b^2}{36} + \frac{(a^2-1)^2}{72} - \frac{c^2b^2}{2}\Big)
\end{equation*}
Thus, if $c \leq \min(\frac{1}{17}, \frac{\gamma}{144})$,
\begin{equation*}
\frac{d}{dt} H(a_t,b_t) \geq \frac{\gamma}{72}\big[(a^2+1)b^2 + (a^2-1)^2\big] > 0
\end{equation*}

The energy $H$ is strictly convex and its only minimizer is the stationary point $(a,b)=(1,0)$ where $H=0$.
Assume that the initialization $(a_0,b_0)\neq (1,0)$ and $a_0 > 0$.
Since the energy is non-decreasing, $H(a_t,b_t)\geq H(a_0,b_0) > 0$, and thus the trajectory will never enter the set $S_0 = \{(a,b), ~H(a,b)< H(a_0,b_0)\}$.
Since $S_0$ is an open set that contains $(1,0)$, there exists $r>0$ such that the open ball $B_r((1,0)) \subseteq S_0$.
Since $a_t \geq 0$, the trajectory $(a_t,b_t)$ satisfies
\begin{equation*}
\frac{d}{dt} H(a_t,b_t) \geq \frac{\gamma}{72}\big[(a^2+1)b^2 + (a+1)^2(a-1)^2\big] \geq \frac{\gamma r^2}{72}
\end{equation*}
and thus
\begin{equation*}
H(a_t,b_t) \geq H(a_0,b_0) + \frac{\gamma r^2}{72}t
\end{equation*}
Consider the four subsets $S_1=\{b\leq 0, ~a>1\}$, $S_2=\{b>0, ~0<a \leq 1\}$, $S_3=\{b\geq 0, ~0<a< 1\}$, $S_4=\{b< 0, ~a\geq 1\}$ that partition the space $\{(a,b)~|~(a,b)\neq (1,0), ~a>0\}$.
Then, the trajectory $(a_t,b_t)$ repeatedly moves from $S_1$ to $S_2$ to $S_3$ to $S_4$ and back to $S_1$.
In particular, it crosses the line $\{0<a<1, b=0\}$ for times $t_1, t_2 \dots$ that go to infinity.
It follow that
\begin{align*}
\inf_{t\geq 0} a_t &\leq \inf_{i\in\mathbb{N}} a_{t_i} \leq \inf_{i\in\mathbb{N}} \exp\Big[-2\Big(\frac{-\log a_{t_i}}{2}\Big)\Big]\\
&\leq \inf_{i\in\mathbb{N}} \exp\big[-2H(a_{t_i},b_{t_i})\big]\\
&\leq \inf_{i\in\mathbb{N}} \exp\Big[-2\big(H(a_0,b_0) + \frac{\gamma r^2}{72}t_i\big)\Big]\\
&= 0
\end{align*}
Hence, $a_t$ converges to $0$ exponentially fast, and then numerical underflow would lead to mode collapse.
This process is depicted in Figure \ref{fig: mode collapse} (Right).

\vs
In summary, these two examples indicate two possible causes for mode collapse.
On one hand, if the discriminator is non-smooth, then the gradient field $\nabla D_t$ may squeeze the distribution $G_t\#\prob$ at the places where $\nabla D_t$ is discontinuous and thus form a singular distribution.
On the other hand, the numerical error due to the finite learning rate may amplify the oscillatory training dynamics of $G_t$ and $D_t$, and if this error is stronger than the regularization on $D_t$, then the norm of $G_t$ can diverge and lead to collapse.

Note that, however, if we consider two-time-scale training such that $b_t$ is always the maximizer, then the loss becomes proportional to $L(a)=(|a|-1)^2$ or $(a^2-1)^2$, and training converges to the global minimum as long as $a_0 \neq 0$.

\section{Discussion}
\label{sec. discussion}

This paper studied three aspects of the machine learning of probability distributions.

First, we proposed a mathematical framework for generative models and density estimators that allows for a unified perspective on these models.
Abstractly speaking, the diversity of model designs mainly arises from one factor, the vertical or horizontal way of modeling discussed in Section \ref{sec. background}.
When applied to distribution representation, it leads to the options of potential representation and transport representation, and the latter ramifies into the free generator and fixed generator depending on whether a probabilistic coupling can be fixed.
Similarly, when applied to the loss, it leads to three loss types, depending on whether the difference is measured vertically or horizontally, with or without a fixed target.
Then, the rest of the design process is to try to realize each category in Table \ref{table. categorization} by satisfying the various constraints of implementation, for instance, whether to compute the density of $G\#\prob$ directly or indirectly, and which random path is chosen to achieve the product coupling $\prob\times P_*$.
Thereby, all the major models are derived.
By isolating the factors of distribution representation, loss type and function representation, this framework allows for a more transparent study of training and landscape (who depend more on distribution representation and loss type) and generalization error (which depends more on function representation).

Second, we studied the seeming conflict between the memorization phenomenon and the generalization ability of the models, and reviewed our results that resolve this conflict.
On one hand, we confirmed that the models satisfy the universal approximation property (some models even enjoy universal convergence) and thus memorization is inevitable.
On the other hand, function representations defined by expectations are insensitive to the sampling error $P_*-P_*^{(n)}$, so that the training trajectory tends to approximate the hidden target distribution before eventually diverging towards memorization.
In particular, our results established generalization error bounds of the form $O(n^{-\alpha})$ with $\alpha=\frac{1}{4},\frac{1}{6},\frac{1}{8}$ for several models.
There should be room for improvement, but for now we are content that the models can escape from the curse of dimensionality.
Considering that this generalization ability is mostly an effect of the function representations, it seems reasonable to expect that good generalization is enjoyed by all models regardless of the choice of distribution representation and loss type.

Third, we discussed the training dynamics and loss landscape.
For the potential and fixed generator representations, the convexity of their distribution parametrizations and loss functions enable the estimation of the rates of global convergence.
For the free generator representation, despite the loss is non-convex, we demonstrated that the critical points have tractable forms, and also identified two general mechanisms common to the min-max training of GANs that can provably lead to mode collapse.
It seems worthwhile to devote more effort in the design of models with the fixed generator representation, since they are as expressive as the free generator models while their convexity greatly eases training.
It is not clear at this moment whether the product coupling $\prob\times P_*$ is the best choice for the fixed generators, and whether the diffusion SDE and linear interpolants are the most effective random paths, so there is much to explore.

\vs
To conclude, we list a few interesting topics that have not been covered in this paper
\begin{itemize}
\item Unstructured data: Our analysis was conducted only in the Euclidean space, whereas most of the applications of distribution learning models involve unstructured data such as images, texts and molecules.
One thing of practical importance is that the performance of generative models for unstructured data is judged by human perception or perceptual loss \cite{johnson2016perceptual}, which can greatly differ from the Euclidean metric and thus the $W_2$ metric.
To train the model to have higher fidelity, one approach is to use the adversarial loss of GANs such that the hidden features of the discriminators can be seen as an embedding space that captures fidelity.
A related approach is to rely on a pretrained feature embedding as in \cite{rombach2022high,hou2017deep}.

\item Prior knowledge:
For supervised tasks involving unstructured data, it is often helpful to instill prior knowledge from humans into the models through self-supervised pretraining.
A well-known example is the approximate invariance of image classification with respect to color distortion and cropping, and models that are pretrained to be insensitive to these augmentations can achieve higher test accuracy after training \cite{chen2020simple,grill2020bootstrap,caron2021emerging}.
It could be beneficial to try to boost distribution learning models by prior knowledge.
One example is given by \cite{zhang2018monge} such that a generative model for sampling a thermodynamic system is designed to respect the spatial symmetry of the system.

\item Conditional generation: In practice, people are more interested in estimating conditional distributions, e.g. generate images conditioned on text descriptions \cite{ramesh2021zero,rombach2022high}. Incorporating a context variable can be done by simply allowing an additional input variable in the parameter functions \cite{mirza2014conditional}, but it can also be accomplished with tricks that minimize additional training \cite{song2020score}.

\item Factor discovery:
For generative models, instead of using $G\#\prob$ as a blackbox for sampling, it could be useful to train $G$ in a way such that $\prob$ has semantic meaning, e.g. for image synthesis, given a random image $G(Z), Z\sim\prob$ of a face, one coordinate of $Z$ may control hair style and another may control head orientation.
This unsupervised task is known as factor discovery \cite{tabak2018explanation}, and some solutions are provided by \cite{chen2016infogan,higgins2016beta,karras2019style} with application to semantic photo editing \cite{ling2021editgan}.

\item Density distillation:
The basic setting considered in this paper is to estimate a target distribution given a sample set; yet, another task common to scientific computing is to train a generative model to sample from a distribution $\frac{1}{Z}e^{-V}$ given a potential function $V$.
One popular approach \cite{li2018renormalization,zhang2018monge,noe2019boltzmann,cao2022learning} is to use a modified normalizing flow with the reverse KL-divergence.

\end{itemize}

\section{Proofs}
\label{sec. proof}

\subsection{Loss function}

\begin{proof}[Proof of Proposition \ref{prop. density loss NLL}]
For any $P_*,P\in\PS_{ac}(\R^d)$, the assumption on global minimum implies that
\begin{align*}
\lim_{t\to 0^+} \frac{1}{t} \big(L((1-t)P_*+tP)-L(P_*)\big) = \int f'(P(\x))(P(\x)-P_*(\x)) dP_*(\x) & \geq 0
\end{align*}
Define the map $g(p) = p f'(p)$. Then,
\begin{equation*}
\E_{P} [g(P_*(\x))] \geq \E_{P_*} [g(P_*(\x))]
\end{equation*}
Assume for contradiction that $g$ is nonconstant on $(0,\infty)$, then there exist $a,b > 0$ such that $g(a) < g(b)$.
Let $A,B \subseteq \R^d$ be two disjoint hyperrectangles with volumes $1/2a$ and $1/2b$. Define $P$ as the uniform distribution over $A$, and $P_*$ as
\begin{equation*}
P_* = a \mathbf{1}_{A} + b \mathbf{1}_B
\end{equation*}
Then the above inequality is violated. It follows that $g$ is constant and $f$ has the form $c \log p + c'$. Finally, the assumption on global minimum implies that $c \leq 0$.
\end{proof}

\subsection{Universal approximation theorems}

\begin{proof}[Proof of Proposition \ref{prop. RFM W2 approximation}]
By Brennier's theorem (\cite{brenier1991polar} and \cite[Theorem 2.12]{villani2003topics}), since both $\prob$ and $P_*$ have finite second moments and $\prob$ is absolutely continuous, there exists an $L^2(\prob,\R^d)$ function $G_*$ such that $G_*\#\prob = P_*$.
By Lemma \ref{lemma. universal approximation L2}, there exists a sequence $\{G_n\}_{n=1}^{\infty} \subset \HS$ that converges to $G_*$ in $L^2(\prob)$ norm.
Hence,
\begin{equation*}
\lim_{n\to\infty}W_2(G\#\prob,G_n\#\prob) \leq \lim_{n\to\infty}\|G-G_n\|_{L^2(\prob)} = 0
\end{equation*}
\end{proof}

The preceding proof uses the following lemma, which is a slight extension of the classical universal approximation theorem \cite{hornik1990universal,hornik1991approximation} to cases with possibly unbounded base distributions $\prob$.
Such extension is needed since in practice $\prob$ is usually set to be the unit Gaussian.

\begin{lemma}
\label{lemma. universal approximation L2}
Given either Assumption \ref{assume. ReLU} or \ref{assume. sigmoid}, for any $\prob\in\PS(\R^d)$ and any $k\in\mathbb{N}$, the space $\HS(\R^d,\R^k)$ is dense in $L^2(\prob,\R^k)$.
\end{lemma}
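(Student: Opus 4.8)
The plan is to deduce the statement from the classical universal approximation theorem on compact sets, with the additional work going into controlling the contribution of the tails of the (possibly unbounded) base distribution $\prob$.

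First I would reduce to the scalar case $k=1$: given a target $f\in L^2(\prob,\R^k)$ with components $f^{(1)},\dots,f^{(k)}$, if each $f^{(j)}$ is approximated in $L^2(\prob)$ by some $f_{\tilde\a_j}\in\HS(\R^d,\R)$, then setting $\a(\w,b)=\sum_j\tilde\a_j(\w,b)\,\e_j$ produces $f_\a\in\HS(\R^d,\R^k)$ approximating $f$, so it suffices to treat $k=1$. Second, since $\prob$ is a finite Borel measure on $\R^d$, the compactly supported continuous functions $C_c(\R^d)$ are dense in $L^2(\prob)$, and because $\prob$ is a probability measure we have $\|h\|_{L^2(\prob)}\le\|h\|_{C_0(\R^d)}$ for every bounded $h$. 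Hence it is enough to fix $g\in C_c(\R^d)$, say with support in a ball $\overline{B_{R_0}}$, and $\ep>0$, and to produce $f_\a\in\HS(\R^d,\R)$ whose $L^2(\prob)$ distance to $g$ is below $\ep$.

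For a radius $R\ge R_0$, the classical universal approximation theorem (\cite{hornik1991approximation}, together with its random-feature version \cite{sun2018RFM}, both valid under Assumption \ref{assume. ReLU} or \ref{assume. sigmoid} exactly as quoted in Section \ref{sec. function representation}) yields $\a$ with $\sup_{\overline{B_R}}|f_\a-g|<\ep/2$; since $g\equiv 0$ off $\overline{B_{R_0}}\subseteq\overline{B_R}$ this bounds the error over $\overline{B_R}$ by $\ep^2/4$, and it remains to control $\int_{\R^d\setminus\overline{B_R}}|f_\a|^2\,d\prob$. Here I would invoke the structural bounds on random feature functions: under Assumption \ref{assume. sigmoid}, boundedness of $\sigma$ gives $|f_\a(\x)|\le\|\a\|_{L^2(\rho)}$ for all $\x$; under Assumption \ref{assume. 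ReLU}, the support of $\rho$ on the $\ell^1$ sphere $\{\|\w\|_1+|b|=1\}$ gives the linear-growth bound $|f_\a(\x)|\le\|\a\|_{L^2(\rho)}(\|\x\|_\infty+1)$. These make the tail integral small \emph{provided} the approximant is itself globally well-behaved — bounded (sigmoid) or sub-linear and decaying (ReLU) — with constants not depending on $R$. I would secure this by replacing an arbitrary output of the compact-set theorem with a localized construction: a finite combination of ridge functions $\sigma(\w\cdot(\x-\x_i)+b_i)$, averaged over the direction of $\w$, yields smooth approximately-radial bumps that vanish at infinity and lie in $\HS$ (the positive continuous density of $\rho$ in Assumption \ref{assume. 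ReLU}/\ref{assume. sigmoid} is what lets the direction-average be realized as an admissible $\a\in L^2(\rho)$). Finite linear combinations of translates and dilates of such bumps approximate $g$ uniformly on all of $\R^d$ by a standard covering argument using the uniform continuity of $g$; equivalently one proves $\HS\cap C_0(\R^d)$ is dense in $(C_0(\R^d),\|\cdot\|_\infty)$, whereupon the $L^2(\prob)$ claim follows from $\|\cdot\|_{L^2(\prob)}\le\|\cdot\|_{C_0}$.

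The main obstacle is exactly this tail control. Uniform approximation on a large ball says nothing about an unbounded $\prob$ outside that ball, and generic random-feature functions do not vanish at infinity — for sigmoid they converge to direction-dependent constants and for ReLU they grow linearly — so the compact-set theorem cannot be used as a black box. The load-bearing step is therefore showing that $\HS$ still contains good approximants that are globally controlled (ideally lying in $C_0$), which is what the radial-bump construction supplies; everything else — the reduction to $k=1$, the reduction to $C_c(\R^d)$, and the classical theorem on compact sets — is routine.
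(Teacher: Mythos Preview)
Your proposal is correct and pinpoints the right obstacle (tail control for unbounded $\prob$), but the route differs from the paper's. Both proofs reduce to $k=1$ and then construct auxiliary decaying functions inside $\HS$: the paper builds a one-dimensional $L^1(\R)$ ``bump activation'' $\sigma_*$ (a mollified second difference of ReLU, respectively first difference of sigmoid) and observes that $\HS^c_{\sigma_*}\subseteq\HS_\sigma$; you instead build $d$-dimensional approximately-radial bumps by averaging ridge functions over directions. The real divergence is in how the approximation is then executed. The paper invokes the Fourier integral representation $f(\x)=\int\sigma_*(\w\cdot\x+b)\,\mathrm{Re}[\hat f(\w)e^{-2\pi i b}]\,d\w\,db$ for $f\in C_c^\infty$, truncates the parameter domain to a cube $[-R,R]^{d+1}$ to obtain $f_R\in\HS$, and sends $R\to\infty$ directly in $L^2(\prob)$ by dominated convergence, the dominating integrand being $|\sigma_*(\w\cdot\x+b)|\,|\hat f(\w)|$. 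You instead aim for the stronger statement that $\HS\cap C_0(\R^d)$ is dense in $(C_0,\|\cdot\|_\infty)$ via a covering argument with translated and dilated bumps, and then use $\|\cdot\|_{L^2(\prob)}\le\|\cdot\|_{C_0}$. Your path is more elementary in that it avoids Fourier analysis, but the step you flag as load-bearing---that translates and dilates of your bump remain in $\HS$ with controllable norm, and that finite sums of them approximate $g$ uniformly on all of $\R^d$---is only sketched; the paper's explicit truncated integral makes both the membership in $\HS$ and the limiting argument transparent in one stroke.
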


\begin{proof}
It suffices to consider the case with output dimension $k=1$.
Denote $\HS$ by $\HS_{\sigma}$ to emphasize the choice of the activation $\sigma$.

Given Assumption \ref{assume. ReLU}, the activation $\sigma$ is ReLU.
Define
\begin{equation*}
\sigma_*(x) = \int \sigma(x+1+\ep)-2\sigma(x+\ep)+\sigma(x-1+\ep) dh(\ep)
\end{equation*}
where $h$ is a continuous distribution supported in $[-0.1, 0.1]$.
Since ReLU is homogeneous, $\sigma_*$ can be expressed by (\ref{eq. RFM}) with some bounded parameter function $a$, and thus $\sigma_* \in \HS_{\sigma}(\R,\R)$.
Similarly, given Assumption \ref{assume. sigmoid}, the activation $\sigma$ is sigmoid.
Define
\begin{equation*}
\sigma_*(x) = \int \sigma(x+1+\ep)-\sigma(x-1+\ep) dh(\ep)
\end{equation*}
Since the parameter distribution $\rho$ is bounded below by some positive constant over the ball $B_3$, the function $\sigma_*$ can be expressed by (\ref{eq. RFM}) with bounded parameter function $a$, and thus $\sigma_* \in \HS_{\sigma}(\R,\R)$.

Hence, we always have $\sigma_* \in \HS_{\sigma}(\R,\R)$ and it is integrable ($\|\sigma_*\|_{L^1(\R)} < \infty$).
Define the subspace $\HS^c_{\sigma_*} \subseteq \HS_{\sigma_*}$ of functions $f_a$ whose parameter functions $a$ are compactly-supported.
Since $\HS^c_{\sigma_*} \subseteq \HS_{\sigma}$, it suffices to show that $\HS^c_{\sigma_*}$ is dense in $L^2(\prob)$.
Without loss of generality, we denote $\sigma_*$ by $\sigma$.
Since $\sigma$ is $L^1$, its Fourier transform $\hat{\sigma}$ is well-defined.
Since $\sigma$ is not constant zero, there exists a constant $c \neq 0$ such that $\hat{\sigma}(c) \neq 0$. Perform scaling if necessary, assume that $\hat{\sigma}(1)=1$.

It suffices to approximate the subspace $C_c^{\infty}(\R^d)$, which is dense in $L^2(\prob)$.
Fix any $f \in C_c^{\infty}$.
Its Fourier transform $\hat{f}$ is integrable.
Then, Step 1 of the proof of \cite[Theorem 3.1]{hornik1990universal} implies that,
\begin{equation*}
f(\x) = \int_{\R} \int_{\R^d} \sigma(\w\cdot\x+b) m(\w,b) ~d\w db, \quad m(\w,b) = \text{Re}\big[\hat{f}(\w) e^{-2\pi i b}\big]
\end{equation*}
For any $R>0$, define the signed distribution $m_R$ by
\begin{equation*}
m_R(\w,b) = \text{Re}\big[\hat{f}(\w) e^{-2\pi i b}\big] \mathbf{1}_{[-R,R]^{d+1}}(\w,b)
\end{equation*}
Define the function $f_R$ by
\begin{align*}
f_R(\x) &= \int \sigma(\w\cdot\x+b) dm_R(\w,b)\\
&= \int a_R(\w,b) \sigma(\w\cdot\x+b) d\rho(\w,b)
\end{align*}
If Assumption \ref{assume. sigmoid} holds, then define the parameter function $a_R$ by the compactly supported function
\begin{equation*}
a_R(\w,b) = \frac{m_R(\w,b)}{\rho(\w,b)}
\end{equation*}
Then,
\begin{align*}
\|f_R\|_{\HS} &\leq \|a_R\|_{L^2(\rho)} \leq \frac{\|f\|_{L^1(\R^d)}}{\sqrt{\min_{\w,b\in [-R,R]^{d+1}}\rho(\w,b)}} < \infty
\end{align*}
If Assumption \ref{assume. ReLU} holds, then define $a_R$ by the following function over the $l^1$ sphere $\{\|\w\|_1+|b| = 1\}$
\begin{equation*}
a_R(\w,b) = \frac{\int_0^{\infty} m_R(\lambda\w,\lambda b) d\lambda}{\rho(\w,b)}
\end{equation*}
Then,
\begin{align*}
\|f_R\|_{\HS} &\leq \|a_R\|_{L^2(\rho)} \leq \frac{\|f\|_{L^1(\R^d)}}{\sqrt{\min_{\|\w\|+|b|=1}\rho(\w,b)}} < \infty
\end{align*}
Thus, we always have $f_R \in \HS^c_{\sigma}(\R^d,\R)$.

The approximation error is bounded by
\begin{align*}
\|f-f_R\|_{L^2(\prob)}^2 &\leq \iint \iint \int h_{R}(\x,\w,b,\w',b') dP(\x) d\w d\w'db db'\\
h_{R}(\x,\w,b,\w',b') &= |\sigma(\w\cdot\x+b)| |\hat{f}(\w)| \mathbf{1}_{\R^d-[-R,R]^{d+1}}(\w,b)\\
& \cdot |\sigma(\w'\cdot\x+b')| |\hat{f}(\w')| \mathbf{1}_{\R^d-[-R,R]^{d+1}}(\w',b')
\end{align*}
Note that $0 \leq h_R \leq h_0$, and $h_R\to 0$ pointwise, and $h_0$ is integrable:
\begin{equation*}
\iint \iint \int h_0(\x,\w,b,\w',b') dP(\x) d\w d\w'db db' \leq \|\sigma\|_{L^1(\R)}^2 \|\hat{f}\|_{L^1(\R^d)}^2 < \infty
\end{equation*}
Hence, the dominated convergence theorem implies that $\lim_{R\to\infty} \|f-f_R\|_{L^2(\prob)} \to 0$, which completes the proof.
\end{proof}

\begin{proof}[Proof of Proposition \ref{prop. universal approximation potential}]
If Assumption \ref{assume. ReLU} holds, then \cite{sun2018RFM} implies that $\HS$ is dense in $C(K)$ with respect to the supremum norm.
Else, Assumption \ref{assume. sigmoid} holds, then \cite{hornik1990universal} implies that $\HS$ is dense in $C(K)$.
Thus, we can apply Proposition 2.1 of \cite{yang2022potential} to conclude the proof.
\end{proof}

\begin{proof}[Proof of Proposition \ref{prop. flow approximation}]
It suffices to approximate the compactly-supported distributions, which are dense with respect to the $W_2$ metric.
Fix any compactly-supported distribution $P$. Choose $R > 0$ such that the support is contained in $B_R$.
Assume that the base distribution $\prob$ has full support over $\R^d$. The case without full support will be discussed in the end.

By Brennier's theorem \cite[Theorem 2.12]{villani2003topics}, there exists a convex function $\psi$ over $\R^d$ such that $\nabla \psi \# \prob = P$.
For any $\delta > 0$, we can define the mollified function $\psi_{\delta}$
\begin{equation*}
\psi_{\delta}(\x) = h_{\delta} * \psi = \int h\Big(\frac{\mathbf{y}}{\delta}\Big) \psi(\x-\mathbf{y}) d\mathbf{y} = \int h\Big(\frac{\x-\mathbf{y}}{\delta}\Big) \psi(\mathbf{y}) d\mathbf{y}
\end{equation*}
where $h$ is a mollifier (i.e. $h$ is $C^{\infty}$, non-negative, supported in the unit ball, and $\int h = 1$).
Then, $\psi_{\delta}$ is $C^{\infty}$ and convex, and
\begin{equation*}
\lim_{\delta\to 0} W_2(P, \nabla\psi_{\delta}\#\prob) \leq \lim_{\delta\to 0} \|\nabla\psi-h_{\delta}*\nabla\psi\|_{L^2(\prob)} = 0
\end{equation*}
Thus, without loss of generality, we can assume that $\psi$ is $C^{\infty}$.

Define the time-dependent transport map for $\tau \in [0, 1]$
\begin{equation*}
T_{\tau}(\x) = (1-\tau) \x + \tau \nabla\psi(\x) = \nabla \Big[(1-\tau)\frac{\|\x\|^2}{2} + \tau\psi\Big]
\end{equation*}
which is $C^{\infty}$ in $\x$ and $\tau$,
and define the distributions $P_{\tau} = T_{\tau}\#\prob$, which are known as McCann interpolation.
Then, define the vector field
\begin{equation*}
V_{\tau} = \nabla \psi \circ T_{\tau}^{-1}
\end{equation*}
The Jacobian of $T_{\tau}$ is positive definite for $\tau \in [0,1)$
\begin{equation*}
\nabla_{\x} T_{\tau} = \text{Hess} \Big[(1-\tau)\frac{\|\x\|^2}{2} + \tau\psi\Big] \geq (1-\tau)I
\end{equation*}
so the inverse function theorem implies that the inverse $T^{-1}_{\tau}$ exists and is $C^{\infty}$ over $(\x,\tau) \in \R^d\times[0,1)$, with
\begin{equation*}
\nabla_{\x}T^{-1}_{\tau} \leq \frac{1}{1-\tau}I, \quad \partial_{\tau} T^{-1}_{\tau}(\x) = -\nabla_{\x}T^{-1}(\x) \cdot \partial_{\tau} T_{\tau}(T^{-1}_{\tau}(\x))
\end{equation*}
Since $\lim_{\delta\to 0^+} W_2(P_{1-\delta}, P) = 0$, we can replace the approximation target $P$ by the sequence $\{P_{1-1/n}\}$, restrict $\tau$ to the interval $[0, 1-1/n]$, and thus assume without loss of generality that $\sup \nabla_{\x}T^{-1}_{\tau} < \infty$.
It follows that $T_{\tau}^{-1}$ and $V_{\tau}$ are $C^{\infty}$ over $\R^d \times [0, 1]$.
By Picard-Lindel\"{o}f theorem, the ODE $\dot{\x}_{\tau} = V_{\tau}(\x_{\tau})$ has unique solution locally, and it is straightforward to check that $\x_t = T_t(\x_0)$ is exactly the solution.

For any $r > R+1$ and any $\ep \in (0, 1)$, the universal approximation theorem \cite{hornik1991approximation} implies that there exists a random feature function $V_{r,\ep} \in \HS(\R^{d+1},\R^d)$ such that
\begin{equation}
\label{eq. velocity C0 error}
\| V-V_{r,\ep} \|_{C^0(B_r\times[0,1])} < \ep
\end{equation}
Denote its flow map (\ref{eq. flow generator}) by $G_{\tau}$.
For any $\x_0 \in B_r$, let $\x_{\tau}, \y_{\tau}$ denote the solutions to the ODEs
\begin{equation*}
\dot{\x}_{\tau} = V(\x_{\tau},\tau), \quad \dot{\y}_{\tau} = V_{r,\ep}(\y_{\tau},\tau), \quad \x_0=\y_0
\end{equation*}
Recall that $V_{\tau}(\x_{\tau})$ is simply the vector $\x_1-\x_0$, where $\x_1 \in \sprt P \subseteq B_R$. Then, for any $\x \in B_r - B_{R+1}$ and $\tau \in [0,1]$,
\begin{equation*}
\frac{\x}{\|\x\|} \cdot V(\x,\tau) \leq -1, \quad \frac{\x}{\|\x\|} \cdot V_{r,\ep}(\x,\tau) \leq -1+\ep
\end{equation*}
and this remains true despite the surgeries we performed on $V$ (the mollification of $\psi$ and the restriction of $t$ to $[0, 1-1/n]$).
It follows that the solutions $\x_{\tau}, \y_{\tau}$ are contained in $B_r$.
By Theorem 2.8 of \cite{teschl2012ODE},
\begin{equation*}
\|\x_1-\y_1\| \leq \| V-V_{r,\ep} \|_{C^0(B_r\times[0,1])} \frac{e^{\|\nabla V\|_{C^0(B_r\times[0,1])}}-1}{\|\nabla V\|_{C^0(B_r\times[0,1])}} < \ep \exp \|\nabla V\|_{C^0(B_r\times[0,1])}
\end{equation*}
Thus
\begin{equation*}
W_2\big(T_1\#\prob|_{B_r}, G_1\#\prob|_{B_r}\big) < \ep \exp \|\nabla V\|_{C^0(B_r\times[0,1])}
\end{equation*}

Meanwhile, for any $m>0$, define the random feature function $g_m \in \HS$
\begin{equation*}
g_m(\x,\tau) = m \int \mathbf{n} ~\sigma\big(m(\mathbf{n}\cdot\x-(r+1))\big) d(h_{1/m}*U\mathbb{S}_{d-1})(\mathbf{n})
\end{equation*}
where $U\mathbb{S}_{d-1}$ is the uniform distribution over the unit sphere and $h_{1/m}$ is a mollifier.
Since $\sigma$ is sigmoid, as $m\to\infty$, we have $\|g_m(\x)\| \to 0$ uniformly over $B_r$ while $\|g_m(\x)\| \to \infty$ uniformly over $\R^d-B_{r+2}$.
Replace $V_{r,\ep}$ by $V_{r,\ep} - g_m$ with $m$ sufficiently large enough such that (\ref{eq. velocity C0 error}) continues to hold, while for all $\x \in \R^d-B_{r+2}$ and $t\in[0,1]$
\begin{equation*}
\frac{\x}{\|\x\|}V_{r,\ep}(\x,t) < 0
\end{equation*}
It follows that $\|G_1(\x_0)\| \leq \|\x_0\|$ for all $\x_0 \in \R^d-B_{r+2}$.
Thus
\begin{equation*}
W_2\big(T_1\#\prob|_{\R^d-B_r}, G_1\#\prob|_{\R^d-B_r}\big) < \int_{\R^d-B_{r+2}} (\|\x\|+r+2)^2 d\prob(\x)
\end{equation*}

Hence,
\begin{align*}
&\quad \lim_{r\to\infty} \lim_{\ep\to 0} W_2(P, G_1\#\prob)\\
&\leq \lim_{r\to\infty} \lim_{\ep\to 0} W_2\big(T_1\#\prob|_{B_r}, G_1\#\prob|_{B_r}\big) + W_2\big(T_1\#\prob|_{\R^d-B_r}, G_1\#\prob|_{\R^d-B_r}\big)\\
&\leq \lim_{r\to\infty} \lim_{\ep\to 0} \ep\exp\|\nabla V\|_{C^0(B_r\times[0,1])} + \int_{\R^d-B_{r+2}} (\|\x\|+r+2)^2 d\prob(\x)\\
&\leq \lim_{r\to\infty} \int_{\R^d-B_{r+2}} (\|\x\|+r+2)^2 d\prob(\x)\\
&\leq 0
\end{align*}
We conclude that $P$ is a limit point of $\mathcal{G}\#\prob$.

Finally, consider the general case when the support of the base distribution $\prob$ is not necessarily $\R^d$.
For any $\ep \in (0, 1)$, define $\prob_{\ep} = (1-\ep)\prob + \ep\N$, where $\N$ is the unit Gaussian distribution.
Choose a velocity field $V_{\ep} \in \HS(\R^{d+1},\R^d)$ such that its flow $G_{\tau}$ satisfies $W_2(P, G_1\#\prob_{\ep}) < \ep$.
Then
\begin{align*}
W_2(P, G_1\#\prob) &\leq W_2(P, G_1\#\prob_{\ep}) + W_2(G_1\#\prob_{\ep}, G_1\#\prob)\\
&< \ep + W_2(G_1\#\ep\N, G_1\#\ep\prob)\\
&\leq \ep + W_2(\ep G_1\#\N, \ep\delta_{\mathbf{0}}) + W_2(\ep\delta_{\textbf{0}}, \ep G_1\#\prob)\\
&\leq \ep + \ep \Big(\sqrt{\int\|\x\|^2 d\N(\x)} + \sqrt{\int\|\x\|^2 d\prob(\x)}\Big)
\end{align*}
Taking $\ep\to 0$ completes the proof.
\end{proof}

\subsection{Generalization error}

\begin{proof}[Proof of Proposition \ref{prop. bias potential Tikhonov}]

This proof is a slight modification of the proof of Proposition \ref{prop. bias potential Ivanov} (Proposition 3.9 of \cite{yang2022potential}).
By equation (17) of \cite{yang2022potential}, with probability $1-\delta$ over the sampling of $P_*^{(n)}$,
\begin{equation}
\label{sampling gap RKHS bound}
|L(a)-L^{(n)}(a)| \leq \frac{4\sqrt{2\log 2d} + \sqrt{2\log (2/\delta)}}{\sqrt{n}} \|a\|_{L^2(\rho)}
\end{equation}
Since the regularized loss is strongly convex in $a$, the minimizer $a_{\lambda}^{(n)}$ exists and is unique.
Then
\begin{align*}
L(a^{(n)}_{\lambda}) &\leq L^{(n)}(a^{(n)}_{\lambda}) + \frac{4\sqrt{2\log 2d} + \sqrt{2\log (2/\delta)}}{\sqrt{n}} \|a^{(n)}_{\lambda}\|_{L^2(\rho)}\\
&\leq L^{(n)}(a^{(n)}_{\lambda}) + \frac{\lambda}{\sqrt{n}} \|a^{(n)}_{\lambda}\|_{L^2(\rho)}\\
&\leq L^{(n)}(a_*) + \frac{\lambda}{\sqrt{n}} \|a_*\|_{L^2(\rho)}\\
&\leq L(a_*) + \Big(\lambda + \frac{4\sqrt{2\log 2d} + \sqrt{2\log (2/\delta)}}{\sqrt{n}}\Big) \|a_*\|_{L^2(\rho)}
\end{align*}
where the first and last inequalities follow from (\ref{sampling gap RKHS bound}) and the third inequality follows from the fact that $a_{\lambda}^{(n)}$ is the global minimizer.

Hence,
\begin{equation*}
\KL(P_*\|P^{(n)}_{\lambda}) = L(a^{(n)}_{\lambda}) - L(a_*) \leq \frac{2\lambda \|a_*\|_{L^2(\rho)}}{\sqrt{n}}
\end{equation*}
\end{proof}

\subsection{Training and loss landscape}

\begin{proof}[Proof of Proposition \ref{prop. compare critical}]
Since $G_{\theta}$ is Lipschitz for any $\theta\in\Theta$, the set of generated distributions $\PS_{\Theta} \subseteq \PS_2(\R^d)$, so the loss $L(\theta)=L(G_{\theta}\#\prob)$ is well-defined.
Denote the evaluation of any linear operator $l$ over the Hilbert space $\Theta$ by $\lb l, \theta\rb$.
By assumption, for any $\theta_0, \theta_1 \in \Theta$, the velocity field
\begin{equation*}
\big\lb \nabla_{\theta}G_{\theta_0}, \theta_1 \big\rb \in L^2(\prob,\R^k)
\end{equation*}
Thus, we can define a linear operator $V_{\theta}: \Theta\to L^2(G_{\theta}\#\prob,\R^d)$ by
\begin{align*}
\int f(\x) \cdot \big\lb V_{\theta_0}, \theta_1 \big\rb(\x)  ~d(G_{\theta}\#\prob)(\x) := \int f(G(\x)) \cdot \big\lb \nabla_{\theta}G_{\theta_0}, \theta_1 \big\rb(\x) ~d\prob(\x)
\end{align*}
where $f \in L^2(G_{\theta_0}\#\prob,\R^d)$ is any test function.
This operator is continuous since $\|V_{\theta}\|_{op} \leq \|\nabla_{\theta}G_{\theta}\|_{op} < \infty$.

For any perturbation $h \in \Theta$,
\begin{align*}
\frac{d}{d\ep} L(\theta+\ep h) \big|_{\ep=0} &= \lim_{\ep\to 0} \frac{1}{\ep} \big( L(G_{\theta+\ep h}\#\prob) - L(G_{\theta}\#\prob) \big)\\
&= \lim_{\ep\to 0} \frac{1}{\ep} \int \frac{\delta L}{\delta P}\big|_{G_{\theta+\ep h}\#\prob} d(G_{\theta+\ep h}\#\prob-G_{\theta}\#\prob)\\
&= \lim_{\ep\to 0} \frac{1}{\ep} \int \frac{\delta L}{\delta P}\big(G_{\theta+\ep h}(\x)\big) - \frac{\delta L}{\delta P}\big(G_{\theta}(\x)\big) d\prob(\x)\\
&= \lim_{\ep\to 0} \frac{1}{\ep} \int \nabla \frac{\delta L}{\delta P}\big(G_{\theta}(\x)\big) \cdot \big(G_{\theta+\ep h}(\x) - G_{\theta}(\x)\big) d\prob(\x)\\
&= \int \nabla \frac{\delta L}{\delta P}\big(G_{\theta}(\x)\big) \cdot \big\lb \nabla_{\theta} G_{\theta}, h\big\rb (\x) ~d\prob(\x)\\
&= \int \nabla \frac{\delta L}{\delta P}(\x) \cdot \big\lb V_{\theta}, h\big\rb (\x) ~d(G_{\theta}\#\prob)(\x)
\end{align*}
Hence, the loss $L$ is differentiable in $\theta$ and the derivative is the following operator
\begin{equation*}
\lb \nabla_{\theta} L(\theta), \cdot \rb = \int \nabla \frac{\delta L}{\delta P}(\x) \cdot \big\lb V_{\theta}, \cdot \big\rb (\x) ~d(G_{\theta}\#\prob)(\x)
\end{equation*}
For any $\theta\in\Theta$ such that $G_{\theta}\#\prob \in CP$,
\begin{equation*}
\lb \nabla_{\theta} L(\theta), \cdot \rb = \int \mathbf{0} \cdot \big\lb V_{\theta}, \cdot \big\rb (\x) ~d(G_{\theta}\#\prob)(\x) = 0
\end{equation*}
Thus, $CP \cap \PS_{\Theta} \subseteq CP_{\Theta}$.

For the second claim, we first verify that the random feature functions $G_{\theta}$ satisfy the smoothness assumptions:
For any $\a\in L^2(\rho,\R^d)$ and $\x,\x'\in\R^k$,
\begin{align*}
\|G_{\a}(\x)-G_{\a}(\x')\| &\leq \int \|\a(\w,b)\| \big| \sigma(\w\cdot\x+b) - \sigma(\w\cdot\x'+b) \big| d\rho(\w,b)\\
&\leq \|\a\|_{L^2(\rho)} \|\sigma\|_{\Lip} \sqrt{\int \|\w\|^2 d\rho(\w,b)} \|\x-\x'\|\\
&\leq C\|\a\|_{L^2(\rho)} \|\x-\x'\|
\end{align*}
where $C=\sqrt{d}$ if Assumption \ref{assume. ReLU} holds and $C=1$ if Assumption \ref{assume. sigmoid} holds. Thus, $G_{\a}$ is Lipschitz.
Meanwhile, since the parametrization $\a\mapsto G_{\a}$ is linear
\begin{align*}
\lb \nabla_{\a} G_{\a}(\x), \a'\rb = G_{\a'}
\end{align*}
Then, for any $\a,\a'\in L^2(\rho,\R^d)$ and any $\x\in\R^k$
\begin{align*}
\|\lb \nabla_{\a} G_{\a}(\x), \a'\rb\| &= \|G_{\a'}(\x)\| \leq \|\a'\|_{L^2(\rho)} \Big(|\sigma(0)| + \sqrt{\int \|\w\|^2+|b|^2 d\rho(\w,b)} \sqrt{\|\x\|^2+1}\Big)\\
&\leq C \|\a'\|_{L^2(\rho)} \sqrt{\|\x\|^2+1}\\
\|\lb \nabla_{\a} G_{\a}(\cdot), \a'\rb\|_{L^2(\prob)} &= \|G_{\a'}\|_{L^2(\prob)} \leq C \Big(1+\sqrt{\int \|\x\|^2 d\prob(\x)}\Big) \|\a'\|_{L^2(\rho)}
\end{align*}
where $C=\sqrt{d+1}$ if Assumption \ref{assume. ReLU} holds and $C=1$ if Assumption \ref{assume. sigmoid} holds.
Thus, $G_{\a}(\x)$ is $C^1$ in $\a$ for any $\x$, and $\nabla_{\a} G_{\a}$ is a continuous operator $L^2(\rho,\R^d)\to L^2(\prob,\R^d)$.

Consider any $\a\in L^2(\rho,\R^d)$ such that $G_{\a}\#\prob \in CP_{\Theta}$.
Then, for any $\a' \in L^2(\rho,\R^d)$,
\begin{align*}
0 &= \int \nabla \frac{\delta L}{\delta P}\big(G_{\a}(\x)\big) \cdot \big\lb \nabla_{\a} G_{\a}, \a'\big\rb (\x) ~d\prob(\x)\\
&= \int \a'(\w,b) \cdot \int \nabla \frac{\delta L}{\delta P}\big(G_{\a}(\x)\big) \sigma(\w\cdot\x+b) ~d\prob(\x) d\rho(\w,b)
\end{align*}
If $\sigma$ is Lipschitz, then the integrand is continuous in $(\w,b)$, so for any $(\w,b) \in \sprt\rho$, we have 
\begin{equation*}
\int \nabla \frac{\delta L}{\delta P}\big(G_{\a}(\x)\big) \sigma(\w\cdot\x+b) ~d\prob(\x) = \mathbf{0}
\end{equation*}
Thus, if either Assumption \ref{assume. ReLU} or \ref{assume. sigmoid} holds, the equality holds for all $(\w,b)\in\R^{k+1}$.
It follows that, by universal approximation theorem \cite{hornik1990universal}, $\nabla \delta_P L\big(G_{\a}(\x)\big) = \mathbf{0}$ for $\prob$ almost all $\x$.
Or equivalently, $\delta_P L(\x) = \mathbf{0}$ for $G_{\a}\#\prob$ almost all $\x$.
Hence, the reverse inclusion $CP_{\Theta} \subseteq CP \cap \PS_{\Theta}$ holds.
\end{proof}

\begin{proof}[Proof of Proposition \ref{prop. W2 GD landscape}]
For any $G\in L^2(\prob,\R^d)$, let $\pi$ be the optimal transport plan between $G\#\prob$ and $P_*$.
Define the function
\begin{equation}
\label{eq. OT conditional mean}
m(x) = \int x' ~d\pi\big(x'\big|x\big)
\end{equation}
Then, $G$ is a stationary point if and only if $G = m\circ G$.
If $G\#\prob$ is absolutely continuous, then $\pi$ is concentrated on the graph of $\nabla \phi$ for some convex function $\phi$ by Brennier's theorem \cite{villani2003topics}.
Then, $m=\nabla \phi$, and $G\#\prob = (m\circ G)\#\prob = \nabla\phi\#(G\#\prob) = P_*$, so $G$ is a global minimum.

It follows that if $G$ is a stationary point but not a global minimum, then $P=G\#\prob$ must be singular.
Since the cumulant function of $P$ is non-decreasing, there are at most countably many jumps.
Thus, $P$ can be expressed as $P=P_{ac} + P_{sg}$ where $P_{ac}$ is a density function (the Lebesgue derivative of the continuous part of the cumulant) and $P_{sg}$ is a countable sum of point masses at the jumps $x_i$.
Since $P_{sg}$ is nonzero, there exists $x_* \in \sprt P$ such that $P(\{x_*\})>0$.
Let $S_+, S_-$ be a disjoint partition of $G^{-1}(\{x_*\})$ such that $\prob(S_+)=\prob(S_-) = P(x_*)/2$.
Since $P_*$ is absolutely continuous, Brennier's theorem implies that the transport plan $\pi(x_0,x_1)$ has the conditional distribution $\pi(\cdot|x_1)=\delta_{\nabla\psi(x_1)}$ for some convex potential $\psi$.
Since $\nabla\psi$ is non-decreasing, there exists an interval $[x_-,x_+]$ such that $\nabla\psi$ maps $[x_-,x_+]$ to $\{x_*\}$ and $P_*([x_-, x_+])=P(\{x_*\})$.
Choose $x_o \in (x_-,x_+)$ such that $P_*([x_-, x_o])=P_*((x_o, x_+])=P(\{x_*\})/2$.
Define the means
\begin{equation*}
m_- = \frac{2}{P(\{x_*\})}\int_{x_-}^{x_o} x~dP_*(x), \quad m_+ = \frac{2}{P(\{x_*\})}\int_{x_o}^{x_+} x~dP_*(x)
\end{equation*}
Then, $x_- < m_- < x_o < m_+ < x_+$ and $x_*=(m_-+m_+)/2$.
For any $\ep$, define the function
\begin{equation*}
h = \ep ((x_+-m_+)\mathbf{1}_{S_+} + (x_--m_-) \mathbf{1}_{S_-})
\end{equation*}
and the map
\begin{equation*}
\forall x \in \sprt P_*, \quad F(x) = \begin{cases}(1-\ep)x_*+\ep m_- \text{ if } x\in [x_-,x_o]\\
(1-\ep)x_*+\ep m_+ \text{ if } x\in (x_o,x_+]\\
\nabla\psi(x) \text{ else}\end{cases}
\end{equation*}
Then, for any $\ep \in (0,1)$,
\begin{align*}
&\quad L(G)-L(G+h)\\
&\geq \frac{1}{2} \int |x-\nabla\psi(x)|^2 - |x-F(x)|^2 dP_*(x)\\
&= \frac{1}{2} \int_{x_-}^{x_o} |x-x_*|^2 - |x-(1-\ep)x_*-\ep m_-|^2 dP_*(x)\\
&\quad + \frac{1}{2} \int_{x_o}^{x_+} |x-x_*|^2 - |x-(1-\ep)x_*-\ep m_+|^2 dP_*(x)\\
&= \frac{\ep^2}{2} \int_{x_-}^{x_o} |x-x_*|^2 - |x-m_-|^2 dP_*(x) + \frac{\ep^2}{2} \int_{x_o}^{x_+} |x-x_*|^2 - |x-m_+|^2 dP_*(x)\\
&>0
\end{align*}
Thus, $G$ is a generalized saddle point.

For any initialization $G_0 \in L^2(\prob)$, let $G_t$ be a trajectory defined by the dynamics (\ref{eq. generalized W2 GD}).
Let $m_t$ be the function (\ref{eq. OT conditional mean}) defined by the optimal transport plan $\pi_t$ between $P_t=G_t\#\prob$ and $P_*$.
Then, the dynamics (\ref{eq. generalized W2 GD}) can be written as
\begin{equation*}
\frac{d}{dt} G_t = m_t \circ G_t - G_t
\end{equation*}
By cyclic monotonicity \cite{villani2003topics}, the coupling $\pi_t$ must be monotone: for any $(x_0,x_1),(x_0',x_1') \in \sprt \pi_t$,
\begin{equation*}
(x_0-x_0')(x_1-x_1') \geq 0
\end{equation*}
Thus, the trajectory $G_t$ is order-preserving
\begin{equation*}
G_0(z) < G(z') \to G_t(z) < G_t(z')
\end{equation*}
It follows that the cumulant function
\begin{equation*}
F(z) := P_t((-\infty,G_t(z)]) = \int \mathbf{1}_{G_t(z')\leq G_t(z)} d\prob(z) = \int \mathbf{1}_{G_0(z')\leq G_0(z)} d\prob(z)
\end{equation*}
is constant in $t$.

Since $P_*$ is absolutely continuous, its cumulant function $F_*(x) = P_*((-\infty,x])$ is a continuous, non-decreasing function from $\R$ to $[0,1]$.
For any $p \in [0,1]$, define its inverse by
\begin{equation*}
F_*^{-1}(p) = \argmin_x \{F_*(x)\leq p\}
\end{equation*}
It follows that for any $z$, the support of $\pi_t(\cdot|G_t(z))$ must lie in the closed interval
\begin{align*}
I(z) &= \Big[F_*^{-1}\big(P_t((-\infty,G_t(z)))\big), ~F_*^{-1}\big(P_t((-\infty,G_t(z)])\big)\Big]\\
&= \Big[F_*^{-1}\big(\lim_{\ep\to 0^+}F(z-\ep)\big), ~F_*^{-1}(F(z))\Big]
\end{align*}
It follows that for any $z \in \sprt\prob$, the conditional distribution $\pi(\cdot|G(z))$ is exactly $P_*$ conditioned on the subset $I(z)$.
Hence, the function
\begin{equation*}
m_t \circ G_t(z) = \begin{cases}\frac{\int_{I(z)} xdP_*(x)}{P_*(I(z))} \text{ if } |I(z)|>0\\
F_*^{-1}(F(z)) \text{ else}\end{cases}
\end{equation*}
is constant in $t$.
It follows that the trajectory $G_t$ satisfies
\begin{equation*}
\frac{d}{dt} G_t = m_0\circ G_0 - G_t
\end{equation*}
and thus
\begin{equation*}
G_t = e^{-t} G_0 + (1-e^{-t}) m_0\circ G_0
\end{equation*}

It follows that $P_t$ converges to $m_0 \# P_0$.
If $P_0 \in \PS_{ac}$, then as discussed above, $m_0$ is exactly the optimal transport map from $P_0$ to $P_*$, and thus we have global convergence.
Else, there exists some $x_0$ such that $P_o(\{x_0\}) > 0$, so $(m_0\# P_0)(\{m_0(x_0)\}) \geq P_0(\{x_0\}) > 0$, and thus $m_0\# P_0$ is not absolutely continuous. Since $P_* \in \PS_{ac}$, we have $m_0\#P_0 \neq P_*$.
\end{proof}

\bibliography{main}
\bibliographystyle{acm}

\end{document}